\documentclass[acmsmall]{acmart}
\AtBeginDocument{%
  }

\newcommand{\hide}[1]{}
\usepackage{amsmath}
\usepackage{amsfonts}
\usepackage{verbatim}
\usepackage{bbm}
\usepackage{stmaryrd}
\usepackage{xcolor}
\usepackage{graphicx}
\usepackage{wrapfig}
\usepackage{subcaption}
\usepackage{amsthm}

\usepackage{amssymb}

\newtheorem{definition}{Definition}

\usepackage[capitalise]{cleveref}

\usepackage{listings}
\definecolor{codebg}{HTML}{FFFFFF}       % dark-ish background
\definecolor{codeframe}{HTML}{263238}
\definecolor{keywordcol}{HTML}{82AAFF}
\definecolor{functioncol}{HTML}{C3E88D}
\definecolor{stringcol}{HTML}{C792EA}
\definecolor{commentcol}{HTML}{546E7A}
\definecolor{operatorcol}{HTML}{FFCB6B}

\lstdefinelanguage{HaskellCustom}{
  morekeywords={
    module,where,import,as,qualified,let,in,if,then,else,case,of,do,
    data,type,newtype,deriving,class,instance,default,foreign,return
  },
  sensitive=true,
  morecomment=[l]{--},
  morecomment=[s]{\{-}{-\}},
  morestring=[b]",
  morestring=[b]'
}

\Crefname{theorem}{Thm.}{Thm.}
\Crefname{corollary}{Cor.}{Corollary}
\Crefname{proposition}{Prop.}{Propositions}
\Crefname{claim}{Claim}{Claims}
\Crefname{definition}{Def.}{Definitions}
\Crefname{fact}{Fact}{Facts}
\Crefname{conjecture}{Conj.}{Conjectures}
\Crefname{example}{Ex.}{Ex.}
\Crefname{remark}{Rem.}{Remarks}
\Crefname{convention}{Convention}{Conventions}
\Crefname{lemma}{Lem.}{Lemmas}
\Crefname{assumption}{Ass.~}{Ass.~}
\Crefname{section}{Sec.}{Sec.}
\Crefname{appendix}{App.}{App.}
\Crefname{figure}{Fig.}{Fig.}
\Crefname{algorithm}{Alg.}{Alg.}
\Crefname{lstlisting}{Listing}{Listings}
\Crefname{namedproof}{proof}{proofs}
\crefname{condsi}{Condition}{Conditions}
\Crefname{condsi}{Condition}{Conditions}

\newtheorem{theorem}{Theorem}
\newtheorem{proposition}{Proposition}
\newtheorem{lemma}{Lemma}

\usepackage{enumitem}

\newlist{conds}{enumerate}{1}
\setlist[conds]{label=(A\arabic*), ref=A\arabic*}

\DeclareMathOperator{\Supp}{\textsf{Supp}}
\newcommand*\diff{\mathrm{d}}

\setcopyright{cc}
\setcctype{by}
\acmDOI{10.1145/3828696}
\acmYear{2026}
\acmJournal{PACMPL}
\acmVolume{10}
\acmNumber{ICFP}
\acmArticle{298}
\acmMonth{8}
\acmSubmissionID{icfp26main-p82-p}
\received{2026-02-19}
\received[accepted]{2026-05-13}

\begin{document}

%%
%% The "title" command has an optional parameter,
%% allowing the author to define a "short title" to be used in page headers.
\title{LazyHMC: Hamiltonian Monte Carlo Simulation for Lazy, Infinite Dimensional Probabilistic Programs}

%%
%% The "author" command and its associated commands are used to define
%% the authors and their affiliations.
%% Of note is the shared affiliation of the first two authors, and the
%% "authornote" and "authornotemark" commands
%% used to denote shared contribution to the research.
\author{Maria-Nicoleta Cr\u{a}ciun}
\orcid{0009-0006-0108-1692}
\affiliation{%
  \institution{University of Oxford}
  %\department{Department of Computer Science}
  \city{Oxford}
  \country{UK}
}
\email{maria.craciun@cs.ox.ac.uk}

\author{C.-H. Luke Ong}
\orcid{0000-0001-7509-680X}
\affiliation{%
  \institution{Nanyang Technological University}
  %\department{College of Computing and Data Science}
  \city{Singapore}
  \country{Singapore}
}
\email{luke.ong@ntu.edu.sg}

\author{Tom Schrijvers}
\orcid{0000-0001-8771-5559}
\affiliation{%
  \institution{KU Leuven}
  %\department{Department of Computer Science}
  \city{Leuven}
  \country{Belgium}
}
\email{tom.schrijvers@kuleuven.be}

\author{Sam Staton}
\orcid{0000-0002-7149-3805}
\affiliation{%
  \institution{University of Oxford}
  %\department{Department of Computer Science}
  \city{Oxford}
  \country{UK}
}
\email{sam.staton@cs.ox.ac.uk}

%%
%% By default, the full list of authors will be used in the page
%% headers. Often, this list is too long, and will overlap
%% other information printed in the page headers. This command allows
%% the author to define a more concise list
%% of authors' names for this purpose.
%\renewcommand{\shortauthors}{Trovato et al.}

\begin{abstract}
Hamiltonian Monte Carlo (HMC) is a successful generic inference method in probabilistic programming, but in its ordinary formulation it needs gradients and finite-dimensional parameter spaces.
In Haskell, lazy evaluation lets probabilistic programs express stochastic processes and other non-parametric Bayesian models over implicit infinite-dimensional spaces.
This paper develops new formulations of gradient-based HMC for this infinite-dimensional setting, via lazy evaluation.
For automatic differentiation, we provide an analysis based on a new notion of ``piecewise analytic under cylindrical analytic partition'' (PACAP), to show that even if a program is infinite-dimensional and defined lazily, the gradient of the likelihood function is finitely supported.
For the Monte Carlo method itself, we develop several HMC variants and a No-U-Turn Sampler that operate over the infinite-dimensional parameter space but are still productive because of lazy evaluation.
Experiments cover Gaussian mixture clustering, random walks, and piecewise-constant regression with Poisson-process changepoints.
\end{abstract}

%%
%% The code below is generated by the tool at http://dl.acm.org/ccs.cfm.
%% Please copy and paste the code instead of the example below.
%%

\begin{CCSXML}
<ccs2012>
   <concept>
       <concept_id>10002950.10003648.10003670.10003677</concept_id>
       <concept_desc>Mathematics of computing~Markov-chain Monte Carlo methods</concept_desc>
       <concept_significance>500</concept_significance>
       </concept>
   <concept>
       <concept_id>10011007.10011006.10011008.10011009.10011012</concept_id>
       <concept_desc>Software and its engineering~Functional languages</concept_desc>
       <concept_significance>500</concept_significance>
       </concept>
   <concept>
       <concept_id>10003752.10010124.10010131.10010133</concept_id>
       <concept_desc>Theory of computation~Denotational semantics</concept_desc>
       <concept_significance>300</concept_significance>
       </concept>
   <concept>
       <concept_id>10003752.10010124.10010138.10010143</concept_id>
       <concept_desc>Theory of computation~Program analysis</concept_desc>
       <concept_significance>300</concept_significance>
       </concept>
   <concept>
       <concept_id>10002950.10003648.10003702</concept_id>
       <concept_desc>Mathematics of computing~Nonparametric statistics</concept_desc>
       <concept_significance>100</concept_significance>
       </concept>
   <concept>
       <concept_id>10002950.10003648.10003700</concept_id>
       <concept_desc>Mathematics of computing~Stochastic processes</concept_desc>
       <concept_significance>100</concept_significance>
       </concept>
 </ccs2012>
\end{CCSXML}

\ccsdesc[500]{Mathematics of computing~Markov-chain Monte Carlo methods}
\ccsdesc[500]{Software and its engineering~Functional languages}
\ccsdesc[300]{Theory of computation~Denotational semantics}
\ccsdesc[300]{Theory of computation~Program analysis}
\ccsdesc[100]{Mathematics of computing~Nonparametric statistics}
\ccsdesc[100]{Mathematics of computing~Stochastic processes}

% %%
% %% Keywords. The author(s) should pick words that accurately describe
% %% the work being presented. Separate the keywords with commas.
\keywords{probabilistic programming, Hamiltonian Monte Carlo, lazy evaluation, automatic differentiation, measure-theoretic semantics, non-parametric Bayesian inference, No-U-Turn Sampler}

%%
%% This command processes the author and affiliation and title
%% information and builds the first part of the formatted document.
\maketitle

%\tableofcontents

\section{Introduction}
Probabilistic programming (e.g.~\cite{DBLP:journals/corr/abs-1809-10756,Barthe_Katoen_Silva_2020}) is a method for Bayesian statistical modelling by writing programs.
Recall that Bayes' law specifies how to calculate the posterior probability, in terms of a prior probability and the likelihood of observations.
Probabilistic programming is often treated as a declarative programming method: the statistician declares a prior and likelihood by writing a high level program, and a generic inference method (such as a Monte Carlo method) is then used to provide samples from the posterior.

Infinite or unbounded dimensions in a parameter space is often called `non-parametric' statistics (for a broad overview, see e.g.~\cite{orbanz2010bayesian}). For example, in a clustering problem we would like to group data points into clusters; in the non-parametric setting we want to explore the number of clusters, and we do not want to fix in advance how many clusters there are. Similarly in many models based on a stochastic process, such as a random walk, the process on the face of it goes on forever, or with infinite resolution. Non-parametric statistics is often well suited to programming, because the programming notations can more clearly express the complex dynamics than informal statistical notations.

\paragraph{The problem: generic inference over non-parametric models.}
Although probabilistic programming provides an elegant declarative framework for specifying statistical models, generic inference over non-parametric models is notoriously difficult. Hamiltonian Monte Carlo (HMC) simulation is arguably the most successful generic inference method, and is widely considered responsible for the uptake of Bayesian statistics in practice. However, HMC fundamentally relies on gradients and finite-dimensional parameter spaces, and so does not directly apply to non-parametric models. We can identify three existing approaches to this challenge:
\begin{itemize}
\item \emph{Truncation by hand:} Approximate the infinite dimensional structure by a finite dimensional one. For example, an infinite-dimensional Gaussian process is replaced by a high dimensional multivariate Gaussian. We can then employ well understood generic inference methods for finite dimensions, including HMC. This approach is most widely used in practice. However, the approach is not compositional: the truncation bounds must be carefully adjusted across the whole model.
\item \emph{Dynamic dimensions:} As a program runs, keep track of which dimensions are actually needed at any point, adding new dimensions as necessary. This avoids approximation. But the model is less purely declarative, as it explicitly tracks the dimension count and changes to dimensions. Nonetheless, this is a common approach in the probabilistic programming community~\cite{NPHMCMakZO21,NPiMCMCMakZO22,RoyMansinghkaGoodmanTenenbaum2008,DBLP:journals/jmlr/WingateSG11,DBLP:conf/aistats/WoodMM14,ZhouDCC}.
\item \emph{Lazy use of infinite dimensions:} The statistical model and inference take place in the context of an infinite dimensional parameter space. Of course, no generic inference method can explicitly track infinitely many dimensions. However, lazily, at the last minute, we notice that only a finite subset of dimensions are actually needed for the calculation, and we use that structure to provide samples from the posterior.

  This lazy approach has been demonstrated~\cite{DashKPS23LazyPPL} for a simple Metropolis-Hastings simulation, and~\cite{DBLP:journals/pacmpl/BowersLTSM25} for discrete programs. However, these methods do not use gradient information, and so they do not benefit from the efficiency that makes HMC so effective in practice.
\end{itemize}
In summary, the existing approaches either sacrifice compositionality (truncation), declarativeness (dynamic dimensions), or efficiency (gradient-free lazy methods). What is missing is a way to bring the power of gradient-based HMC to the lazy, infinite-dimensional setting.

\paragraph{Our solution: Lazy Hamiltonian Monte Carlo.}
In this paper we demonstrate that HMC can be applied directly to the infinite dimensional parameter spaces where non-parametric statistical models naturally live, by evaluating lazily. The main technical contribution is to structure the Hamiltonian so that the acceptance ratio, ostensibly an infinite product over all dimensions, collapses to a finite product: the gradient has finite support, and the unvisited dimensions cancel.
Our lazy HMC method operates over programs in the LazyPPL library (\S\ref{sec:lazyppl-overview}), which has an implicit infinite dimensional parameter space and allows explicit types of infinite dimensional stochastic processes.

\paragraph{Lazy Hamiltonian Monte Carlo in a Nutshell}
At a high level, HMC simulation provides a Markov chain whose stationary distribution is the posterior distribution of the statistical model. It can be regarded as the combination of the following three ideas:
\begin{enumerate}
\item Gradient descent: the log-likelihood function $\mathbb{R}^n\to \mathbb{R}$ or unnormalized density is a function of the parameters; here the dimension of the parameter space is $n$. We can optimize this by gradient descent.
\item Momentum: gradient descent alone provides a sequence of samples but this does not converge to the posterior and may not explore all modes. To deal with this, we regard a particle following a gradient trajectory and endow it with momentum, moreover, this momentum changes randomly over time. At each step, this Hamiltonian dynamics is typically simulated by a leapfrog integrator.
\item Metropolis correction: The discretization error from the leapfrog integrator is corrected for by randomly either accepting or rejecting each proposed sample, according to the Metropolis-Hastings acceptance ratio.
\end{enumerate}
Extending each of these ideas to the infinite-dimensional lazy setting requires new contributions, which we summarize as follows:
\begin{enumerate}
\item \textbf{Infinite-dimensional automatic differentiation} (\S\ref{sec:ADtheorem}, \S\ref{sec:Haskell-Nagata}). In the lazy situation, the log-likelihood function now has an infinite dimensional domain, e.g.~$\mathbb{R}^{\mathbb N}\to \mathbb{R}$. Although we can lazily maintain an infinite dimensional stream of parameters, we cannot hope to make a gradient step in an infinite dimensional direction. We introduce a smoothness condition, PACAP, guaranteeing that the gradient is non-zero in only a finite subset of the dimensions, and we show that it holds for a core calculus of programs. We find these gradients using automatic differentiation, adapting a recent idea based on sparse-map Nagata numbers to this infinite-dimensional setting.
\item \textbf{Lazy No-U-Turn Sampler} (\S\ref{sec:LazyNUTS}). HMC performance is sensitive to the number of leapfrog steps, which is typically mitigated by switching to the No-U-Turn Sampler (NUTS). We develop lazy versions of this too.
\item \textbf{Lazy Metropolis-Hastings correction} (\S\ref{sec:LazyHMC}). The Metropolis-Hastings ratio naturally involves an infinite number of terms, one for each dimension. We carefully choose the momentum and structure the Hamiltonian so that all but a finite number of these cancel. Although this requires some care, there are various options, and we propose three different ways of doing this. In \S\ref{sec:stepReg} we give a detailed worked example, piecewise-constant regression, illustrating how the leapfrog integrator and dimension changes interact.
\item \textbf{Experimental evaluation} (\S\ref{sec:experiments}). We implement lazy HMC and lazy NUTS as an extension to the LazyPPL library and evaluate them on non-parametric models from the literature, including geometric distributions, random walks, and clustering. There is no standard benchmark suite for non-parametric statistics, but our experiments provide empirical evidence that the methods work as intended.
\end{enumerate}
\section{Overview of Lazy Probabilistic Programming}
\label{sec:lazyppl-overview}
In this section we give an overview of lazy probabilistic programming, illustrating through examples how laziness enables non-parametric statistical models. We then identify where the existing gradient-free inference methods for this setting fall short, and preview how lazy HMC addresses these limitations.

We use the syntax of the LazyPPL library~\cite{DashKPS23LazyPPL} for Haskell. We do not assume very
much familiarity with Haskell, but the key idea is that within a \lstinline|do| block for the probability monad \lstinline|Prob|, we have, in effect, a domain specific language for probabilistic programming. Haskell's lists are not necessarily finite, and so random lists amount to random processes. Thus the key starting point of non-parametric statistics, that the dimension of the parameter space is unknown or unbounded, is mapped to the programming concept of laziness.

We consider three kinds of example: sequences and geometric distributions (\S\ref{sec:geometric}); random walks (\S\ref{sec:randomwalk}); and non-parametric clustering (\S\ref{sec:clustering}).

\subsection{Examples: IID Sequences and Geometric Distributions}\label{sec:geometric}
We assume a built in primitive \lstinline|uniform :: Prob RealNum|
that produces a random real number uniformly between 0 and~$1$.
We can then write a recursive program that produces an infinite stream of uniform random samples.
These will be independent and identically distributed (IID). 
\begin{lstlisting}
iiduniform :: Prob [RealNum]
iiduniform = do { x <- uniform ; xs <- iiduniform ; return (x : xs) }
\end{lstlisting}
(For now, the reader can assume \lstinline|RealNum=Double|, but in Section~\ref{sec:Haskell-Nagata} we will generalize this for automatic differentiation.)
In Haskell, lists can be infinite, and this recursive program describes a random infinite list.
(Haskell uses lazy evaluation: values are computed only when needed, so only the elements actually accessed will be generated.)
This amounts to what would be written in statistics as
``$
  x_n\sim U(0,1)\ \mathrm{iid}\ (\text{for all } n\in\mathbb{N})
$''.
More generally, we can define a function
\begin{lstlisting}
iid p :: Prob a -> Prob [a]
iid p = do { x <- p ; xs <- iid p ; return (x : xs) }
\end{lstlisting}
that transforms any probability distribution into an IID sequence, so that \lstinline|iid uniform = iiduniform|. 

This `IID sequence' is idiomatic in statistics. For a first example, for any $p\in(0,1)$, we can consider the Bernoulli distribution
\begin{lstlisting}
bernoulli :: RealNum -> Prob Bool
bernoulli p = do { x <- uniform ; return (x < p) }
\end{lstlisting}
which gives true or false with probability $p$. A sequence of IID Bernoulli trials can then be given by \lstinline|iid (bernoulli p)|.

The \emph{geometric distribution} with mean $\frac 1p$ is typically defined as the time of the first success in the random sequence. As a program: 
\begin{lstlisting}[caption={Geometric distribution}, label={lst:geometric}]
geometric :: RealNum -> Prob Int
geometric p = do { xs <- iid (bernoulli p) ; let (Just n) = findIndex id xs ;
                   return (1+n) }
\end{lstlisting}

As an aside we recall that, as is well known, the geometric distribution can be written without an explicit infinite sequence, as follows:
\begin{lstlisting}
geometricStrict p = do x <- bernoulli p
                       if x then return 1
                       else do { n <- geometricStrict p ; return (1 + n) }
\end{lstlisting}
The transformation from \lstinline|geometric| to \lstinline|geometricStrict| in effect manually propagates the evaluation of the lazy evaluation of the infinite stream \lstinline|xs|. The number of random choices remains unknown and unbounded.  

\subsection{Examples: Random Walks}\label{sec:randomwalk}
We next consider random walks, which are random sequences of positions.
We will work with random walks where the steps are uniformly distributed, for which we first consider a more general uniform distribution: 
\begin{lstlisting}
uniformRange :: RealNum -> RealNum -> Prob RealNum
uniformRange a b = do { x <- uniform ; return (a + (b-a)*x) }
\end{lstlisting}
In statistics, this would be written as $U(a,b)$. We can then define a random walk by first sampling all the steps IID, and then combining them, together with a starting position in $[0,3]$. 
\begin{lstlisting}
walk :: Prob [RealNum]
walk = do { start <- uniformRange 0 3 ; steps <- iid (uniformRange (-1) 1) ;
            return (scanl (+) start steps) }
\end{lstlisting}
Here, recall that \lstinline|scanl (+)| produces the cumulative sums, i.e., the list of intermediate results. 
An equivalent way of describing this distribution on sequences adds the steps at each point, as is well known:
\begin{lstlisting}
walkFrom :: RealNum -> Prob [RealNum]
walkFrom start = do { next <- uniformRange (start-1) (start+1) ;
                      rest <- walkFrom next ; return (next : rest) }
\end{lstlisting}
Here we will consider a puzzle that also involves tracking the distance travelled (odometer):
\begin{lstlisting}
walkOdo :: Prob ([RealNum],[RealNum])
walkOdo = do { start <- uniformRange 0 3 ; steps <- iid (uniformRange (-1) 1) ;
               return (scanl (+) start steps , scanl (+) 0 (map abs steps))}
\end{lstlisting}
Notice that this is a random pair of sequences, representing position and distance travelled respectively. But the pair itself is not independent, which is clear from the type.

\paragraph{Observations and Measures}
We use the distribution \lstinline|walkOdo| to phrase the puzzle from~\cite{NPHMCMakZO21}.
The walker will rest when either they reach home (position $\leq 0$) or they have travelled further than \lstinline|distLim|. When they first rest, their distance from home is roughly 1.1. What is the posterior distribution on starting positions?
%\newpage
\begin{lstlisting}[caption=Random walk model, label={lst:walk}]
walkModel :: RealNum -> Meas RealNum
walkModel distLim = do
 (xs,ds) <- sample walkOdo
 let (Just(_,finalDistance)) = find (\(x,d) -> x <= 0 || d >= distLim) (zip xs ds) in
 scoreLog (normalLogPdf 1.1 0.1 finalDistance)
 return (head xs)
\end{lstlisting}
In this example, we have included an observation by switching from the probability monad, \lstinline|Prob|, to the measures monad, \lstinline|Meas|. There is a coercion \lstinline|sample :: Prob a -> Meas a|, but the measures monad also allows \lstinline|scoreLog :: RealNum -> Meas ()|, which scores (or weights) by the likelihood of the observation. In this case, the rough distance of 1.1 from home is modelled by a normal distribution with standard deviation 0.1. 
Although \lstinline|walkOdo| is infinite, \lstinline|find| forces only a finite prefix: the odometer accumulates i.i.d. positive increments, so almost surely exceeds \lstinline|distLim| after finitely many steps, where evaluation stops.

As explained in~\cite{DashKPS23LazyPPL}, the distinction between \lstinline|Prob| and \lstinline|Meas| is helpful because recursion in the \lstinline|Prob| monad can be lazy, allowing us to build the random infinite sequences that are idiomatic in statistics, lazily ignoring any random elements that are not needed, but recursion in the \lstinline|Meas| monad is not lazy, because we cannot ignore any observations. 
This is not really a limitation: an observation must terminate to contribute a score, so observing an infinite computation is impossible in any PPL, while the laziness LazyPPL adds in the \lstinline|Prob| layer is extra expressivity.

\subsection{Example: Clustering with a Gaussian Mixture}\label{sec:clustering}
Our final illustration is a clustering model, based on an example from~\cite[\S3]{ZhouDCC}.
The number~$k$ of clusters is unknown, but the prior is that it is Poisson distributed. %; the positions of the clusters are spread out according to the number.
\begin{lstlisting}
clusters :: Prob [RealNum]
clusters = do m <- poisson 9
              let k=m+1 in
              mapM (\n -> uniformRange (20*(n-1)/k) (20*n/k)) [1..k] 
\end{lstlisting}
Here \lstinline|mapM| is a Haskell library routine that recurses over the given list using the monadic computation.
(We elide coercions between integers and real numbers.) Note that the number of points returned is not known nor bounded.

We next define a basic clustering model by treating the points \lstinline|xs <- clusters| as the means of Gaussian distributions. This uses a Gaussian mixture density:
$
\textstyle p(y|\mbox{\lstinline{xs}}) = \sum_{i=0}^{k-1} \frac 1 kf(y,\mbox{\lstinline|xs!!i|})
$,
where \lstinline|k=(length xs)| and $f(y,x)=\frac{1}{\sqrt{2\pi*1.5^2}}\exp(-\frac {(x-y)^2}{2*1.5^2})$ is the normal density, for fixed standard deviation $1.5$.

A standard clustering inference model scores a dataset according to this density function, providing the inferred centres of the clusters: 
\begin{lstlisting}
clustering :: [RealNum] -> Meas [RealNum]
clustering dataset = do xs <- sample clusters
                        mapM (\y ->  scoreLog (log (p(y|xs)))) dataset
                        return xs
\end{lstlisting}
Above, the program phrase \lstinline/(log (p(y|xs)))/ is pseudocode, the actual code takes some care over addition in the log domain, as is standard. 

This clustering model is a very simple model based on the leading example in \cite{ZhouDCC}, to illustrate the idea. More general models would also infer the variance, and infer different ratios in different clusters. This example is already non-parametric, since the number of clusters is inferred. More advanced non-parametric clustering models include Dirichlet process models, which can still be expressed within this language. 

\subsection{Recap of the Probability and Measures Monads, and the Goal of Monte Carlo}
\newcommand{\myspace}{\mbox{\lstinline|a|}}
\newcommand{\dd}{\mathrm{d}}
Probability theory typically starts from some underlying probability space or seed space $\Omega$, which is equipped with a probability measure $p$. A random variable in a space $\myspace$ is a measurable function
$X:\Omega\to\myspace$. A random variable induces a probability measure on $\myspace$ itself, by pushing forward, $X^*p$; this is the law of $X$. Often we are interested in achieving samples from the distribution $X^*p$, but the underlying space is still important and useful in Markov Chain Monte Carlo (MCMC) simulation.

%For example, if $(\myspace)$ comprises infinite lists of real numbers, a random variable in $(\myspace)$ amounts to a sequence of random variables $\Omega\to \mathbb R$. 

\subsubsection{Normalization and MCMC}
Recall that a probability measure on $\Omega$ is normalized, 
that is to say, $p(\Omega)=1$.
An unnormalized measure~$q$ has $q(\Omega)\neq 1$. Provided $q(\Omega)\not\in\{0,\infty\}$, we can form a normalized, probability measure $\frac{q(-)}{q(\Omega)}$. The problem is that the normalization constant
$q(\Omega)$ is very difficult or impossible to calculate exactly in general. Nonetheless, sampling from the normalized form of $q$ is very important, for example many Bayesian inference problems are of this form. 

The starting point for MCMC simulation is to have an \emph{unnormalized} measure $q$ on $\Omega$ (and hence an unnormalized pushforward measure on $\myspace$), and to provide a Markov chain that gives samples from the corresponding normalized probability distribution but without first calculating $q(\Omega)$.

\subsubsection{Score Functions and Unnormalized Densities}
A typical way to express an unnormalized distribution is as a function
\begin{equation}
 \label{eqn:unnorm-density}
 l: \Omega\to [0,\infty]
\end{equation}
on the underlying probability space. If $\int l\,\dd p=1$, then this is the density for a probability measure, and if not, it is merely a density for an unnormalized measure. 

Many situations in Bayesian statistics refer to the likelihood  $f(d|x)$ of a datapoint~$d$ as a function of its parameters~$x$. For fixed parameters, the function $f(-|x)$ is a normalized density, but in Bayesian inversion the datapoint is fixed and the parameters vary; the function $f(d|-)$ is an unnormalized density. If $p$ (or $X^*p$) is the prior belief, and $l: \Omega\to [0,\infty]$ determines the likelihood of the data, then the Bayesian posterior is proportional to the unnormalized measure $q=p_l$  induced by the density $l$, where
$
\textstyle \int k(\omega)\,q(\dd \omega)\ =\ \textstyle \int k(\omega)\,l(\omega)\ p(\dd\omega)
$
for all measurable $k:\Omega\to[0,\infty]$.
\subsubsection{LazyPPL Implementation}
The idea of LazyPPL is to fix a probability space $\Omega$ with a measure-preserving isomorphism $\Omega\cong\Omega\times\Omega$. Thus the source of randomness can always be split in two.
The probability monad is then implemented as \lstinline|Prob a = (|$\Omega$\lstinline| -> a)|,
and the monadic sequencing works by splitting~$\Omega$, and the correct recursive behaviour for infinite lists follows. 
A convenient example is $\Omega={\mathbb{R}}^{\mathbb{N}^*}$, infinite rose trees. Thus the sample space is naturally infinite dimensional, but this is not a problem in practice because it is explored lazily. 

The measure monad includes observations via scoring and unnormalized densities, such as the \lstinline{scoreLog} of \lstinline|walkModel| and \lstinline|clustering|, or \eqref{eqn:unnorm-density}.
This monad can be implemented as the writer monad transformer, \lstinline|Meas a|$=\Omega\to ([0,\infty],\myspace)$, providing both the
result in \lstinline|a| and the unnormalized density.

The LazyPPL library, then, provides a compositional language for building measures in \lstinline|Prob a| and \lstinline|Meas a|. It comes with basic Metropolis-Hastings MCMC simulators that convert type \lstinline|Meas a| to a stream of samples.

\subsection{Towards Gradient-Based Inference in the Lazy Setting}
\label{sec:overview-preview}
The lazy approach to probabilistic programming allows models with infinite-dimensional parameter spaces to be expressed compositionally, as illustrated above. The LazyPPL library includes a lazy Metropolis-Hastings sampler (lazyLMH) that operates over these infinite-dimensional spaces without gradient information. As discussed in the introduction, gradient-based methods like HMC can be applied to non-parametric models via truncation or dynamic dimension tracking, but these approaches sacrifice compositionality or declarativeness respectively. The aim of this paper is to show that gradient-based HMC can work directly in the lazy infinite-dimensional setting, preserving the compositional benefits.

This is not straightforward. Even defining what a `gradient' means over an infinite-dimensional space requires care, and the Metropolis-Hastings correction, which involves a product over all dimensions, appears to involve infinitely many terms (in fact it collapses to a finite product, as we explain next).

Consider again the random walk model (\cref{lst:walk}). The infinite sequence of steps produces an infinite-dimensional state space. As we demonstrate in \cref{sec:experiments} (\cref{fig:walk}), our lazy HMC methods successfully sample from the posterior of this model, despite operating over a truly infinite-dimensional space, and gradient information allows for more efficient exploration than the gradient-free lazyLMH method.

The key insight enabling this is that, although the parameter space is infinite dimensional, the gradient of any program with a PACAP likelihood (\cref{def:PACAP}) is non-zero in only finitely many dimensions. This means we can perform HMC-style leapfrog steps over only the relevant dimensions, while the infinitely many remaining dimensions are handled lazily. Moreover, on unvisited coordinates the leapfrog involution acts as a rotation preserving the standard-normal density, so the corresponding factors in the Metropolis-Hastings acceptance ratio cancel, leaving a finite product.

\paragraph{Structure of the rest of the paper.}
\Cref{sec:AD} develops the smoothness theory and automatic differentiation needed for gradients over infinite-dimensional seed spaces.
\Cref{sec:LazyHMC-big-section} presents the lazy HMC algorithm, with three variants for handling the infinite-dimensional Metropolis-Hastings correction.
\Cref{sec:Framework-B} generalizes this to the No-U-Turn Sampler (lazy NUTS), which adaptively sets the trajectory length.
Finally, \cref{sec:experiments} evaluates the methods on the models introduced in this section.

\section{Smoothness and Automatic Differentiation with Infinite Dimensional Seed Spaces}
\label{sec:AD}
\newcommand{\RR}{\mathbb{R}}
\newcommand{\NN}{\mathbb{N}}
\looseness=-1
Having motivated the need for gradient-based inference in the lazy setting (\S\ref{sec:lazyppl-overview}), we now address the first technical challenge: computing gradients over infinite-dimensional seed spaces.
We introduce a smoothness condition, PACAP, and prove compositionally that it holds for a core calculus covering primitive recursion and corecursion; for programs involving unbounded search, such as the geometric distribution of \S\ref{sec:lazyppl-overview}, PACAP can also be established by an explicit partition of the seed space.

For Hamiltonian Monte Carlo (\cref{sec:LazyHMC-big-section}) we need the gradient of the unnormalized density $l: \Omega\to [0,\infty]$ from~\eqref{eqn:unnorm-density}, where the seed space $\Omega$ is infinite dimensional. This leads to two tasks:
\begin{itemize}\item We need to establish a useful notion of gradient for~$l$. Although we want to consider infinite dimensional systems, in finite time we can only ever inspect finitely many dimensions; although we do not expect to bound the number of dimensions, we can still show that `locally' a function only depends on finitely many dimensions (`cylindrical'), and so locally (in a sense we make precise below) factors through a projection, $\Omega\to \RR^n\to\RR$.

  This local approach is inspired by earlier analysis of the definable functions $f\colon \NN^\NN\to \NN$. These are known to be locally dependent on finitely many dimensions, which amounts to topological continuity in that setting (e.g.~\cite{LongleyNormann2015}); topological continuity is too strong in the real-valued setting. %: for all $x\in \NN^\NN$ there is $n$ such that for all $y\in\NN^\NN$ with
%  $(x_1,\dots,x_n)=(y_1,\dots,y_n)$ we have that $f(x)=f(y)$. 
\item If locally the function only uses finitely many dimensions, and yet the number of dimensions used overall is not bounded, there must be discontinuity or non-smooth points in $\Omega$ where the number of relevant dimensions changes. These might play an important role. For example, even in the geometric distribution (\S\ref{sec:geometric}), the sample from the geometric distribution \emph{is} the number of dimensions used, and so it is very important that it is allowed to change. These non-smooth points could be problematic for gradient methods, but we show that they form a measure zero set: there is no chance of actually reaching a non-smooth point, in that the gradient exists almost everywhere.

This analysis is subtle: as~\cite{DBLP:conf/nips/0001YRY20} have shown, `almost surely smooth functions' are not closed under composition. We follow that work by focusing on analytic functions and analytic partitions, but now extended to the infinite dimensional setting. 
\end{itemize}
A final important high level point is that in the course of a finite computation, we will call $l$ with various arguments, although these might not be known in advance. Locally, each of these function calls will be dependent on finitely many dimensions; there is no harm in over-approximating this to say that from the perspective of all the function calls in one run of the computation, the function~$l$ itself only depends on finitely many dimensions.  
  
\subsection{Smoothness Guarantees}
\label{sec:ADtheorem}

\subsubsection{Functions That Are PACAP: Piecewise Analytic Under Cylindrical Analytic Partition}
We introduce a notion of piecewise smoothness for functions on the seed space, so that we can consider derivatives.
Because plain notions of piecewise-smooth are not compositional \cite{DBLP:conf/nips/0001YRY20}, we adopt a more refined notion that is stable under composition, following \cite{DBLP:conf/lics/HuotLMS23,DBLP:conf/nips/0001YRY20}. Moreover, we guarantee that locally the functions are cylindrical, in the sense that they are only dependent on finitely many dimensions. This makes implementation easier, and is computationally natural over an infinite dimensional structure -- an output only depends on a finite part of its input.

Let $\mathcal{A}$ be a countable set of `addresses', indexing the coordinates of the seed space. We refer to a coordinate of a particular seed as a `site'.
For example, we could let $\mathcal{A} = \mathbb{N}^*$, the set of finite lists of natural numbers, so that addresses correspond to nodes of a rose tree (a tree with countably infinite branching and infinite depth).
We think of $\mathbb{X} = \mathbb{R}^{\mathcal{A}}$ as a space of seeds, for example with the product normal distribution $\mathcal{N}(0,1)^{\otimes \mathcal{A}}$. So if $\mathcal{A} = \mathbb{N}^*$, then $\mathbb{X}$ comprises rose trees where each node contains a real number.
Recall that a function is \emph{analytic} if its Taylor series converges to the function in some neighbourhood of every point; analytic functions are smooth.

\begin{definition}\label{def:PACAP}
  A subset $U$ of $\mathbb{X}$ is an \emph{analytic cylinder} if there
exist
\begin{itemize}
\item a finite set $B\subseteq \mathcal{A}$ of addresses, and open $V\subseteq \RR^B$;
\item finite sequence of analytic functions
  $g_1,\dots, g_n\colon V\to \RR$, such that
\end{itemize}
\[U= \{x\in \mathbb{X}~|~ x|_B\in V\ \&\  g_1(x|_B)\leq 0 \ \&\ \dots\ \&\ g_n(x|_B)\leq 0\}\text.\]
We call $B$ a \emph{support} of $U$.
Note that strict inequalities $g(x|_B)< 0$ can be absorbed into the open set $V$ (replace $V$ by $V\cap g^{-1}((-\infty,0))$), so the non-strict form $g_i\leq 0$ suffices to express both strict and non-strict constraints (cf.~\cite[Lemma~B.3]{DBLP:conf/lics/HuotLMS23}).

A subset of $\mathbb{X}$ is \emph{c-analytic} if it is a countable disjoint union of analytic cylinders.

A function $f\colon U\to \RR$ with c-analytic domain is defined to be \emph{PACAP} if there
exists
\begin{itemize}
\item a countable partition $U=\biguplus_{i=1}^\infty U_i$ into analytic cylinders, with given supports $B_i\subseteq \mathcal{A}$;
\item for each $i$, an analytic function $f_i\colon V\to \RR$ where $V\subseteq \RR^{B_i}$ is open such that
  $U_i\subseteq V\times \RR^{\mathcal{A}\setminus B_i}$
\end{itemize}
and $f(x)=f_i(x|_{B_i})$ when $x\in U_i$.
\end{definition}
\begin{example}\label{ex:PAP}
  \looseness=-1
  Let $\mathcal{A}=\NN$.
  A function $c : \RR^n\to \RR$ is \emph{PAP} (piecewise analytic on an analytic partition)~\cite{DBLP:conf/nips/0001YRY20,DBLP:conf/lics/HuotLMS23} if $\RR^n$ can be partitioned into finitely many analytic sets $P_j = \{y \in V_j \mid h_1(y) \leq 0,\dots\}$ (with $V_j \subseteq \RR^n$ open, $h_i : V_j \to \RR$ analytic), together with analytic functions $c_j : V_j \to \RR$ such that $c = c_j$ on~$P_j$.
  Every PAP function $\RR^n\to \RR$ induces a PACAP map ${\mathbb{X}\to \RR^n\to \RR}$ by composition.

  The function $f:\mathbb{X}\to \RR$ given by
  $
    \textstyle f(x_1,\dots,x_n,\dots)=
    \sum_{i=1}^\infty 2^{-i}\arctan(x_i)
    $
  is not PACAP, because its result depends on every input dimension; such functions lie outside our scope, since the likelihood is then not finitely computable. \end{example}

\subsubsection{A Core Calculus for Establishing PACAP}
\label{sec:core-calculus}
% Macros for typing rules
\newcommand{\judg}[3]{#1 \vdash #2 : #3}
\newcommand{\Rule}[2]{\displaystyle\frac{#1}{#2}}%\ (\mathrm{#3})}

% Macros for terms and types
\newcommand{\tmvar}{x}
\newcommand{\tmabs}[3]{\lambda #1{:}#2.#3}
\newcommand{\tmapp}[2]{#1\,#2}
\newcommand{\tmpair}[2]{(#1,#2)}
\newcommand{\tmproj}[2]{\pi_{#1}\,#2}
\newcommand{\tmunit}{()}
\newcommand{\tminj}[2]{\mathtt{inj}_{#1}\,#2}
\newcommand{\tmcase}[4]{\mathtt{case}\ #1\ \mathtt{of}\ (\mathtt{inj}_i\,#2\Rightarrow #3)_{i\in #4}}

% Macros for type constructors 
\newcommand{\tyarrow}[2]{#1\to #2}
\newcommand{\typroduct}[2]{#1\times#2}
\newcommand{\tyunit}{1}
\newcommand{\tysum}[2]{\textstyle \sum_{#1\in #2}}
\newcommand{\tyreal}{\mathtt{real}}
\newcommand{\sem}[1]{\llbracket #1 \rrbracket}
\newcommand{\rel}[2]{\mathcal{P}^{#1}_{#2}}
\newcommand{\id}{\mathrm{id}}
\newcommand{\tmtrue}{\mathsf{tt}}
\newcommand{\tmfalse}{\mathsf{ff}}
To establish PACAP-ness compositionally, we introduce a simple core lambda calculus with a real numbers type, function types, finite product types, and countable sum types.
\[
\begin{aligned}
e &::= \tmvar \mid \tmabs{x}{\tau}{e} \mid \tmapp{e}{e} \mid \tmpair{e}{e} \mid \tmproj{j}{e} \mid \tmunit \mid \tminj{i}{e} \ (i\in I)\mid \tmcase{e}{x}{e_i}{I}\\
\tau,\upsilon &::= \tyarrow{\tau}{\upsilon} \mid \typroduct{\tau}{\upsilon} \mid \tyunit \mid \tysum{i}{I}\tau_i\mid\tyreal
\end{aligned}
\]
The typing rules are standard:
\[\small
\begin{array}{c}
\Rule{\Gamma(\tmvar)=\tau}{\judg{\Gamma}{\tmvar}{\tau}}
\qquad
\Rule{\judg{\Gamma,x:\tau_1}{e}{\tau_2}}{\judg{\Gamma}{\tmabs{x}{\tau_1}{e}}{\tyarrow{\tau_1}{\tau_2}}}
\qquad
\Rule{\judg{\Gamma}{e_1}{\tyarrow{\tau_1}{\tau_2}}\quad \judg{\Gamma}{e_2}{\tau_1}}{\judg{\Gamma}{\tmapp{e_1}{e_2}}{\tau_2}}
\qquad
\Rule{}{\judg{\Gamma}{\tmunit}{\tyunit}}
\qquad
\Rule{\judg{\Gamma}{e}{\typroduct{\tau_1}{\tau_2}}}{\judg{\Gamma}{\tmproj{j}{e}}{\tau_j}}
\\[12pt]
\Rule{\judg{\Gamma}{e_1}{\tau_1}\quad \judg{\Gamma}{e_2}{\tau_2}}{\judg{\Gamma}{\tmpair{e_1}{e_2}}{\typroduct{\tau_1}{\tau_2}}}
\qquad
\Rule{\judg{\Gamma}{e}{\tau_i}}{\judg{\Gamma}{\tminj{i}{e}}{\tysum{j}{I}{\tau_j}}}
\qquad
  \Rule{\judg{\Gamma}{e}{\tysum{j}{I}{\tau_j}} \quad \{\judg{\Gamma,x:\tau_i}{e_i}{\tau}\}_{i\in I}}
     {\judg{\Gamma}{\tmcase{e}{x}{e_i}{I}}{\tau}}
\end{array}
\]
We also include standard functions as constants, such as $\exp$, $\sin$, $\log$, ${+}$, ${\times}$, typed as appropriate, ${\judg\Gamma{c}{\tyarrow{\typroduct{\tyreal}{\cdots}}{\tyreal}}}$.
We assume each constant denotes a total PAP function (Example~\ref{ex:PAP}); this covers all analytic functions as well as functions like $\log$ and $\mathtt{abs}$ (extended to total functions, e.g.\ $\log(x) = 0$ for $x \leq 0$).

We interpret types as sets, in a standard way:
\[\sem\tyreal=\RR\quad
  \sem{\tyunit}=1\quad
  \sem{\typroduct{\tau_1}{\tau_2}}=\sem{\tau_1}\times \sem{\tau_2}\quad
  \sem{\tysum{j}J{\tau_j}}=\biguplus_{j\in J}\sem{\tau_j}
  \quad
  \sem{\tyarrow{\tau_1}{\tau_2}}=  (\sem{\tau_1}\to\sem {\tau_2})\text.\]
We then interpret typing contexts $\Gamma=(x_1{:}\tau_1,\dots,x_n{:}\tau_n)$ as sets of valuations, i.e.\ products $\sem\Gamma=\sem{\tau_1}\times\cdots\times\sem{\tau_n}$, and every typed term $\judg\Gamma e \tau$ induces a function $\sem e:\sem\Gamma\to\sem \tau$ in a standard way (e.g.~\cite{10.5555/151145,reynolds}). 

 The countable sum types include every countable set  $I$ as a type, via $\tysum{i}{I}\tyunit$.
 In particular we have a type of booleans $\{\tmtrue,\tmfalse\}$, a type of natural numbers $\NN$, and a type of lists of natural numbers~$\NN^*$.
 We have a type of infinite streams, $(\NN\to \tyreal)$, and a type of lazy rose trees, $(\NN^*\to \tyreal)$. 

 Although countable sum types cannot be directly coded on a computer, countable case analysis gives a simple calculus that subsumes primitive recursion (over~$\NN$) and corecursion (into streams $\NN\to\tyreal$), and many of our statistical models fit inside this calculus; in practice, Haskell's general recursion is used to describe the resulting infinite terms finitely.
 Programs involving unbounded search, such as \lstinline|findIndex| and \lstinline|find|, go beyond this calculus, but their PACAP property can be verified directly.
 See \cref{appendix:recursion-encoding} for details.
 
\begin{theorem}\label{thm:ad}
  If $f:\sem{(\mathcal A\to \tyreal)\to\tyreal}$ is definable then it is PACAP.
\end{theorem}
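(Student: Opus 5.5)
The plan is a logical relations argument. By induction on types I will define, for each type $\tau$, a predicate $\rel{\tau}{U}$ on functions $U\to\sem\tau$, where $U\subseteq\mathbb{X}$ ranges over c-analytic sets. The predicates are chosen so that $\rel{\tyreal}{U}$ is exactly the set of PACAP functions $U\to\RR$. The clauses are as follows.

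\begin{itemize}
\item $\rel{\tyunit}{U}$ contains every function.
\item $\rel{\typroduct{\tau_1}{\tau_2}}{U}$ holds when both components are in the respective predicates.
\item $\rel{\tysum{i}{I}\tau_i}{U}$ holds for $f$ when there is a countable partition $U=\biguplus_{i} U_i$ into c-analytic pieces such that $f$ lands in summand $i$ on $U_i$ and the restriction lies in $\rel{\tau_i}{U_i}$.
\item $\rel{\tyarrow{\tau}{\upsilon}}{U}$ holds for $f$ when, for every c-analytic $W$, every map $h:W\to U$ of the form $x\mapsto x$ restricted to $W\subseteq U$, and every $g\in\rel{\tau}{W}$, the function $x\mapsto f(x)(g(x))$ lies in $\rel{\upsilon}{W}$.
\end{itemize}

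Reindexing along inclusions of c-analytic subsets is the only reindexing needed, because the seed is a single global variable: the context includes one variable of type $\mathcal A\to\tyreal$, and all other variables are bound in the term. I will first prove two structural lemmas, each by induction on the type. The first is monotonicity: if $f\in\rel\tau U$ and $W\subseteq U$ is c-analytic, then $f|_W\in\rel\tau W$. The second is gluing: if $U=\biguplus_j W_j$ is a countable c-analytic partition and $f|_{W_j}\in\rel\tau{W_j}$ for all $j$, then $f\in\rel\tau U$.

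Both lemmas rely on closure facts about analytic cylinders. The intersection of two analytic cylinders is an analytic cylinder on the union of their supports, obtained by pulling back along the projection. The intersection of two c-analytic sets is therefore c-analytic. A countable partition of countable partitions is again countable, which gives gluing for $\tyreal$.

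With these lemmas in place, the fundamental lemma says: if $\judg{\Gamma}{e}{\tau}$ and each variable of $\Gamma$ is interpreted by an element of the corresponding predicate over $U$, then $\sem e$ is in $\rel\tau U$. It is proved by induction on typing derivations. Variables, pairs, projections, unit and injections are immediate from the definitions. Abstraction and application follow from the Kripke-style function clause together with monotonicity. For $\mathtt{case}$, I take the partition witnessing the scrutinee, apply the induction hypothesis to each branch on $U_i$ (using monotonicity for the context), and glue.

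Constants $c$ require a lemma that PAP composed with PACAP is PACAP. Given PACAP arguments $f_1,\dots,f_k$ on $U$, I refine to a common countable partition into analytic cylinders. On each cylinder, every $f_j$ equals an analytic $f_{j,i}$ of $x|_{B}$ for a finite $B$. I then split that cylinder further by the finitely many analytic pieces $P_m=\{y\in V_m\mid h_l(y)\le 0\}$ of $c$, pulled back through $(f_{1,i},\dots,f_{k,i})$. Each resulting piece is again an analytic cylinder with support $B$: the open set becomes $V\cap(f_{\cdot,i})^{-1}(V_m)$ and the constraints are $h_l\circ f_{\cdot,i}\le 0$. On each such piece $c\circ f$ is the analytic function $c_m\circ f_{\cdot,i}$. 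This is essentially the finite-dimensional argument of \cite{DBLP:conf/lics/HuotLMS23}, run cylinder by cylinder.

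The base case concerns the seed variable $s:\mathcal A\to\tyreal$, interpreted by the identity on $U=\mathbb{X}$. I must show that $\mathrm{id}\in\rel{\tyarrow{\mathcal A}{\tyreal}}{\mathbb{X}}$, where the type $\mathcal A$ is encoded as $\tysum{a}{\mathcal A}\tyunit$. Given $g\in\rel{\mathcal A}{W}$, there is a countable partition $W=\biguplus_a W_a$ with $g\equiv a$ on $W_a$. The function $x\mapsto x(a)$ on $W_a$ is analytic with support $B_a\cup\{a\}$, where $B_a$ is the union of the supports of the cylinders composing $W_a$. So the application $x\mapsto x(g(x))$ is PACAP, because the partition is countable. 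Finally, for a closed definable $f:\sem{(\mathcal A\to\tyreal)\to\tyreal}$, the fundamental lemma with $U=\mathbb{X}$ gives $f$ in the arrow predicate, and applying it to the identity seed gives that $x\mapsto f(x)$ is in $\rel{\tyreal}{\mathbb{X}}$, i.e.\ PACAP.

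I expect the main obstacle to be the function-type clause and its interaction with countable gluing. A logical relation must be closed under gluing at higher types, and gluing along countably many pieces only works if the arrow predicate quantifies over all c-analytic subsets $W$, not merely over $U$ itself. I will check carefully that gluing holds at arrow types. Given $g\in\rel\tau W$, I restrict it to each $W\cap W_j$ using monotonicity and the fact that intersections of c-analytic sets are c-analytic, apply the hypothesis on each piece, and then glue at $\upsilon$ by the induction hypothesis. A secondary subtlety is that a c-analytic set is only a countable disjoint union of cylinders, with unbounded supports overall. So "PACAP" must be preserved when the partitions are refined countably many times, and the countability of $\mathcal A$ and of sum indices is exactly what keeps every partition countable.
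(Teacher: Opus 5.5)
Your proposal is correct and follows the paper's proof essentially step for step. It uses the same Kripke-style logical relation indexed by c-analytic sets, the same restriction, gluing, intersection and PAP-composition lemmas, the same partition-then-glue treatment of $\mathtt{case}$, and the same final application of the closed term to the identity seed; you also spell out why the identity lies in the arrow relation, where the paper only says this follows by expanding definitions. One small fix in that base case: $B_a$, the union of the supports of all cylinders making up $W_a$, can be infinite. Instead, treat each cylinder $C_{a,k}$ of $W_a$ (with finite support $D_{a,k}$) as its own piece with support $D_{a,k}\cup\{a\}$; the resulting countable partition into analytic cylinders is what witnesses PACAP.
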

(Here $\mathcal A$ is a type since it is countable, via $\tysum{a}{\mathcal A}\tyunit$, and $\sem{\mathcal A\to\tyreal}=\RR^{\mathcal A}$.)
\begin{proof}
  We prove this by infinitary logical relations; the proof is reminiscent of \cite{DBLP:conf/esop/BartheCLG20,DBLP:conf/fossacs/HuotSV20}. For each type $\tau$ and each c-analytic set $U$ we define
  a relation $\rel U \tau\subseteq \sem \tau ^U$, by induction on structure of types:
  \begin{align*}
    &\rel U \tyreal =\{f:U\to \RR~|~ f\text{ is PACAP}\}\\
    &\rel U \tyunit = \{(): U\to 1\} \quad\text{(all)}\\
    &\rel U{\typroduct {\tau_1}{\tau_2}} = \{(f_1,f_2)~|~f_1\in \rel U {\tau_1} \ \& \ f_2\in \rel U {\tau_2}\}\\
    &\textstyle \rel U{\tysum i I {\tau_i}} = \{\biguplus_{i\in I} f_i~|~U=\biguplus_{i\in I} U_i\ \&\ \forall i\in I.\ \text{$U_i$ c-analytic} \ \&\  f_i\in \rel {U_i} {\tau_i}\}\\
    &\rel U{\tyarrow {\tau_1}{\tau_2}}=
      \{f:U\to \sem{\tau_1}\to \sem{\tau_2}~|~\forall\text{ c-analytic }U'\subseteq U,\ \forall g\in \rel {U'} {\tau_1}.\ (\lambda u.\,f(u)(g(u)))\in \rel {U'} {\tau_2}\}
  \end{align*}
  We prove the following `fundamental lemma', by induction on the structure of typing derivations:
  \begin{equation}\label{eqn:logrel-fundlemma}\begin{aligned}
    &\text{If $\judg {x_1:\tau_1,\dots, x_n:\tau_n} e \tau$ and given c-analytic $U$ and $f_1\in \rel U {\tau_1}$, \dots, $f_n\in \rel U {\tau_n}$}\\&\text{then we have}\quad
    \lambda u.\ \sem{e}(f_1(u),\dots, f_n(u))\in \rel U \tau\text.
   \end{aligned}
 \end{equation}
 The full proof, which uses auxiliary lemmas on support enlargement, intersection, restriction, gluing, and composition, is given in \cref{appendix:fundamental-lemma}.
 In the function-type clause the subsets~$U'$ range over c-analytic subsets of~$U$; no restriction of~$f$ is involved, since for $u\in U'\subseteq U$ the value $f(u)\in\sem{\tau_1}\to\sem{\tau_2}$ is just~$f$ evaluated pointwise. As $g(u)\in\sem{\tau_1}$ for $g\in\rel{U'}{\tau_1}$, the map $\lambda u.\,f(u)(g(u))\colon U'\to\sem{\tau_2}$ is well-typed, and the clause asks that it lie in $\rel{U'}{\tau_2}$. This quantification over subsets is a Kripke refinement~\cite{DBLP:conf/esop/BartheCLG20,DBLP:conf/fossacs/HuotSV20}, needed for the restriction and lambda cases.
 
  The main result then follows from this lemma~\eqref{eqn:logrel-fundlemma}. For if $\judg{}e{(\mathcal A\to \tyreal)\to\tyreal}$
  then by the special case of the lemma with empty context, the constant function
  \begin{equation}\text{$\lambda u.\ \sem{e}\quad$ is in $\quad\rel {\RR^{\mathcal A}} {(\mathcal A\to \tyreal)\to\tyreal}$}\text.\label{eqn:logrelA}\end{equation}
  By expanding the definitions, the identity function $\id_{\RR^{\mathcal A}}$ is in $\rel {\RR^{\mathcal A}}{\mathcal A\to \tyreal}$.
  Therefore, using \eqref{eqn:logrelA} and expanding the definition of $\rel {\RR^{\mathcal A}} {(\mathcal A\to \tyreal)\to\tyreal}$, putting $f_1$ as the identity function, we have ${\lambda u.\sem e(u)\in \rel {\RR^{\mathcal A}}\RR}$, so $\sem e$ is PACAP. 
      \end{proof}

      \subsection{Representing the Probability and Measures Monads in the Core Lambda Calculus}
      \label{sec:monads-in-lambda-calculus}
      \newcommand{\ProbMonad}{\mathit{Prob}}      \newcommand{\MeasMonad}{\mathit{Meas}}
      \newcommand{\mreturn}{\mathit{return}}
      \newcommand{\mbind}{\mathit{bind}}
We can define the probability monad in our calculus,
by
\begin{align*}&\ProbMonad(\tau) = ((\NN^*\to \tyreal)\to \tau)\qquad
  \mreturn_\ProbMonad = \lambda x .\lambda t.\ x : \tau \to \ProbMonad(\tau)\\
&\mbind_\ProbMonad = \lambda p.\lambda f.\lambda t.f(p\,(\lambda j.t (\imath_1\,j)))(\lambda j.t\, ( \imath_2\,j))
: \ProbMonad(\tau)\to (\tau \to \ProbMonad(\upsilon))\to \ProbMonad(\upsilon)\end{align*}
where $\imath_1,\imath_2: \NN^*\to\NN^*$ are two maps with disjoint range.
In this way, sequencing splits the random seed space.

As in LazyPPL~\cite{DashKPS23LazyPPL}, this monad does not immediately satisfy the identity and associativity laws for monads in the usual theory of equality for the calculus here. Rather, it satisfies the associativity laws in a suitable semantics where we identify random variables by their law (e.g.~\cite{DashKPS23LazyPPL,DBLP:conf/lics/HeunenKSY17,DBLP:conf/lics/HuotLMS23}). Since we do not need to fully formalize a denotational semantics of the language for what follows, we omit the details here. 

We can also define the measure monad, which accumulates the unnormalized density, using the standard writer monad transformer:
\begin{align*}&\MeasMonad(\tau) = (\ProbMonad (\tyreal \times\tau))\qquad
  \mreturn_\MeasMonad = \lambda x .\mreturn_\ProbMonad(1,x) : \tau \to \MeasMonad(\tau)\\
              &\mbind_\MeasMonad = \lambda p.\lambda f.\mbind_{\ProbMonad}\,p\,(\lambda q.\,
                \mbind_\ProbMonad\,(f(\tmproj 2 q))\,(\lambda r.\,\mreturn_\ProbMonad((\tmproj 1 q)*(\tmproj 1 r),\tmproj 2 r)))
\\&\qquad\qquad\qquad
: \MeasMonad(\tau)\to (\tau \to \MeasMonad(\upsilon))\to \MeasMonad(\upsilon)\end{align*}

The point is that PACAP is a sufficient condition for lazy HMC.
The clustering, step regression, and polynomial regression models are all definable in the core calculus, and hence PACAP by \cref{thm:ad}; the geometric and random walk models involve unbounded search, but their PACAP property can be verified directly (\cref{appendix:recursion-encoding}).
Once we know that the likelihood is PACAP, it follows that at almost every seed the likelihood is locally analytic and depends on only finitely many coordinates, so its gradient is well-defined and finitely supported (the boundaries between analytic pieces form a measure-zero set). This is exactly the property exploited by the HMC methods of \cref{sec:LazyHMC-big-section}.

\subsection{Implementation of Automatic Differentiation in Haskell}
\label{sec:Haskell-Nagata}
\newcommand{\Address}{\mathcal{A}}
\newcommand{\Nag}{\mathcal{R}}
We automatically propagate the derivatives of functions by automatic differentiation, using the sparse map version of the Nagata numbers from~\cite{vandenBerg2023ForwardReverse}. This works even for infinite dimensional spaces, as we now explain. 

Recall that a naive approach to dual number forward-mode AD (e.g.~\cite{DBLP:conf/fossacs/HuotSV20,DBLP:journals/pacmpl/ShaikhhaFVJ19}) would replace a function ${f:\RR^n\to\RR}$ by its dual number form,
$\bar f:\RR^n\times \RR^n\to \RR^2$ and the partial derivative $\frac {\partial f(x_1\dots x_n)}{\dd x_i}$ is given by 
$\pi_2\circ \bar f(-,e_i):\RR^n\to \RR$ where $e_i$ is the one-hot vector. We can then find the gradient at a point by forming a vector of all the partial derivatives. 

In the infinite dimensional setting, we consider functions of the form
$f:\RR^\Address\to \RR$ (for $\Address=\NN^*$ or $\NN$, the set of all dimensions). From Theorem~\ref{thm:ad}, it is plausible that we can take a step in the direction of the gradient at any given point: the
function $f$ will locally only depend on finitely many dimensions, and so the gradient at any point will only be non-zero at finitely many dimensions. 
However, a naive forward-mode AD (see e.g.~\cite{DBLP:conf/fossacs/HuotSV20,DBLP:journals/pacmpl/ShaikhhaFVJ19}) would work in terms of the dual number form $f':\RR^\Address\times \RR^\Address\to \RR^2$, from which we can again find the gradients by passing in one-hot vectors, but it is not obvious how to actually find out which of the infinitely-many dimensions have non-zero gradient, and so it is not clear how to take a gradient step.

To circumvent this problem, we follow~\cite{vandenBerg2023ForwardReverse} and switch from the ordinary dual numbers view to one in which a number is represented by a pair $(x,m)$ where $x\in \RR$ and $m:\Address\to \RR$ is a map. We write $\Nag=\RR\times \RR^\Address$ for these `Nagata numbers'. Now a function $f:\RR^\Address\to \RR$ is transformed by automatic differentiation into a function
$ f':\Nag^\Address\to \Nag$. For any $\vec{x}\in \RR^\Address$, we have 
$f'(\lambda i.(x_i,e_i))\in\Nag$, for which the first component is $f(\vec x)$, and the second component is a
map $\Address\to \RR$, assigning the gradient at each address.

The source-to-source transformation from $f:\RR^\Address\to \RR$ to $\bar f:\Nag^\Address\to \Nag$ is totally automatic in Haskell, simply by defining $\Nag$ as an instance of the \lstinline|Floating| type class. 

The final trick is to note that, for all relevant $(x,m)\in\Nag$, the map $m$ is sparse, i.e.~zero except at finitely many points. For this reason we can use a sparse map datatype for this, and this sparse map datatype allows us to inspect its domain of definition. We can then discover the dimensions in $\Address$ that are relevant to the gradient of $f$ at $\vec x\in \RR^\Address$ by inspecting the domain of the sparse map arising from $\bar f(\lambda i.(x_i,e_i))$. Even though this question requires infinitely many $x_i$'s and $e_i$'s, Haskell's lazy evaluation will ensure that only the relevant dimensions are inspected.

This sparse map approach with Nagata numbers was proposed by~\cite{vandenBerg2023ForwardReverse} to connect to the efficiencies of reverse mode automatic differentiation in a purely functional setting: rather than computing one directional derivative at a time (as in standard forward-mode AD), the full gradient is computed in a single pass, which in finite dimensions is the key advantage of reverse mode. In the infinite dimensional setting, this is not merely an efficiency gain but \emph{essential}: standard forward-mode AD would require infinitely many passes (one per dimension) to recover the gradient, whereas the Nagata approach discovers the finitely many relevant dimensions automatically. We use this throughout the HMC methods of \cref{sec:LazyHMC-big-section}.

\section{Lazy HMC: Lazy Hamiltonian Monte Carlo Simulation}
\label{sec:LazyHMC-big-section}
With automatic differentiation over infinite-dimensional structures in hand, via the PACAP analysis and laziness of the previous section, we turn to the Monte Carlo method, providing a correctness framework for HMC-based inference on rose-tree states arising from lazy probabilistic programs.

\paragraph{The key insight.} The main difficulty in applying HMC to lazy probabilistic programs is that the state space is infinite dimensional. 
A trace contains an unbounded collection of random seeds, and so a naive HMC update would appear to require both an infinite dimensional gradient and an infinite product in the acceptance ratio used for correction.

The key observation is that neither computation is actually infinite for the programs considered in this paper. 
By the PACAP analysis of \cref{sec:AD}, at almost every trace the likelihood is locally analytic and depends only on finitely many \emph{visited sites}. 
These visited sites are the coordinates whose values are actually read when evaluating the likelihood function on the given trace. 
Moreover, laziness ensures that only those sites needed along the leapfrog trajectory (defined in \cref{sec:HMC}) are ever sampled.
Together with a carefully modified HMC integrator, the infinite product in the acceptance ratio collapses to a \emph{finite} product over sites visited during the trajectory, and the unvisited dimensions cancel exactly.

Thus each lazy HMC proposal is computed by a finite procedure:
\begin{enumerate}
  \item sample an infinite momentum rose tree lazily, so that momentum values are generated only when demanded
  \item evaluate the likelihood and gradient only at the finitely many sites needed by the trajectory
  \item collect the finite set of sites visited during the trajectory
  \item keep the HMC update on the visited sites that was computed during the trajectory
  \item choose the update on all unvisited sites so their contributions cancel in the acceptance ratio.
\end{enumerate}

The purpose of this section is to make this argument precise.
We use the involutive MCMC framework as the correctness principle: if a proposal is generated by a measurable involution and the acceptance ratio uses the corresponding change-of-measure term, then the resulting Markov kernel preserves the target distribution.  
Our framework A (\cref{sec:Framework-A}) does not prescribe a unique proposal involution.  
It only states sufficient conditions under which the infinite-dimensional acceptance ratio reduces to a finite product.  
Our different lazy HMC variants instantiate the framework but  satisfy the conditions in different ways:
\begin{itemize}
  \item \textbf{lazyHMC1} (\cref{sec:LazyHMC}): uses a
    \emph{rotation}-based position update.
    Because the standard normal is rotationally symmetric, the momentum density at unvisited sites is unchanged by the update, so those dimensions cancel immediately from the acceptance ratio.
    Only the addresses with non-zero gradient need to be tracked so no full visited-site bookkeeping is required.
    This is the simplest, most purely functional variant.

  \item \textbf{lazyHMC2} (\cref{sec:LazyHMC-Hilbert}): uses a
    different Hamiltonian splitting inspired by~\cite{beskos2011HMConHS}.
    The position update is again rotation-based, so it shares lazyHMC1's cancellation.
    It can be preferable when the prior's geometry interacts poorly with lazyHMC1's step size parameterisation.

  \item \textbf{lazyHMC3} (\cref{sec:LazyHMC-with-original-integrator}):
    uses the \emph{standard} HMC leapfrog integrator, whose position update is a \emph{translation}.
    Translations do not preserve the standard normal, so unvisited sites no longer cancel automatically.
    To restore cancellation, the involution must apply a corrective measure-preserving map to unvisited sites, which requires knowing the full set of visited sites $v(\mathbf{q},\mathbf{p})$ (not just the gradient-support $u$).
    This requires an extra $O(L)$ pass to collect visited thunks (a larger constant, same asymptotics).

  \item \textbf{lazyNUTS} (\cref{sec:LazyNUTS}): adapts the
    No-U-Turn sampler to the lazy setting, eliminating the need to hand-tune the trajectory length $L$.
\end{itemize}

\cref{sec:experiments} shows that no single
variant dominates across all models.

The step regression model of \cref{sec:stepReg} is the running
illustration throughout this section.
There, the parameter space contains infinitely many changepoint positions and segment heights, yet any evaluation of the likelihood on a concrete dataset only touches the finitely many segments that overlap the observed $x$-range (\cref{fig:hmc-dim-change}).
A reader who finds the formal development of this section dense is encouraged
to read \cref{sec:stepReg} in parallel.

The section is structured as follows.
\Cref{sec:mcmc-correctness,sec:iMCMC,sec:HMC} recall background on Markov kernels and stationarity, the iMCMC framework, and HMC.
\Cref{sec:Framework-A} then presents Framework A as an instance of iMCMC on rose trees, and \cref{sec:LazyHMC,sec:LazyHMC-Hilbert,sec:LazyHMC-with-original-integrator} derive HMC-inspired methods whose correctness follows from it.
We also briefly discuss lazyNUTS (\cref{sec:LazyNUTS}).

\subsection{Markov Kernels and Stationarity}
\label{sec:mcmc-correctness}

Let $(\mathbb{X}, \Sigma_{\mathbb{X}}, \mu_{\mathbb{X}})$ be a $\sigma$-finite measure space and let $l:\mathbb{X} \rightarrow \mathbb{R}_{\geq 0}$ be the (unnormalized) density of the target distribution with respect to $\mu_{\mathbb{X}}$. 
The aim is to produce Markov chains $(\mathbf{q}^{(i)})_{i = 0, 1, \dots}$ with $\mathbf{q}^{(i)}\in \mathbb{X}$ which converge to the target distribution, i.e. the draws are eventually distributed according to the target. 
This is done by constructing a Markov kernel (also known as probability kernel) $\kappa: \mathbb{X} \times \Sigma_{\mathbb{X}} \rightarrow [0,1]$ which assigns to each current state a probability distribution over next states. One step of the Markov chain then corresponds to drawing the next  state $\mathbf{q}'$ from the kernel given the current state $\mathbf{q}$ by sampling from $\kappa(\mathbf{q}, \cdot)$. 

For the chain to converge, the kernel has to be stationary with respect to the target measure, i.e. it leaves the target measure invariant:
$\int_{\mathbb{X}}\kappa(\mathbf{q}, A)l(\mathbf{q}) \mu_{\mathbb{X}}(\diff \mathbf{q}) = \int_{A}l(\mathbf{q}) \mu_{\mathbb{X}}(\diff \mathbf{q})$.
Intuitively, if the current state is distributed according to the target, then after one step it still is.

Invariance alone does not guarantee convergence; that requires further assumptions such as irreducibility and aperiodicity, which are not typically established in full generality for probabilistic programming systems.
Our correctness results therefore focus on proving invariance for the HMC-related kernels we define, and we evaluate mixing empirically in \cref{sec:experiments}.
Convergence rate is a separate question: it depends on the target's geometry, with no single theorem covering all targets \cite{livingstone2019geometricHMC}. Since laziness changes only \emph{when} dimensions are evaluated, not the target geometry, standard HMC tuning guidance carries over.

\subsection{Background on Involutive Markov Chain Monte Carlo}
\label{sec:iMCMC}
Involutive MCMC (iMCMC) \citep{automatingIMCMC2020,  IMCMCNeklyudovWEV20,andrieu2020MHperspective,NPiMCMCMakZO22} is a general framework for constructing 
valid MCMC kernels with the use of an auxiliary random variable and an involutive function, which is a function whose inverse is itself.
In particular, we will focus on the presentation of iMCMC introduced by \cite{automatingIMCMC2020}.
Algorithms that can be generalized by iMCMC include MH, HMC, LMH, Reversible jump  (see \cite{IMCMCNeklyudovWEV20}). 

\looseness=-1
Let $(\mathbb{Y}, \Sigma_{\mathbb{Y}}, \mu_{\mathbb{Y}})$ be the $\sigma$-finite measure space of the auxiliary variable $\mathbf{p}$.
For each $\mathbf{q} \in \Supp(l)$, the auxiliary distribution has probability density $p_{\mathbf{q}}:\mathbb{Y} \rightarrow \mathbb{R}_{\geq 0}$ with respect to $\mu_{\mathbb{Y}}$.
Consider the state space $(\mathbb{S}, \Sigma_{\mathbb{S}}, \mu_{\mathbb{S}})$ with $\mathbb{S} = \{(\mathbf{q}, \mathbf{p}) \in \mathbb{X}\times \mathbb{Y} \mid l(\mathbf{q})p_\mathbf{q}(\mathbf{p})>0\}$, $\Sigma_{\mathbb{S}} = \{A \cap \mathbb{S} \mid A \in \Sigma_{\mathbb{X}}\otimes\Sigma_{\mathbb{Y}}\}$ and $\mu_{\mathbb{S}} = \mu_{\mathbb{X}} \times \mu_{\mathbb{Y}}$.
The target distribution on the joint space has (unnormalized) density 
$w(\mathbf{q}, \mathbf{p})= l(\mathbf{q})p_\mathbf{q}(\mathbf{p})$.

We also require an involution $I: \mathbb{S} \rightarrow \mathbb{S}$, i.e., $I^{-1} = I$, with the property that the Radon-Nikodym derivative $\diff (\mu_{\mathbb{S}} \circ I^{-1})/\diff \mu_{\mathbb{S}} : \mathbb{S} \rightarrow [0, \infty)$ exists.
The involution structure simplifies the construction and verification of a reversible Metropolis--Hastings correction, which guarantees the correct stationary distribution (see \cite{automatingIMCMC2020} for details).
One trivial example would be the identity involution, which always proposes the same value $\mathbf{q}$. Even though this kernel is stationary it will not produce a chain which converges to the correct distribution.   

\paragraph{iMCMC}
Each iteration starts with a value for $\mathbf{q}$ and draws a value for $\mathbf{p}$ from the auxiliary distribution $p_{\mathbf{q}}(\mathbf{p})$ of $\mathbf{p}$ given $\mathbf{q}$.
The involution $I$ is then applied on $(\mathbf{q}, \mathbf{p})$ in order to get the proposed state $(\mathbf{q}', \mathbf{p'})$.
For correctness, an accept/reject step is needed. 
The ratio compares how likely the proposed state is under the target distribution versus the current state, correcting for any asymmetry in the proposal distribution.
With some probability, the proposed $\mathbf{q}'$ is accepted and returned, otherwise it is rejected and $\mathbf{q}$ is returned instead. 

One iteration of iMCMC given $\mathbf{q}, p_{\mathbf{q}}, l, \mu_{\mathbb{X}}, \mu_{\mathbb{Y}}, I$ does:

\begin{enumerate}
    \item sample $\mathbf{p} \sim p_{\mathbf{q}}$
    \item get proposed state $(\mathbf{q}', \mathbf{p}') = I(\mathbf{q}, \mathbf{p})$
    \item with probability $\tilde{\alpha}(\mathbf{q}, \mathbf{p}) = \min\{1, \alpha(\mathbf{q}, \mathbf{p})\}$ accept and return $\mathbf{q}'$, otherwise return $\mathbf{q}$.
\end{enumerate}
The acceptance ratio is: 
\[\alpha(\mathbf{q}, \mathbf{p}) = \frac{w(\mathbf{q}', \mathbf{p}')}{w(\mathbf{q}, \mathbf{p})} \cdot \left( \frac{\diff (\mu_{\mathbb{S}} \circ I^{-1})}{\diff \mu_{\mathbb{S}}} \right)(\mathbf{q}, \mathbf{p}).\]
Let $\kappa: \mathbb{X}\times \Sigma_{\mathbb{X}} \rightarrow [0,1]$ (see \cref{app:iMCMCkernel}) be the probability kernel resulting from one iteration of iMCMC.
Informally, $\kappa(\mathbf{q}, A)$ gives the probability of starting with a position $\mathbf{q}$ and after one iMCMC iteration returning a position in $A$.

\begin{proposition}\label{prop:iMCMC-stationary-kernel}\emph{(iMCMC stationarity \citep{automatingIMCMC2020})} The kernel $\kappa$ is stationary with respect to the target distribution, i.e. for any $A \in \Sigma_{\mathbb{X}}$ we have $\int_{\mathbb{X}}\kappa(\mathbf{q}, A)l(\mathbf{q}) \mu_{\mathbb{X}}(\diff \mathbf{q}) = \int_{A}l(\mathbf{q}) \mu_{\mathbb{X}}(\diff \mathbf{q}).$
\end{proposition}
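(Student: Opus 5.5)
Write $\mu_{\mathbb{S}}\circ I^{-1}$ for the image measure of $\mu_{\mathbb{S}}$ under the involution. The plan is the standard involutive argument: lift everything to the extended space $\mathbb{S}$ and reduce stationarity of $\kappa$ to detailed balance for the joint chain. Writing out the kernel, for $A\in\Sigma_{\mathbb{X}}$,
\[\kappa(\mathbf{q},A)=\int_{\mathbb{Y}} p_{\mathbf{q}}(\mathbf{p})\Big[\tilde\alpha(\mathbf{q},\mathbf{p})\,\mathbbm{1}_A(\pi_1 I(\mathbf{q},\mathbf{p})) + (1-\tilde\alpha(\mathbf{q},\mathbf{p}))\,\mathbbm{1}_A(\mathbf{q})\Big]\mu_{\mathbb{Y}}(\diff\mathbf{p}).\]
Multiplying by $l(\mathbf{q})$ and integrating over $\mathbf{q}$, Fubini (all integrands nonnegative, measures $\sigma$-finite) turns the left-hand side of the stationarity equation into an integral over $\mathbb{S}$ against $w\,\diff\mu_{\mathbb{S}}$. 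Points outside $\mathbb{S}$ have $w=0$ and contribute nothing. The rejection term contributes
\[\int_{\mathbb{S}} \mathbbm{1}_A(\mathbf{q})\,w\,\diff\mu_{\mathbb{S}} - \int_{\mathbb{S}}\tilde\alpha\,\mathbbm{1}_A(\mathbf{q})\,w\,\diff\mu_{\mathbb{S}}.\]
The first of these integrates out $\mathbf{p}$ (since $p_{\mathbf{q}}$ is a probability density) to give $\int_A l\,\diff\mu_{\mathbb{X}}$. So it suffices to prove
\[\int_{\mathbb{S}}\tilde\alpha(s)\,\mathbbm{1}_A(\pi_1 I(s))\,w(s)\,\mu_{\mathbb{S}}(\diff s)=\int_{\mathbb{S}}\tilde\alpha(s)\,\mathbbm{1}_A(\pi_1 s)\,w(s)\,\mu_{\mathbb{S}}(\diff s).\]

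The key identity I would establish is the pointwise symmetry
\[\tilde\alpha(s)\,w(s)=\tilde\alpha(I s)\,w(Is)\,\rho(s),\qquad \rho=\frac{\diff(\mu_{\mathbb{S}}\circ I^{-1})}{\diff\mu_{\mathbb{S}}}.\]
Two facts feed into it. First, because $I$ is an involution, the chain rule for Radon--Nikodym derivatives gives $\rho(Is)\rho(s)=1$ for $\mu_{\mathbb{S}}$-a.e.\ $s$. Second, from the definition, $\alpha(s)=w(Is)\rho(s)/w(s)$, so $\alpha(Is)=1/\alpha(s)$. Then $\min\{1,\alpha(s)\}w(s)=\min\{w(s),w(Is)\rho(s)\}$, while $\min\{1,\alpha(Is)\}w(Is)\rho(s)=\min\{w(Is)\rho(s),w(s)\}$ after using $\rho(Is)\rho(s)=1$. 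The two expressions coincide.

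With the identity in hand, substitute it into the left-hand integral and change variables $s\mapsto Is$. Since $\int h(s)\rho(s)\,\mu_{\mathbb{S}}(\diff s)=\int h(I^{-1}t)\,\mu_{\mathbb{S}}(\diff t)$ and $I^{-1}=I$, this gives
\[\int_{\mathbb{S}}\tilde\alpha(Is)\,w(Is)\,\mathbbm{1}_A(\pi_1 Is)\,\rho(s)\,\mu_{\mathbb{S}}(\diff s)=\int_{\mathbb{S}}\tilde\alpha(t)\,w(t)\,\mathbbm{1}_A(\pi_1 t)\,\mu_{\mathbb{S}}(\diff t),\]
which is exactly the right-hand integral.

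The main obstacle is the measure-theoretic bookkeeping behind these steps, not the algebra. The first task is justifying $\rho(Is)\rho(s)=1$ a.e. I would derive it from $\mu_{\mathbb{S}}\circ I^{-1}\circ I^{-1}=\mu_{\mathbb{S}}$ by writing $\mu_{\mathbb{S}}=(\rho\cdot\mu_{\mathbb{S}})\circ I^{-1}=(\rho\circ I)\,\rho\cdot\mu_{\mathbb{S}}$ and invoking uniqueness of densities under $\sigma$-finiteness. This in turn needs $\rho>0$ a.e., which also follows from this computation. The second task is handling the points where $\alpha$ is $0$ or $\infty$. Here I would use the fact that $I$ maps $\mathbb{S}$ to itself, so $w>0$ on both $s$ and $Is$ and $\alpha$ is finite and positive wherever $\rho$ is. Any remaining null sets I would discard.
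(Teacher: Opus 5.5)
Your proof is correct. It is the standard reversibility argument for involutive MCMC on the extended space $\mathbb{S}$: the pointwise identity $\tilde\alpha(s)w(s)=\tilde\alpha(Is)w(Is)\rho(s)$ follows from $\rho(Is)\rho(s)=1$ a.e., and then you change variables under $I$. This is the argument behind the cited result, since the paper does not reprove the proposition but imports it from \citep{automatingIMCMC2020}. One detail to tidy: subtracting the rejection term tacitly assumes $\int_A l\,\diff\mu_{\mathbb{X}}<\infty$. You can avoid this by adding $\int_{\mathbb{S}}(1-\tilde\alpha(s))\,\mathbbm{1}_A(\pi_1 s)\,w(s)\,\mu_{\mathbb{S}}(\diff s)$ to both sides of your acceptance-term identity instead.
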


\subsection{Background on Hamiltonian Monte Carlo}
\label{sec:HMC}
HMC is a popular MCMC method that uses Hamiltonian dynamics on an augmented state space to propose the next sample. 
We give a brief description of HMC here, for more details see \cite{Neal2012HMC,leimkuhler2004simulatingHD,betancourt2017conceptual}.

Since we are looking at the standard HMC in this subsection our target (unnormalized) density $l$ is defined on the finite dimensional space $(\mathbb{R}^n, \Sigma_{\mathbb{R}^n}, \text{Leb}_n)$. Suppose $l$ is continuously differentiable.

\paragraph{Physical intuition.}
Imagine placing a frictionless particle on a surface whose height at position $\mathbf{q}$ equals the \emph{potential energy} $U(\mathbf{q}) = -\log l(\mathbf{q})$.
The particle naturally slides toward low-potential regions, i.e. toward regions of high $l$, but without momentum it stops at a local mode.
To explore the surface better, we periodically give the particle a random momentum kick and let it slide for a while before reading its new position.
The momentum kick helps HMC explore
the target distribution far more efficiently than random-walk
Metropolis.
The momentum is usually standard $n$-dimensional normal.
This gives the \emph{kinetic energy} $K:\mathbb{R}^n \rightarrow \mathbb{R}$ with $K(\mathbf{p}) = - \log \varphi_n(\mathbf{p}) = \sum_{i=1}^n \mathbf{p}_i^2/2$.
The Hamiltonian of the system is given by $H:\mathbb{R}^{2n} \rightarrow \mathbb{R}$ with $H(\mathbf{q}, \mathbf{p}) = U(\mathbf{q}) + K(\mathbf{p})$.
Hamiltonian equations describe how the position $\mathbf{q}$ and momentum $\mathbf{p}$ change over time: 
\begin{align*}
\frac{\diff \mathbf{q}}{\diff t} = \frac{\partial H}{\partial \mathbf{p}} = \mathbf{p}, \qquad 
\frac{\diff \mathbf{p}}{\diff t} = -\frac{\partial H}{\partial \mathbf{q}} = - \nabla U(\mathbf{q}).
\end{align*}
where $\partial K / \partial \mathbf{p} = \mathbf{p}$ since $K(\mathbf{p}) = \sum_i \mathbf{p}_i^2/2$.

HMC makes use of the Hamiltonian motion of the particle to propose the next position. 
This is done by sampling the momentum $\mathbf{p}$ from $\varphi_n$ and then simulating the trajectory of the particle with initial position $\mathbf{q}$.
After some time $t$, the particle will have momentum $\mathbf{p}'$ and position $\mathbf{q}'$, which will be the proposed position.

The canonical distribution corresponding to $H$ on the state space $(\mathbb{R}^{2n}, \Sigma_{\mathbb{R}^{2n}}, \text{Leb}_{2n})$ is given by $\zeta(\mathbf{q}, \mathbf{p})= \exp(-H(\mathbf{q}, \mathbf{p})) = l(\mathbf{q}) \varphi_n(\mathbf{p})$.
Hence, having draws $\{(\mathbf{q}^i, \mathbf{p}^i)\}_{i = 1,\dots}$ from $\zeta$ and discarding the draws for the momentum $\mathbf{p}^i$, gives us position draws $\{\mathbf{q}^i\}_{i = 1,\dots}$ from our target distribution $l$. 

\paragraph{The leapfrog integrator (kick–drift–kick).}
In almost all cases the Hamiltonian equations cannot be solved exactly, so we discretize time with the \emph{leapfrog} integrator.
One step of size $\epsilon$ alternates updating the momentum and the position:
\begin{enumerate}
  \item \textbf{Half-kick}: update momentum using the gradient of the potential for half a step, $\phi_{\epsilon/2}^P(\mathbf{q},\mathbf{p}) = (\mathbf{q},\,\mathbf{p} - \frac{\epsilon}{2}\nabla U(\mathbf{q}))$.
  \item \textbf{Drift}: update position using the new momentum, $\phi_{\epsilon}^Q(\mathbf{q},\mathbf{p}) = (\mathbf{q}+\epsilon\mathbf{p},\,\mathbf{p})$.
  \item \textbf{Half-kick}: update the momentum again for half a step, $\phi_{\epsilon/2}^P(\mathbf{q},\mathbf{p}) = (\mathbf{q},\,\mathbf{p} - \frac{\epsilon}{2}\nabla U(\mathbf{q}))$.
\end{enumerate}
One full leapfrog step is $\psi = \phi_{\epsilon/2}^P\circ\phi_{\epsilon}^Q\circ\phi_{\epsilon/2}^P$, and $L$ steps give $\Psi^{(L)} = F\circ\psi^L$, where $F(\mathbf{q},\mathbf{p}) = (\mathbf{q},-\mathbf{p})$ is the momentum flip needed for reversibility.
We omit step-size $\epsilon$ and step count $L$ from the notation when fixed.

The symmetric kick-drift-kick form of the leapfrog integrator is needed for reversibility: after negating the momentum, the same
update can be run backwards. 
This would not be true if a simpler form like kick-drift was used. 
Reversibility and volume preservation are key properties of Hamiltonian
dynamics. They are central to the proof that the HMC transition leaves the
target distribution invariant. The leapfrog integrator is used because it
preserves these properties:
\begin{proposition} \citep{Neal2012HMC} \label{prop:leapfrog-reversible-volume-preserving} The HMC leapfrog integrator is reversible, i.e. $\Psi^{(L)} = (\Psi^{(L)})^{-1}$ and volume preserving, i.e. $(\text{Leb}_{2n}\circ(\Psi^{(L)})^{-1})(A) = \text{Leb}_{2n}(A)$ for any measurable $A \in \Sigma_{\mathbb{R}^{2n}}$.
\end{proposition}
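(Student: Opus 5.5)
We will prove the two properties separately, treating $\psi$ as a composition of three elementary maps. Throughout, $U=-\log l$ is continuously differentiable, so $\nabla U$ is a well-defined continuous map $\RR^n\to\RR^n$.

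\emph{Volume preservation.}
Each shear $\phi^P_{\epsilon/2}(\mathbf{q},\mathbf{p})=(\mathbf{q},\mathbf{p}-\tfrac{\epsilon}{2}\nabla U(\mathbf{q}))$ is a bijection of $\RR^{2n}$, with inverse $(\mathbf{q},\mathbf{p})\mapsto(\mathbf{q},\mathbf{p}+\tfrac{\epsilon}{2}\nabla U(\mathbf{q}))$. We will show it preserves $\text{Leb}_{2n}$ by Fubini/Tonelli rather than by a Jacobian, since $\nabla U$ need not be differentiable. For fixed $\mathbf{q}$, the map acts on the $\mathbf{p}$-slice as a translation, and translations preserve $\text{Leb}_n$. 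Integrating the slice measures over $\mathbf{q}$ therefore gives $\text{Leb}_{2n}\circ(\phi^P_{\epsilon/2})^{-1}=\text{Leb}_{2n}$. The drift $\phi^Q_\epsilon(\mathbf{q},\mathbf{p})=(\mathbf{q}+\epsilon\mathbf{p},\mathbf{p})$ is linear with unit-determinant block-triangular matrix; equivalently, it is a slice-wise translation in $\mathbf{q}$. The flip $F$ is linear with $|\det F|=1$. All these maps are measurable bijections with measurable inverses, and pushforward respects composition. Hence $\psi$, $\psi^L$ and $\Psi^{(L)}=F\circ\psi^L$ all preserve $\text{Leb}_{2n}$, which gives $(\text{Leb}_{2n}\circ(\Psi^{(L)})^{-1})(A)=\text{Leb}_{2n}(A)$.

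\emph{Reversibility.}
The key identity to establish is $F\circ\psi\circ F=\psi^{-1}$. First we check how conjugation by $F$ acts on each elementary map:
\[
F\circ\phi^P_{\epsilon/2}\circ F=(\phi^P_{\epsilon/2})^{-1},\qquad F\circ\phi^Q_{\epsilon}\circ F=(\phi^Q_{\epsilon})^{-1}.
\]
For example, $F\phi^Q_\epsilon F(\mathbf{q},\mathbf{p})=F(\mathbf{q}-\epsilon\mathbf{p},-\mathbf{p})=(\mathbf{q}-\epsilon\mathbf{p},\mathbf{p})$, and the shear case is analogous. Since $F\circ F=\id$, we can insert $F\circ F$ between the factors and compute
\[
F\psi F=(F\phi^P F)(F\phi^Q F)(F\phi^P F)=(\phi^P)^{-1}(\phi^Q)^{-1}(\phi^P)^{-1}=(\phi^P\phi^Q\phi^P)^{-1}=\psi^{-1}.
\]
This is the only step where the symmetric kick–drift–kick form is used: the palindromic order is exactly what lets the reversed product of inverses reassemble into $\psi^{-1}$.

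Iterating the identity gives $F\psi^L F=(F\psi F)^L=\psi^{-L}$. Therefore
\[
(F\psi^L)(F\psi^L)=(F\psi^L F)\,\psi^L=\psi^{-L}\psi^L=\id,
\]
so $\Psi^{(L)}$ is an involution, i.e.\ $\Psi^{(L)}=(\Psi^{(L)})^{-1}$.

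No step is a real obstacle. The main care is in the volume-preservation argument for the momentum kick: since $U$ is only $C^1$, we cannot use a Jacobian-determinant computation and instead rely on the slice-wise translation argument with Fubini.
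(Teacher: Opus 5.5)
Your proof is correct, and it is essentially the standard argument from~\cite{Neal2012HMC}, which the paper cites without reproducing: you decompose $\psi$ into kicks, drift and flip, show each preserves $\text{Leb}_{2n}$, and get reversibility from $F\circ\psi\circ F=\psi^{-1}$ via the palindromic kick--drift--kick order. Your one deviation is a small but real improvement: you handle the kick by slice-wise translation invariance and Tonelli rather than by a unit-Jacobian shear argument. This needs the gradient field only to be measurable, which is what the paper tacitly relies on when it asserts that $I_T$ is measure preserving in the lazy variants, where $G$ and $G_T$ are merely piecewise analytic.
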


\paragraph{HMC as part of iMCMC} We follow the structure of the iMCMC framework from \cref{sec:iMCMC} with the target distribution on the space $(\mathbb{R}^n, \Sigma_{\mathbb{R}^n})$ given by the density $l$ with respect to the Lebesgue $\text{Leb}_n$ measure.
The momentum variable is the auxiliary variable from the iMCMC framework with the $n$-dimensional normal $\mathcal{N}_n$ as the auxiliary distribution, which has density $\varphi_n$ with respect to the Lebesgue measure.
The target density on the joint state space $(\mathbb{R}^{2n}, \Sigma_{\mathbb{R}^{2n}})$ is given by the density $\zeta$ with respect to $\text{Leb}_{2n}$.
Our involution now is the leapfrog integrator $\Psi^{(L)}$.

One iteration of HMC given position $\mathbf{q}, \varphi_n, l, \Psi^{(L)}$ does: 
\begin{enumerate}
  \item sample $\mathbf{p} \sim \varphi_n$
  \item get proposed position and momentum $(\mathbf{q}', \mathbf{p}') = \Psi^{(L)}(\mathbf{q}, \mathbf{p})$
  \item with probability $\min \{1, \alpha(\mathbf{q}, \mathbf{p})\}$ accept and return $\mathbf{q}'$, otherwise return $\mathbf{q}.$
\end{enumerate} 
Now we just have to make sure the correct acceptance ratio is used:
\begin{proposition}
  \label{prop:HMCacc_ratio}
  The probability kernel resulting from HMC is stationary with respect to the target density $l$, if the acceptance ratio is given by: 
  \[\alpha(\mathbf{q}, \mathbf{p}) = \frac{l(\mathbf{q}')\varphi_n(\mathbf{p}')}{l(\mathbf{q})\varphi_n(\mathbf{p})}.\]
\end{proposition}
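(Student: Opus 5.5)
The plan is to derive the statement as an instance of the iMCMC stationarity result, \cref{prop:iMCMC-stationary-kernel}. We take $\mathbb{X}=\mathbb{R}^n$ with $\mu_{\mathbb{X}}=\text{Leb}_n$, and $\mathbb{Y}=\mathbb{R}^n$ with $\mu_{\mathbb{Y}}=\text{Leb}_n$. The auxiliary density is $p_{\mathbf{q}}=\varphi_n$, independent of $\mathbf{q}$, and the involution is $I=\Psi^{(L)}$. The joint density is then $w(\mathbf{q},\mathbf{p})=l(\mathbf{q})\varphi_n(\mathbf{p})=\zeta(\mathbf{q},\mathbf{p})$. Once the hypotheses of \cref{prop:iMCMC-stationary-kernel} are checked, the iMCMC acceptance ratio $\frac{w(\mathbf{q}',\mathbf{p}')}{w(\mathbf{q},\mathbf{p})}\cdot\frac{\diff(\mu_{\mathbb{S}}\circ I^{-1})}{\diff\mu_{\mathbb{S}}}$ reduces to the claimed ratio exactly when the Radon--Nikodym factor is identically $1$. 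Stationarity of the HMC kernel then follows directly from \cref{prop:iMCMC-stationary-kernel}.

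The key steps are as follows.

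First, \cref{prop:leapfrog-reversible-volume-preserving} gives $\Psi^{(L)}=(\Psi^{(L)})^{-1}$ on all of $\mathbb{R}^{2n}$, so $\Psi^{(L)}$ is an involution there. Since $\Psi^{(L)}$ is built from compositions of the continuous maps $\phi^P$, $\phi^Q$ and $F$, it is measurable; here we use that $l$ is $C^1$, so $\nabla U$ is continuous where $l>0$.

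Second, the same proposition gives $\text{Leb}_{2n}\circ(\Psi^{(L)})^{-1}=\text{Leb}_{2n}$. Hence the Radon--Nikodym derivative exists and equals $1$.

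Third, we must check that $I$ maps $\mathbb{S}=\{(\mathbf{q},\mathbf{p})\mid l(\mathbf{q})\varphi_n(\mathbf{p})>0\}=\{l>0\}\times\mathbb{R}^n$ into itself. If this fails, a point leaving $\mathbb{S}$ can be treated as having $w(\mathbf{q}',\mathbf{p}')=0$, so $\alpha=0$ and the proposal is always rejected.

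Fourth, we substitute into the iMCMC ratio to obtain $\alpha=\frac{l(\mathbf{q}')\varphi_n(\mathbf{p}')}{l(\mathbf{q})\varphi_n(\mathbf{p})}$.

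The main obstacle is the third step: the formal requirement that $I$ be an involution on $\mathbb{S}$ rather than on all of $\mathbb{R}^{2n}$, with the restricted measure $\mu_{\mathbb{S}}$. I would handle this as follows. Let $\mathbb{S}_0=\mathbb{S}\cap(\Psi^{(L)})^{-1}(\mathbb{S})$. This set is invariant under $\Psi^{(L)}$ because $\Psi^{(L)}$ is an involution. Define $I$ to be $\Psi^{(L)}$ on $\mathbb{S}_0$ and the identity on $\mathbb{S}\setminus\mathbb{S}_0$. This $I$ is still a measurable involution of $\mathbb{S}$. It preserves $\mu_{\mathbb{S}}$, since $\Psi^{(L)}$ preserves Lebesgue measure restricted to the invariant set $\mathbb{S}_0$, and the identity trivially preserves it on the complement. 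On the complement the identity proposal coincides with the algorithm's behaviour, since the original proposal there would have zero target density and be rejected.

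Where $l$ is only PAP rather than $C^1$, one would also note that the gradient is defined almost everywhere. Non-differentiable points form a null set and do not affect the measure-theoretic argument. Under the stated $C^1$ assumption, however, this subtlety does not arise.
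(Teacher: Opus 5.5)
Your proof is correct and follows the same route as the paper. You instantiate \cref{prop:iMCMC-stationary-kernel} with Lebesgue base measures, auxiliary density $\varphi_n$ and involution $\Psi^{(L)}$, and use volume preservation (\cref{prop:leapfrog-reversible-volume-preserving}) to make the Radon--Nikodym factor equal to $1$. Your restriction to $\mathbb{S}_0=\mathbb{S}\cap(\Psi^{(L)})^{-1}(\mathbb{S})$, with the identity on $\mathbb{S}\setminus\mathbb{S}_0$, is a valid refinement that the paper's short proof glosses over (compare its later remark that $K$ may be taken to be the identity in lazyHMC1), and it yields the same kernel because those proposals would be rejected anyway.
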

\begin{proof}
  The proof follows by \cref{prop:iMCMC-stationary-kernel} and the fact that leapfrog preserves volume (contributing a Radon–Nikodym factor of 1). See \cref{proof:accHMC} for details.
\end{proof}

The acceptance ratio deduced here corresponds to the usual HMC acceptance ratio (see for example \cite{Neal2012HMC}).
If we could simulate the trajectory of the particle exactly we would always accept the proposed position. That is because Hamiltonian dynamics keeps $H$  invariant ($\frac{dH}{dt} = 0$) which would give $\alpha(\mathbf{q}, \mathbf{p}) = 1$.
So the acceptance ratio is needed to correct for the approximation error introduced by the numerical integrator. 

\paragraph{Example: Harmonic Oscillator.} Consider the case of the Harmonic Oscillator with Hamiltonian $H(\mathbf{q}, \mathbf{p}) = \frac{\mathbf{q}^2}{2} + \frac{\mathbf{p}^2}{2}$. 
The exact flow map is $\phi_{\theta}(\mathbf{q}, \mathbf{p}) = (\cos (\theta) \mathbf{q} + \sin (\theta) \mathbf{p}, -\sin (\theta) \mathbf{q} + \cos (\theta) \mathbf{p})$, so applying it will always preserve the Hamiltonian. 
We will make use of this in the following sections. 

\subsection{Framework A: Hamiltonian Monte Carlo on Rose Trees in General}
\label{sec:Framework-A}
\paragraph{Section overview.}
A \emph{rose tree} is an infinite map $\mathbf{q}:\mathcal{A}\to\mathbb{R}$ from a countable address set $\mathcal{A}$ (e.g. $\mathcal{A} = \mathbb{N}^*$, the set of finite lists of natural numbers) to the reals; it represents the full (lazy) parameter state of a probabilistic program.
(In step regression, for example, $\mathbf{q}$ stores the infinite stream of changepoints and the infinite collection of segment heights, indexed by a path in~$\mathcal{A}$.)

The challenge is that the iMCMC acceptance ratio formally involves a product over all of $\mathcal{A}$, which is infinite.
Framework~A shows how that product collapses to a finite one, provided the involution $I$ satisfies the three \cref{cond:visited-property,cond:non-visited-cancel-out,cond:involution-decomposition} below.
The key output is \cref{thm:alg1-acceptance-ratio}.
The three lazy HMC variants below are all instances of Framework~A: \cref{prop:HMCmod-properties} verifies \cref{cond:visited-property,cond:non-visited-cancel-out,cond:involution-decomposition}
for lazyHMC1, and \cref{sec:LazyHMC-Hilbert,sec:LazyHMC-with-original-integrator}
give the corresponding arguments for lazyHMC2 and lazyHMC3.
Readers mainly interested in the algorithmic construction may skip the measure-theoretic details and continue from \cref{sec:LazyHMC}.

\paragraph{Site measures.} Let $(X, \Sigma_{X}, \mu_{X})$ and $(Y, \Sigma_Y, \mu_Y)$ be two measure spaces  with $\mu_X$ and $\mu_Y$ probability measures. 
For simplicity we take $X$ and $Y$ to be measurable subsets of $\mathbb{R}$ with Borel $\sigma$-algebras $\Sigma_X = \{A \cap X \mid A \in \Sigma_{\mathbb{R}}\}$ and $\Sigma_Y= \{A \cap Y \mid A \in \Sigma_{\mathbb{R}}\}$, where $\Sigma_{\mathbb{R}}$ is the Borel $\sigma$-algebra of $\mathbb{R}$.
Assume $\mu_X$ and $\mu_Y$ are absolutely continuous with respect to the Lebesgue measure and let $\varphi_X$ and $\varphi_Y$ denote the corresponding derivatives. 
Let $(X^n, \Sigma_{X^n}, \mu_{X^n})$ be the product measure space with $\Sigma_{X^n} = \Sigma_X^{\otimes n} = \{A \cap X^n \mid A \in \Sigma_{\mathbb{R}^n}\}$ and measure $\mu_{X^n} = \mu_X^{\otimes n}$. We similarly define $(Y^n, \Sigma_{Y^n}, \mu_{Y^n})$.

\paragraph{Rose trees measures.}
Let $(\mathbb{X}, \Sigma_{\mathbb{X}})$ be the measurable space of the rose trees where each node contains a real number: $\mathbb{X} = X^{\mathcal{A}}$.
The $\sigma$-algebra $\Sigma_{\mathbb{X}}$ is the $\sigma$-algebra generated by cylindrical sets, i.e. sets of this form: $\{\mathbf{q} \in \mathbb{X} \mid (\mathbf{q}_{i_1}, \dots, \mathbf{q}_{i_n}) \in A\}$ with $i_1, \dots, i_n \in \mathcal{A}$ and $A \in \Sigma_{X^n}$.
The measure $\mu_{\mathbb{X}}$ on $\mathbb{X}$ is the countably-infinite product measure of $(X, \Sigma_X, \mu_X)$, given by the Kolmogorov extension theorem. 
This means that $\mu_{\mathbb{X}}(\{\mathbf{q} \in \mathbb{X} \mid (\mathbf{q}_{i_1}, \dots, \mathbf{q}_{i_n}) \in A \}) = \int_{X^n} [(\mathbf{q}_{i_1}, \dots, \mathbf{q}_{i_n}) \in A] \mu_X(\diff \mathbf{q}_{i_1}) \dots \mu_X(\diff \mathbf{q}_{i_n}) = \mu_{X^n} (A)$, where $\{\mathbf{q} \in \mathbb{X} \mid (\mathbf{q}_{i_1}, \dots, \mathbf{q}_{i_n}) \in A \}$ are the rose trees for which we restrict the values at nodes $i_1, \dots, i_n$ to be in the measurable set $A \in \Sigma_{X^n}$. 
$\mu_{\mathbb{X}}$ is a probability measure. 
Similarly for the momentum rose trees, we get the measure space $(\mathbb{Y}, \Sigma_{\mathbb{Y}}, \mu_{\mathbb{Y}})$ with $\mu_{\mathbb{Y}}$ as the countably-infinite product measure of $(Y, \Sigma_Y, \mu_Y)$. 

\paragraph{Target and auxiliary distributions.} Let $l: \mathbb{X} \rightarrow \mathbb{R}_{\geq 0}$ be the density of the unnormalized target distribution on $\mathbb{X}$ defined by the program.
We assume that $l$ is PACAP (see \cref{sec:AD}).
The target unnormalized measure is then $\mu_l(A) = \int [\mathbf{q} \in A] l(\mathbf{q}) \mu_{\mathbb{X}}(\diff \mathbf{q}),$ for any measurable $A \in \Sigma_{\mathbb{X}}$.
Let the normalized target measure be $\nu(\cdot) = \frac{\mu_l(\cdot)}{\mu_l(\mathbb{X})}.$
The auxiliary variable $\mathbf{p}$ has density $p_{\mathbf{q}}(\mathbf{p}) = 1$ w.r.t. $\mu_{\mathbb{Y}}$.

\paragraph{State space.} Let $\mathbb{X} \times \mathbb{Y}$ be the state space composed of position and momentum pairs, equipped with the $\sigma$-algebra $\Sigma_{\mathbb{X}} \otimes \Sigma_{\mathbb{Y}}$ and measure $\mu_{\mathbb{X}} \times \mu_{\mathbb{Y}}$.
We are now considering only the pairs for which $l(\mathbf{q})>0$, i.e. let $\mathbb{S} = \Supp(l) \times \mathbb{Y}$ with $\sigma$-algebra $\Sigma_{\mathbb{S}}=\{A \cap \mathbb{S} \mid A \in \Sigma_{\mathbb{X}} \otimes \Sigma_{\mathbb{Y}} \}$ and measure $\mu_{\mathbb{S}} = \mu_{\mathbb{X}} \times \mu_{\mathbb{Y}}$, where $\Supp(l) = \{\mathbf{q} \in \mathbb{X} \mid l(\mathbf{q})>0\}$.
Since $p_{\mathbf{q}}(\mathbf{p}) = 1$, the target density on the joint space simplifies to $w(\mathbf{q}, \mathbf{p}) = l(\mathbf{q})$.

Let $f: \mathbb{X} \rightarrow Z$ be the result function described by the given probabilistic program. 
The samples of $\mathbb{X}$ can be pushed-forward to $Z$.

\paragraph{Framework A} One iteration given position $\mathbf{q}, \mu_{\mathbb{Y}}, \mu_X, \mu_Y, l, I, v$ does:
\begin{enumerate}
  \item sample $\mathbf{p} \sim \mu_{\mathbb{Y}}$ lazily (since only finitely many components of the infinite-dimensional $\mathbf{p}$ are accessed, we sample each component on-demand)
  \item get proposed position and momentum $(\mathbf{q}', \mathbf{p}') = I(\mathbf{q}, \mathbf{p})$
  \item with probability $\min \{1, \alpha(\mathbf{q}, \mathbf{p})\}$ accept and return $\mathbf{q}'$, otherwise reject and return $\mathbf{q}$.
\end{enumerate} 

\paragraph{Involution conditions.}
For the acceptance ratio to exist and be computable, we assume the following sufficient properties. These conditions are abstract here, but \cref{prop:HMCmod-properties} shows that they hold for the lazy HMC construction below. 
Suppose there exist a measurable function $v: \mathbb{S} \rightarrow \mathcal{P}_{fin}(\mathcal{A})$ and measure preserving (w.r.t. the Lebesgue measure) involutions $I': \mathbb{R}^{2} \rightarrow \mathbb{R}^{2}$ and $I_T: \mathbb{R}^{2|T|} \rightharpoonup \mathbb{R}^{2|T|}$ for all $T \in \mathcal{P}_{fin}(\mathcal{A})$ which satisfy the following conditions: 
\begin{conds}
    \item \label{cond:visited-property} \textit{(Visited set is a cylinder.)} If $(\mathbf{q}, \mathbf{p}) \in v^{-1}(T)$ then $\{(\mathbf{x}, \mathbf{y}) \in \mathbb{S} \mid \mathbf{x}_T = \mathbf{q}_T, \mathbf{y}_T = \mathbf{p}_{T}\} \subset v^{-1}(T)$ (equivalently, $v^{-1}(T)$ is a cylinder set with base $C_T \in \Sigma_{\mathbb{R}^{2|T|}}$).
    \item \label{cond:involution-decomposition}\textit{(Involution decomposes along visited/unvisited split.)} For all $T \in \mathcal{P}_{fin}(\mathcal{A})$ and $(\mathbf{q}, \mathbf{p}) \in v^{-1}(T)$: $(\mathbf{q}'_T, \mathbf{p}'_T) = I_T(\mathbf{q}_T, \mathbf{p}_T)$ and $(\mathbf{q}'_i, \mathbf{p}'_i) = I'(\mathbf{q}_i, \mathbf{p}_i)$ for all $i \in \mathcal{A}\setminus T$,  where $(\mathbf{q}', \mathbf{p}') = I(\mathbf{q}, \mathbf{p})$. 
    \item \label{cond:non-visited-cancel-out} \textit{(Unvisited sites preserve density.)} $\varphi_{X}(q')\varphi_Y(p') = \varphi_{X}(q)\varphi_Y(p)$, where $(q', p') = I'(q, p).$
\end{conds}

\paragraph{Step regression example.}
In the step regression model of \cref{sec:stepReg}, the address set $\mathcal{A}$ contains addresses $a_j, b_j$ for $j = 1, 2, \ldots$, where $\mathbf{q}_{a_j}$ is the seed used to generate the $j$th segment height and $\mathbf{q}_{b_j}$ is the seed used to generate the $j$th exponential increment of the changepoint process.
For a given trace the visited set $T$ contains the addresses of those sites used by the finitely many segments that overlap the observed $x$-range (see \cref{fig:hmc-dim-change}).
\Cref{cond:visited-property} then says that any other rose tree agreeing with $\mathbf{q}$ on those visited sites visits the same set $T$: the values of the unvisited sites cannot change which segments overlap the data.
\Cref{cond:involution-decomposition} says that the proposed state is obtained by running a finite-dimensional involution on the visited sites, while each unvisited site $a_k$ or $b_k$ (for $k$ beyond the segments that cover the data) is transformed independently by $I'$.
Finally, \cref{cond:non-visited-cancel-out} says that this independent transformation $I'$ leaves the prior density on those unvisited sites unchanged, so their contributions cancel in the acceptance ratio.

\paragraph{Acceptance ratio.} We can reduce the Radon-Nikodym derivative needed in the acceptance ratio to a ratio of finite products (the proof can be found in \cref{prop:R-N-derivative-proof}):
\begin{proposition}\label{prop:R-N-derivative}
Let $(\mathbf{q}', \mathbf{p}') = I(\mathbf{q}, \mathbf{p})$, then $\mu_{\mathbb{S}}$ - almost everywhere we have:
\[\left( \frac{\diff (\mu_{\mathbb{S}} \circ I^{-1})}{\diff \mu_{\mathbb{S}}} \right)(\mathbf{q}, \mathbf{p}) = \frac{\prod_{i \in v(\mathbf{q}, \mathbf{p})} \varphi_X(\mathbf{q}'_i) \varphi_Y(\mathbf{p}'_i)}{\prod_{i \in v(\mathbf{q}, \mathbf{p})} \varphi_X(\mathbf{q}_i) \varphi_Y(\mathbf{p}_i)}.\]

\end{proposition}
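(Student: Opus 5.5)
The proof computes the pushforward $\mu_{\mathbb{S}}\circ I^{-1}$ explicitly on the pieces of a countable partition of $\mathbb{S}$ and then glues the local Radon--Nikodym derivatives together. Since $v$ takes values in the countable set $\mathcal{P}_{fin}(\mathcal{A})$, the sets $S_T = v^{-1}(T)$ form a countable measurable partition of $\mathbb{S}$. It therefore suffices to show that for every $T$ and every measurable $A\subseteq S_T$,
\[
\mu_{\mathbb{S}}(I^{-1}(A)) = \int_A \frac{\prod_{i\in T}\varphi_X(\mathbf{q}'_i)\varphi_Y(\mathbf{p}'_i)}{\prod_{i\in T}\varphi_X(\mathbf{q}_i)\varphi_Y(\mathbf{p}_i)}\,\mu_{\mathbb{S}}(\diff(\mathbf{q},\mathbf{p})).
\]
Since $I$ is an involution, $I^{-1}(A)=I(A)$, so we are really computing $\mu_{\mathbb{S}}(I(A))$.

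By \cref{cond:visited-property}, $S_T$ is a cylinder over a base $C_T\subseteq\mathbb{R}^{2|T|}$, so $S_T \cong C_T\times (X\times Y)^{\mathcal{A}\setminus T}$ after reordering coordinates. Under this identification, $\mu_{\mathbb{S}}$ restricted to $S_T$ is the product of $\prod_{i\in T}\varphi_X\varphi_Y\,\diff\mathrm{Leb}_{2|T|}$ on $C_T$ with the infinite product measure $(\mu_X\otimes\mu_Y)^{\otimes(\mathcal{A}\setminus T)}$. By \cref{cond:involution-decomposition}, $I$ acts on $S_T$ as the product map $I_T\times \prod_{i\notin T} I'$.

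The two factors are handled separately.

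\textbf{Unvisited factor.} $I'$ preserves Lebesgue measure on $\mathbb{R}^2$, and by \cref{cond:non-visited-cancel-out} it also preserves the density $\varphi_X\varphi_Y$. Hence $I'$ preserves $\mu_X\otimes\mu_Y$. The infinite product map $\prod_{i\notin T}I'$ then preserves the infinite product measure: it preserves the measure of every finite-dimensional cylinder, and these cylinders form a $\pi$-system generating the $\sigma$-algebra, so the Kolmogorov extension uniqueness argument applies.

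\textbf{Visited factor.} $I_T$ is Lebesgue measure preserving on $\mathbb{R}^{2|T|}$. The change of variables formula, with unit Jacobian, shows that the pushforward of $\prod_{i\in T}\varphi_X\varphi_Y\,\diff\mathrm{Leb}$ under $I_T$ has exactly the stated ratio of finite products as its density.

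Combining the two factors, it suffices to verify the displayed identity for measurable rectangles $A = A_T\times A_{\mathrm{rest}}$. These form a $\pi$-system, and both sides are finite measures in $A$, so a monotone class argument extends the identity to all measurable $A\subseteq S_T$. Summing over $T$ then gives the Radon--Nikodym derivative $\mu_{\mathbb{S}}$-almost everywhere.

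\textbf{Main obstacle.} I expect the main difficulty to be that $I$ maps $S_T$ into $\mathbb{S}$ but not necessarily onto $S_T$ itself. The image $I(S_T)$ may lie in other pieces $S_{T'}$, and I have not ruled this out. If this causes trouble, my fallback is to use only $S_T$ as the domain together with its image, and to note that the product form of $I$ on $S_T$ already describes the measure $\mu_{\mathbb{S}}\circ I^{-1}$ restricted to $I(S_T)$, so no consistency between different pieces is required. A secondary technicality is that $I_T$ is only partial. I will take its domain to contain $C_T$, and I will discard $\mu_{\mathbb{S}}$-null boundary sets arising from the restriction to $\Supp(l)$.
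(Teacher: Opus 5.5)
Your proposal is correct and is essentially the paper's proof: partition $\mathbb{S}$ by the pieces $v^{-1}(T)$, use \cref{cond:visited-property,cond:involution-decomposition} to see $I$ as $I_T$ on the visited coordinates and $I'$ on the rest, combine the unit-Jacobian change of variables with \cref{cond:non-visited-cancel-out}, extend by a $\pi$--$\lambda$ argument, and sum over $T$ (the paper checks the identity on finite cylinder sets whose coordinate set is enlarged to contain $T$, absorbing the finitely many constrained unvisited coordinates into a single finite-dimensional change of variables rather than proving invariance of the infinite tail under $\prod_{i\notin T} I'$, but this is only bookkeeping). The obstacle you anticipate does not arise: you compute $\mu_{\mathbb{S}}(I^{-1}(A))=\mu_{\mathbb{S}}(I(A))$ from the forward action of $I$ on $S_T$, and the sets $I(A\cap S_T)$ are pairwise disjoint because $I$ is a bijection, so you never need $I(S_T)\subseteq S_T$.
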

\begin{theorem}
  \label{thm:alg1-acceptance-ratio}
Consider the following acceptance ratio in Framework A:
\[\alpha(\mathbf{q}, \mathbf{p}) = \frac{w(\mathbf{q}', \mathbf{p}')}{w(\mathbf{q}, \mathbf{p})} \cdot \frac{\prod_{i \in v(\mathbf{q}, \mathbf{p})} \varphi_X(\mathbf{q}'_i) \varphi_Y(\mathbf{p}'_i)}{\prod_{i \in v(\mathbf{q}, \mathbf{p})} \varphi_X(\mathbf{q}_i) \varphi_Y(\mathbf{p}_i)} = \frac{l(\mathbf{q}')}{l(\mathbf{q})} \cdot \frac{\prod_{i \in v(\mathbf{q}, \mathbf{p})} \varphi_X(\mathbf{q}'_i) \varphi_Y(\mathbf{p}'_i)}{\prod_{i \in v(\mathbf{q}, \mathbf{p})} \varphi_X(\mathbf{q}_i) \varphi_Y(\mathbf{p}_i)}.\]
Then the resulting kernel is stationary with respect to the target measure $\nu$.
\end{theorem}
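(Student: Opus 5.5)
The plan is to reduce \cref{thm:alg1-acceptance-ratio} to the general iMCMC stationarity result, \cref{prop:iMCMC-stationary-kernel}, instantiated with the data of Framework~A. Take the auxiliary density $p_{\mathbf{q}}(\mathbf{p}) = 1$ with respect to $\mu_{\mathbb{Y}}$, so that $w(\mathbf{q},\mathbf{p}) = l(\mathbf{q})$. The state space is $\mathbb{S} = \Supp(l)\times\mathbb{Y}$, and the involution is $I$. With these choices the iMCMC acceptance ratio is
\[\frac{w(\mathbf{q}',\mathbf{p}')}{w(\mathbf{q},\mathbf{p})}\cdot\left(\frac{\diff(\mu_{\mathbb{S}}\circ I^{-1})}{\diff\mu_{\mathbb{S}}}\right)(\mathbf{q},\mathbf{p}),\]
and the first factor is literally $l(\mathbf{q}')/l(\mathbf{q})$. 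This accounts for the second equality in the statement.

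The substantive step is to replace the Radon--Nikodym factor by the finite product ratio. Here I would invoke \cref{prop:R-N-derivative}, which gives exactly this product over $v(\mathbf{q},\mathbf{p})$, $\mu_{\mathbb{S}}$-almost everywhere. Before using it, I would check the hypotheses iMCMC needs. First, $I$ must be a measurable involution of $\mathbb{S}$. This follows from \cref{cond:involution-decomposition}: on each piece $v^{-1}(T)$, which is a measurable cylinder by \cref{cond:visited-property}, $I$ acts as $I_T$ on the $T$-coordinates and as $I'$ on every other coordinate, and both of these are involutions. Second, the Radon--Nikodym derivative must exist. This is the content of \cref{prop:R-N-derivative}, taken as given.

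The main obstacle is that the acceptance ratio in the statement agrees with the iMCMC ratio only almost everywhere, whereas the kernel is defined pointwise. I would argue that altering $\alpha$ on a $\mu_{\mathbb{S}}$-null set does not change the kernel integrated against $l\,\diff\mu_{\mathbb{X}}$. The relevant kernel quantity is
\[\int \kappa(\mathbf{q},A)\,l(\mathbf{q})\,\mu_{\mathbb{X}}(\diff\mathbf{q}) = \int\!\!\int \Big(\min\{1,\alpha\}[\mathbf{q}'\in A] + (1-\min\{1,\alpha\})[\mathbf{q}\in A]\Big)\, l(\mathbf{q})\,\mu_{\mathbb{Y}}(\diff\mathbf{p})\,\mu_{\mathbb{X}}(\diff\mathbf{q}),\]
which is an integral against $w\,\diff\mu_{\mathbb{S}}$. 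Changing $\alpha$ on a null set therefore leaves it unchanged, and so the stationarity identity transfers.

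A secondary point is that the target is stated for the normalized measure $\nu$, while iMCMC is phrased for the unnormalized $l$. The stationarity equation is linear in $l$, so dividing both sides by $\mu_l(\mathbb{X})$, which is assumed finite and nonzero, gives invariance of $\nu$. Finally, I would record that states with $l(\mathbf{q}')=0$ are handled by the convention that $\mathbb{S}$ excludes them. If $I$ could leave $\mathbb{S}$, I would set $\alpha=0$ there, which is consistent with $w=0$. With these points checked, \cref{prop:iMCMC-stationary-kernel} yields the theorem.
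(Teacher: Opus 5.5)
Your proposal is correct and follows the paper's argument, which is exactly the combination of \cref{prop:R-N-derivative} (which identifies the Radon--Nikodym factor with the finite product over $v(\mathbf{q},\mathbf{p})$) and \cref{prop:iMCMC-stationary-kernel}; your remarks on null sets and normalization are harmless extra care. One small inaccuracy: involutivity of $I$ does not follow from \cref{cond:involution-decomposition} alone, since you would also need $v\circ I = v$ so that $I$ maps $v^{-1}(T)$ into itself. It is, however, a standing assumption of Framework~A inherited from the iMCMC setup, so your argument is unaffected.
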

The proof follows  by \cref{prop:R-N-derivative} and \cref{prop:iMCMC-stationary-kernel}.

\paragraph{Visited sites.} Since $l$ is PACAP, there exists a countable partition $\mathbb{X} = \uplus_{i=1}^{\infty} \mathbb{X}_i$ into analytic cylinders with given finite supports $T_i \subset \mathcal{A}$ with the following property. 
For each $i$, there exists an analytic function $l_i:V \rightarrow \mathbb{R}$, where $V \subseteq \mathbb{R}^{T_i}$ is open such that $\mathbb{X}_i \subseteq V \times \mathbb{R}^{\mathcal{A} \setminus T_i}$ and $l(\mathbf{q}) = l_{i}(\mathbf{q}_{T_i})$ when $\mathbf{q} \in \mathbb{X}_i$.
Let $v_l:\mathbb{X} \rightarrow \mathcal{P}_{fin}(\mathcal{A})$ be the function given by $v_l(\mathbf{q}) = T_i$ if $\mathbf{q} \in \mathbb{X}_i$. This is measurable because each analytic cylinder $\mathbb{X}_i$ is a cylinder set in $\mathbb{X}$, hence in $\Sigma_{\mathbb{X}}$.
We call $v_l$ the \emph{visited sites} function as, given a rose tree $\mathbf{q}$, it tells us it is enough to `visit' the sites with addresses $v_l(\mathbf{q})$ to compute $l(\mathbf{q})$, i.e. the values at the sites are needed for the computation. 
For step regression, $v_l(\mathbf{q})$ is the set of changepoint and height addresses used by the segments that overlap the data (see \cref{fig:hmc-dim-change}).

\paragraph{The gradient maps.}\looseness=-1
Moreover, we can define the measurable map $G: \mathbb{X} \rightarrow \mathbb{R}^{\mathcal{A}}$ such that for $\mathbf{q} \in \mathbb{X}_i \cap \Supp(l)$ we have $G(\mathbf{q}) = \mathbf{x}$ with $\mathbf{x}_{T_i} = -\nabla \log l_i(\mathbf{q}_{T_i})$ and $\mathbf{x}_j = 0$ for any $j \in \mathcal{A}\setminus T_i$.
If $\mathbf{q} \notin \Supp(l)$ we let $G(\mathbf{q})$ be the rose tree with all its nodes equal to $0$.
Let $T$ be a finite subset of $\mathcal{A}$ and consider the set $v_l^{-1}(\mathcal{P}(T))$ of positions for which the visited sites form a subset of $T$. 
Then $v_l^{-1}(\mathcal{P}(T))$ can be written as the union of the analytic cylinders $\mathbb{X}_{n_i}$ with supports $T_{n_i} \subseteq T$: $v_l^{-1}(\mathcal{P}(T))=\uplus_{i=1}^{\infty}\mathbb{X}_{n_i}$.
Hence, $v_l^{-1}(\mathcal{P}(T)) = \{\mathbf{q} \mid \mathbf{q}_T \in D_T\}$ for some measurable set $D_T \in \Sigma_{X^{|T|}}$.
Then there exists a measurable map $G_T: D_T \rightarrow \mathbb{R}^{\mathcal{A}}$ such that for any $\mathbf{q} \in v_l^{-1}(\mathcal{P}(T))$ we get $G(\mathbf{q}) = G_T(\mathbf{q}_T)$.

In the following sections we present different methods inspired by the HMC integrator.

\subsection{LazyHMC1: First Instantiation of Framework A}
\label{sec:LazyHMC}

Let $\mu_X$, $\mu_Y$ be the probability measures given by the standard normal distribution $\mathcal{N}(0, 1)$, so $\varphi_X$ and $\varphi_Y$ are the probability density function of $\mathcal{N}(0, 1)$.
Fix $\theta\in (0, \pi/2)$ and let $\epsilon_q = \sin (\theta)$ and $\epsilon_p = \tan (\theta/2)$.
Consider the following maps: 
  \begin{align*}
      \psi = \phi_{\epsilon_p}^P \circ \phi_{\epsilon_q}^Q \circ \phi_{\epsilon_p}^P, \qquad &F(\mathbf{q}, \mathbf{p})  = (\mathbf{q}, -\mathbf{p})\\
      \phi_{t}^P(\mathbf{q}, \mathbf{p})  = (\mathbf{q}, \mathbf{p} - t\mathbf{q} - t G(\mathbf{q})), \qquad &\phi_{t}^Q(\mathbf{q}, \mathbf{p})  = (\mathbf{q} + t\mathbf{p}, \mathbf{p}).
  \end{align*}
The added term $t \mathbf{q}$ is to account for the gradient of the log prior probability for each site, as each site is sampled from the normal distribution $\mathcal{N}(0, 1)$.
Let $\tau: \mathbb{R}^2 \rightarrow \mathbb{R}^2$ be the clockwise rotation by $\theta$: $\tau(q, p) = (\cos (\theta)q+\sin(\theta)p, -\sin(\theta)q +\cos (\theta)p)$.
The constants $\epsilon_q=\sin\theta$ and $\epsilon_p=\tan(\theta/2)$ are chosen so that, on any site with $G(\mathbf{q})_a=0$, the leapfrog step $\psi$ acts on $(q_a,p_a)$ as $\tau$ (\cref{proof:rotation-constants}).

\paragraph{Visited sites of trajectories.}
Let $v: \mathbb{S} \rightarrow \mathcal{P}_{fin}(\mathcal{A})$ be given by $v(\mathbf{q}, \mathbf{p}) = \cup_{i=0}^L v_l(\mathbf{q}^{(i)})$, where $(\mathbf{q}^{(i)}, \mathbf{p}^{(i)}) = \psi^i(\mathbf{q}, \mathbf{p})$ (for $i = 0, \dots, L$) are the states in the trajectory after $L$ leapfrog steps starting from $(\mathbf{q}, \mathbf{p})$.
The function $v$ tells us which sites need to be visited in both $\mathbf{q}$ and $\mathbf{p}$ in order for the weight $l(\mathbf{q}^{(L)})$ of the final position to be computed.
Notice that $v(\mathbf{q}, \mathbf{p}) = v(F \circ \psi^L(\mathbf{q}, \mathbf{p}))$ since
$L$ leapfrog steps from $F \circ \psi^L(\mathbf{q}, \mathbf{p})$ give the same positions $\mathbf{p}^{(i)}$ in the trajectory. 
For each position $\mathbf{q}^{(i)}$ we only need to visit sites in $v_l(\mathbf{q}^{(i)})$ to compute $l(\mathbf{q}^{(i)})$ and hence $G(\mathbf{q}^{(i)})$.

\paragraph{Constructing the involution on $\mathbb{S}$.}
It is easy to check that the map $F \circ \psi^L$ is an involution on $\mathbb{X} \times \mathbb{Y}$.
However, even if $(\mathbf{q}, \mathbf{p}) \in \mathbb{S}$, applying $ F \circ \psi^L$ on it might give us a state $(\mathbf{q}^{(L)}, \mathbf{p}^{(L)})$ in $\mathbb{X} \times \mathbb{Y} \setminus \mathbb{S}$, i.e. $l(\mathbf{q}^{(L)}) = 0$.
Since we want to have an involution on $\mathbb{S}$, i.e. on the states which have non-zero probability, we can define $I: \mathbb{S} \rightarrow \mathbb{S}$ the following way: 
\begin{equation*}
    I(\mathbf{q}, \mathbf{p}) := 
    \begin{cases}
        (F \circ \psi^L)(\mathbf{q}, \mathbf{p}) & \text{ if }  (F \circ \psi^L)(\mathbf{q}, \mathbf{p}) \in \mathbb{S}\\
        K(\mathbf{q}, \mathbf{p}) &  \text{ otherwise},
    \end{cases}
\end{equation*}
where $K(\mathbf{q}, \mathbf{p}) = (\mathbf{q}', \mathbf{p}')$ with $(\mathbf{q}', \mathbf{p}')_{v(\mathbf{q}, \mathbf{p})} = (\mathbf{q}, \mathbf{p})_{v(\mathbf{q}, \mathbf{p})}$ and $(\mathbf{q}', \mathbf{p}')_{\mathcal{A} \setminus v(\mathbf{q}, \mathbf{p})} = (F \circ \psi^L)(\mathbf{q}, \mathbf{p})_{\mathcal{A} \setminus v(\mathbf{q}, \mathbf{p})}$.
From the definition of $v$ and $v_l$ it follows that if $l(\mathbf{q})>0$ then also $l(\mathbf{q}')>0$, so $K(\mathbf{q}, \mathbf{p})$ is in $\mathbb{S}$.
Since $v(K(\mathbf{q}, \mathbf{p})) = v(\mathbf{q}, \mathbf{p})$ we have that $K$ is an involution on the measurable set $\mathbb{S} \setminus (F \circ \psi^L)^{-1}(\mathbb{X} \times \mathbb{Y}\setminus \mathbb{S})$.

\begin{proposition}
  \label{prop:HMCmod-properties}
  $I$ satisfies the \cref{cond:visited-property,cond:non-visited-cancel-out,cond:involution-decomposition}.
\end{proposition}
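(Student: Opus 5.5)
We take $I' = F\circ\tau^L$, i.e.\ rotation by $L\theta$ followed by the momentum flip, and $I_T$ the restriction of $I$ to the coordinates in $T$, made precise below. Condition \ref{cond:non-visited-cancel-out} is then immediate. Rotations and the flip $p\mapsto -p$ preserve $q^2+p^2$, and $\varphi_X(q)\varphi_Y(p)=\frac{1}{2\pi}e^{-(q^2+p^2)/2}$, so the product density is unchanged. Moreover $I'$ is linear with determinant $\pm1$, hence Lebesgue measure preserving. It is an involution because $F\tau F=\tau^{-1}$, which gives $(F\tau^L)^2=\mathrm{id}$.

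For \ref{cond:involution-decomposition} we first prove a site-wise locality lemma by induction on the number of leapfrog steps. Take $a\notin v(\mathbf q,\mathbf p)$. Then $a\notin v_l(\mathbf q^{(i)})$ for every $i\le L$, so $G(\mathbf q^{(i)})_a=0$ along the whole trajectory. Each half-kick therefore acts on site $a$ as $p_a\mapsto p_a-\epsilon_p q_a$, and each drift as $q_a\mapsto q_a+\epsilon_q p_a$. By the computation in \cref{proof:rotation-constants}, with $\epsilon_q=\sin\theta$ and $\epsilon_p=\tan(\theta/2)$, one step $\psi$ acts on $(q_a,p_a)$ exactly as $\tau$. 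Hence $(F\circ\psi^L)(\mathbf q,\mathbf p)_a = I'(\mathbf q_a,\mathbf p_a)$. This holds in both branches of $I$, since $K$ also copies $F\circ\psi^L$ on unvisited sites.

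On the visited sites $T=v(\mathbf q,\mathbf p)$, the map $\psi$ acts on the $T$-coordinates through $G(\mathbf q^{(i)})=G_T(\mathbf q^{(i)}_T)$. This uses $v_l(\mathbf q^{(i)})\subseteq T$ and the fact that $v_l^{-1}(\mathcal P(T))$ is a cylinder on $T$. So the $T$-coordinates of the trajectory depend only on $(\mathbf q_T,\mathbf p_T)$. Define the partial map $I_T$ on the base $C_T$: it is $F\circ\psi_T^L$ if the endpoint lies in $\Supp(l)$, and the identity otherwise. Whether the endpoint lies in $\Supp(l)$ is itself decided by the $T$-coordinates, since $v_l(\mathbf q^{(L)})\subseteq T$. $I_T$ is an involution because $v$ is invariant under $F\circ\psi^L$ and under $K$, as noted in the text. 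It is volume preserving because each shear $\phi^P$ and $\phi^Q$ has unit Jacobian on $\mathbb R^{2|T|}$, by the same argument as \cref{prop:leapfrog-reversible-volume-preserving}, and the identity branch is trivially so.

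The main obstacle is \ref{cond:visited-property}. Suppose $(\mathbf x,\mathbf y)\in\mathbb S$ agrees with $(\mathbf q,\mathbf p)$ on $T$; we must show $v(\mathbf x,\mathbf y)=T$. We argue by induction on $i$ that $\mathbf x^{(i)}_T=\mathbf q^{(i)}_T$ and $v_l(\mathbf x^{(i)})=v_l(\mathbf q^{(i)})$. Given the first claim at step $i$, the point $\mathbf x^{(i)}$ lies in the same analytic cylinder $\mathbb X_j$ as $\mathbf q^{(i)}$. The reason is that $\mathbb X_j$ is determined by the coordinates in $T_j\subseteq T$, and the partition pieces are disjoint, so the piece containing a point is fixed once its $T_j$-coordinates are fixed. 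This gives the second claim, and then $G(\mathbf x^{(i)})_T=G(\mathbf q^{(i)})_T$ and $G(\mathbf x^{(i)})$ vanishes outside $T$. Hence the next $T$-coordinates agree and the induction continues. Combining the steps gives $v(\mathbf x,\mathbf y)=\bigcup_i v_l(\mathbf x^{(i)})=T$. Measurability of $v$ follows because $v$ is a finite union of compositions of the measurable maps $v_l$ and $\psi$.
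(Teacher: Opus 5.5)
Your proposal is correct and follows essentially the same route as the paper: the same $I' = F'\circ\tau^L$, the same $I_T$ with an identity branch when the endpoint leaves $\Supp(l)$, and the same induction along the trajectory for (\ref{cond:visited-property}), where you also make explicit a step the paper leaves implicit, namely that $\mathbf{x}^{(i)}$ lands in the same analytic cylinder as $\mathbf{q}^{(i)}$ and so has the same $v_l$ and the same gradient on $T$. One small tightening: $G_T$ is only measurable (piecewise analytic), so justify volume preservation of the kicks by Fubini rather than by a Jacobian computation, since a shear $(\mathbf{q},\mathbf{p})\mapsto(\mathbf{q},\mathbf{p}+h(\mathbf{q}))$ preserves Lebesgue measure for any measurable $h$.
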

\begin{proof}
  We give the intuition for each condition, the full argument can be found in \cref{proof:HMC1}.

\emph{\Cref{cond:non-visited-cancel-out}.}
On any coordinate $j$ outside $v(\mathbf{q},\mathbf{p})$ the gradient $G(\mathbf{q}^{(i)})_j$ is $0$ for every $i$, so the leapfrog step $\psi$ acts on the $j$-th coordinate as the rotation $\tau$. 
After $L$ steps and the momentum flip, the induced map on each unvisited site is $I' = F'\circ\tau^L$, which preserves both Lebesgue measure and the normal product density $\varphi_X \times \varphi_Y$. Hence the unvisited-site factors cancel in the acceptance ratio.

\emph{\Cref{cond:visited-property}.}
Let $T$ be a finite subset of $\mathcal{A}$ and let $(\mathbf{q}, \mathbf{p}) \in \mathbb{S}$ with $v(\mathbf{q}, \mathbf{p}) = T$. 
By induction on $i$, the trajectory coordinates $(\mathbf{q}^{(i)}_T,\mathbf{p}^{(i)}_T)$ depend only on $(\mathbf{q}_T,\mathbf{p}_T)$, since the gradient at each step is supported in $T = v(\mathbf{q},\mathbf{p})$. 
So $v^{-1}(T)$ is a cylinder set determined by its $T$-coordinates.

\emph{\Cref{cond:involution-decomposition}.}
The previous two observations combine to give a clean decomposition: on the visited coordinates $I$ restricts to the finite-dimensional measure-preserving involution $I_T = F_T\circ\psi_T^L$ on $C_T$, and on each unvisited coordinate $I$ acts independently as $I'$.
\end{proof}

Let $u(\mathbf{q}, \mathbf{p}) = \{j \in \mathcal{A} \mid \exists i \in \{0, \dots, L\}, G(\mathbf{q}^{(i)})_j\neq 0\}$ be the function that gives the sites $j$ for which at least one position $\mathbf{q}'$ in the trajectory has the property that $G(\mathbf{q}')_j\neq 0$.
Notice that $u(\mathbf{q}, \mathbf{p}) \subseteq v(\mathbf{q}, \mathbf{p})$ and $u(\mathbf{q}, \mathbf{p}) = u(I(\mathbf{q}, \mathbf{p}))$. 
For example, we might have $j \in v(\mathbf{q}, \mathbf{p}) \setminus u(\mathbf{q}, \mathbf{p})$ if site $j$ corresponds to a variable which has constant $l$ for different branches, so $j$ would be visited. 

\paragraph{Acceptance ratio.}
By \cref{thm:alg1-acceptance-ratio} using the following acceptance ratio 
\[\alpha(\mathbf{q}, \mathbf{p}) = \frac{l(\mathbf{q}')}{l(\mathbf{q})} \cdot \frac{\prod_{j \in v(\mathbf{q}, \mathbf{p})} \varphi_X(\mathbf{q}'_j) \varphi_Y(\mathbf{p}'_j)}{\prod_{j \in v(\mathbf{q}, \mathbf{p})} \varphi_X(\mathbf{q}_j) \varphi_Y(\mathbf{p}_j)},\]
where $(\mathbf{q}', \mathbf{p}') = I(\mathbf{q}, \mathbf{p})$
results in a kernel that is stationary with respect to the target distribution. 

If $j \in  v(\mathbf{q}, \mathbf{p}) \setminus u(\mathbf{q}, \mathbf{p})$,  then $G(\mathbf{q}^{(i)})_j = 0$ for all $i\in \{0, \dots, L\}$ so $(\mathbf{q}'_j, \mathbf{p}'_j) = I'(\mathbf{q}_j, \mathbf{p}_j)$.
This means that $(\varphi_{X}\times \varphi_{Y})(I'(\mathbf{q}_j, \mathbf{p}_j)) = (\varphi_{X}\times \varphi_{Y})(\mathbf{q}_j, \mathbf{p}_j)$.
Hence, the acceptance ratio can be simplified (it is enough to know $u$ instead of $v$):
$\alpha(\mathbf{q}, \mathbf{p}) = \frac{l(\mathbf{q}')}{l(\mathbf{q})} \cdot \frac{\prod_{j \in u(\mathbf{q}, \mathbf{p})} \varphi_X(\mathbf{q}'_j) \varphi_Y(\mathbf{p}'_j)}{\prod_{j \in u(\mathbf{q}, \mathbf{p})} \varphi_X(\mathbf{q}_j) \varphi_Y(\mathbf{p}_j)}.$

\paragraph{Implementation details.}
As discussed in \cref{sec:Haskell-Nagata} we are able to get the sites on which the gradient of $\log l$ is not zero and the values of the gradient for those sites. 
Therefore, for any position $\mathbf{q}$ we can get the set $u_l(\mathbf{q})$ which contains the sites for which the gradient of $\log l(\mathbf{q})$ is not zero.
So we can construct $u(\mathbf{q}, \mathbf{p}) = \cup_{i \in \{0, \dots, L\}} u_l(\mathbf{q}^{(i)})$.
Therefore, there is no need to know $v$ for this method. 

\paragraph{Note on $K$.} If we had proposed a state that is not in $\mathbb{S}$, the acceptance ratio would have been $0$ so the proposed state would be rejected and the state $(\mathbf{q}, \mathbf{p})$ would be returned instead. 
This would be the same as proposing $(\mathbf{q}, \mathbf{p})$ from the start. 
Hence, intuitively, we would like $K$ to be the identity, which is what we do in the implementation.
This also makes $I$ an involution that satisfies the required properties, but we chose the different $K$ to make the presentation easier to follow.

\paragraph{Note on $I$.} An alternative way of defining $I$ is to let it be the identity for states $(\mathbf{q}, \mathbf{p})$ for which any of $F \circ \psi^i(\mathbf{q}, \mathbf{p})$ is not in $\mathbb{S}$ for $i = 1, \dots, L$.
This would correspond to rejecting if along the trajectory we encounter a position that is not in the support of $l$.

\paragraph{Note on the choice of $I'$.} One can notice that we could have chosen any $I'$ that is a $\text{Leb}_2$ preserving involution with $(\varphi_{X}\times \varphi_{Y})(q, p) = (\varphi_{X}\times \varphi_{Y})(I'(q,p))$ (see \cref{sec:LazyHMC-with-original-integrator}) to use on the sites which are not visited. 
However, $I'$ might then disagree with $F \circ \psi^L$ on the sites not visited: we may have $(F \circ \psi^L)(\mathbf{q}, \mathbf{p})_j \neq I'(\mathbf{q}_j, \mathbf{p}_j)$ for some $j \in \mathcal{A} \setminus v(\mathbf{q}, \mathbf{p})$.
This would mean that we need to know $v(\mathbf{q}, \mathbf{p})$ (not only $u(\mathbf{q}, \mathbf{p})$) to apply $I'$ on these sites. 
This is the case with the method in \cref{sec:LazyHMC-with-original-integrator}.

\subsection{LazyHMC2: Framework A with a Different Hamiltonian Splitting}
\label{sec:LazyHMC-Hilbert}

We are now considering a different $\psi$, inspired by the HMC integrator resulting from a different splitting of the Hamiltonian ~\cite{beskos2011HMConHS}. 
Fix $\theta>0$. Then $\psi = \phi_{\theta/2}^P \circ \phi_{\theta}^{Q, P} \circ \phi_{\theta/2}^P$ with:
\begin{align*}
    \phi_{t}^P(\mathbf{q}, \mathbf{p}) & = (\mathbf{q}, \mathbf{p} - t G(\mathbf{q})) \\
    \phi_{t}^{Q, P}(\mathbf{q}, \mathbf{p}) & = (\cos(t)\mathbf{q} + \sin (t)\mathbf{p}, - \sin(t) \mathbf{q}+\cos (t) \mathbf{p})
\end{align*}

The rest of the setup is the same as in \cref{sec:LazyHMC}. 
The key difference from lazyHMC1 is that the position–momentum
update step $\phi^{Q,P}$ is a \emph{joint rotation}
rather than a pure position drift.
Notice that if $j \notin v(\mathbf{q}, \mathbf{p})$ then $\psi(\mathbf{q}, \mathbf{p})_j = \tau(\mathbf{q}_j, \mathbf{p}_j)$, the same rotation $\tau$ as in \cref{sec:LazyHMC}.

\begin{proposition}
  \label{prop:HMC2-properties}
  $I$ (defined using the new $\psi$) satisfies \cref{cond:visited-property,cond:non-visited-cancel-out,cond:involution-decomposition}.
\end{proposition}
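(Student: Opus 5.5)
The proof follows the template of \cref{prop:HMCmod-properties}, reusing the definitions of $v$, $K$ and $I$ from \cref{sec:LazyHMC} verbatim with the new $\psi$. The only structural change is that the prior gradient term $t\mathbf{q}$ now lives inside the rotation $\phi^{Q,P}$ rather than in $\phi^P$. So the first step is a coordinatewise description of one step $\psi$.

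Fix a site $j$ with $G(\mathbf{q}^{(i)})_j = 0$ for the relevant intermediate positions. Then both half-kicks $\phi^P_{\theta/2}$ act as the identity on coordinate $j$. The map $\phi^{Q,P}_\theta$ acts on $(q_j,p_j)$ as exactly the rotation $\tau$ by $\theta$, with no need for the special constants of \cref{proof:rotation-constants}. More precisely, the intermediate position after $\phi^{Q,P}$ is $\mathbf{q}^{(i+1)}$, so the second half-kick uses $G(\mathbf{q}^{(i+1)})$. Hence if $j\notin v(\mathbf{q},\mathbf{p})=\bigcup_{i=0}^L v_l(\mathbf{q}^{(i)})$, every gradient evaluated along the trajectory vanishes at $j$, and $\psi^i(\mathbf{q},\mathbf{p})_j=\tau^i(\mathbf{q}_j,\mathbf{p}_j)$.

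\emph{\Cref{cond:non-visited-cancel-out}.} Set $I' = F'\circ\tau^L$, where $F'(q,p)=(q,-p)$. This is an involution, since $F'\tau F'=\tau^{-1}$. It is Lebesgue-measure preserving because it is orthogonal. It preserves $\varphi_X\varphi_Y\propto e^{-(q^2+p^2)/2}$ because it preserves the Euclidean norm.

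\emph{\Cref{cond:visited-property}.} Suppose $v(\mathbf{q},\mathbf{p})=T$ and $(\mathbf{x},\mathbf{y})$ agrees with $(\mathbf{q},\mathbf{p})$ on $T$. I would show by induction on $i$ that $\psi^i(\mathbf{x},\mathbf{y})_T=\psi^i(\mathbf{q},\mathbf{p})_T$ and $v_l(\mathbf{x}^{(i)})=v_l(\mathbf{q}^{(i)})$. The key facts are these:
\begin{itemize}
\item $\phi^{Q,P}$ acts coordinatewise, so $T$-coordinates depend only on $T$-coordinates.
\item $v_l(\mathbf{q}^{(i)})\subseteq T$, and membership of $\mathbf{q}^{(i)}$ in its analytic cylinder $\mathbb{X}_k$ depends only on coordinates in $T_k\subseteq T$. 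So $\mathbf{x}^{(i)}$ lies in the same cylinder.
\item $G(\mathbf{x}^{(i)})=G_T(\mathbf{x}^{(i)}_T)$ equals $G(\mathbf{q}^{(i)})$ and is supported in $T$.
\end{itemize}
The half-kicks then agree on $T$. Taking the union over $i$ gives $v(\mathbf{x},\mathbf{y})=T$, and the same computation applies to the $K$ branch, which copies $T$-coordinates. This induction is the step I expect to need the most care. It must interleave the cylinder property of each $\mathbb{X}_k$ with the fact that $\phi^{Q,P}$ mixes position and momentum on the same site: the $T$-part of $\mathbf{p}$ enters the $T$-part of the next $\mathbf{q}$, and no other part does. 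I would also have to check that membership in $\mathbb{S}$, i.e.\ $l(\mathbf{q}^{(L)})>0$, is determined by $T$-coordinates. This holds because $l(\mathbf{q}^{(L)})=l_k(\mathbf{q}^{(L)}_{T_k})$.

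\emph{\Cref{cond:involution-decomposition}.} On $v^{-1}(T)$, which is the cylinder over some $C_T$, define
\[
I_T=F_T\circ\psi_T^L \quad\text{on the part of } C_T \text{ whose image stays in }\mathbb{S},
\]
and $I_T=\mathrm{id}$ on the rest, matching $K$. Here $\psi_T$ is the finite-dimensional map $(\mathbf{q}_T,\mathbf{p}_T)\mapsto \phi^P_{\theta/2}\circ\phi^{Q,P}_\theta\circ\phi^P_{\theta/2}$, built with $G_T$, which is well defined by the previous paragraph.

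It remains to show that $I_T$ is an involution and preserves Lebesgue measure.
\begin{itemize}
\item \emph{Measure preservation.} Each $\phi^P$ is a shear, with $\mathbf{p}$ shifted by a function of $\mathbf{q}$, so it has Jacobian determinant $1$. $\phi^{Q,P}$ is a rotation. Hence $\psi_T$ preserves volume piecewise on the measurable pieces where $G_T$ is analytic.
\item \emph{Involution.} $F\circ\phi^P_t\circ F=(\phi^P_t)^{-1}$ and $F\circ\phi^{Q,P}_\theta\circ F=(\phi^{Q,P}_\theta)^{-1}$. Together with the symmetric kick-rotate-kick form, this gives $(F\psi)^{-1}=F\psi$ and so $(F\psi^L)^2=\mathrm{id}$.
\item \emph{Well-definedness.} By \cref{cond:visited-property}, applied to the symmetry $v\circ F\psi^L=v$, the map $I_T$ maps $C_T$ to itself.
\end{itemize}
On unvisited sites, $I$ acts as $I'$ by the coordinatewise computation in the first paragraph, for both branches of $I$. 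This completes the decomposition.
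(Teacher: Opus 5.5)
Your proposal is correct and follows the paper's proof essentially step for step. On zero-gradient sites $\psi$ acts as the rotation $\tau$, giving $I' = F'\circ\tau^L$ for (A3); a site-wise induction along the trajectory shows $v^{-1}(T)$ is a cylinder for (A1); and $I_T = F_T\circ\psi_T^L$ built from $G_T$, extended by the identity on the rejection set, gives (A2). You supply details the paper omits, such as the interleaved cylinder-membership induction and the reversibility identities $F\circ\phi\circ F = \phi^{-1}$, and these are sound. One small simplification: the shears $\phi^P$ preserve Lebesgue measure for any measurable $G_T$ by Fubini, so the appeal to pieces on which $G_T$ is analytic is unnecessary.
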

\begin{proof}
\looseness=-1
Both $\phi_{t}^P, \phi_{t}^{Q, P}$ act site-wise: each site's update uses only that site's own $(\mathbf{q}_j, \mathbf{p}_j)$, connected to other sites solely through the gradient $G$, which is supported on $T = v(\mathbf{q}, \mathbf{p})$.
This site-wise structure is all that \cref{cond:visited-property} needs, so the cylinder argument of \cref{prop:HMCmod-properties} applies the same here.
On an unvisited site $j$ we have $G(\mathbf{q}^{(i)})_j = 0$, so the momentum kick $\phi^P$ is the identity there and $\psi$ reduces to its middle factor $\phi_\theta^{Q,P}$, which is directly the rotation $\tau$. 
Since $\tau$ is a rotation it preserves $\varphi_X \times \varphi_Y$, giving \cref{cond:non-visited-cancel-out} for $I' = F' \circ \tau^L$.
The map $\psi_T$ is still volume preserving, so $I_T = F_T \circ \psi_T^L$ is a measure-preserving involution and $I$ decomposes as required for \cref{cond:involution-decomposition}.
The details are in \cref{proof:HMC2conditions}.
\end{proof}

\subsection{LazyHMC3: Framework A with the Usual Leapfrog Integrator}
\label{sec:LazyHMC-with-original-integrator}
The original HMC integrator (\S\ref{sec:HMC}) can also be used for the visited sites. 
But since the kernel outputs the full rose tree, the unvisited coordinates must be changed carefully: leaving them untouched or copying them arbitrarily would break the Markov chain.
We must also know which sites are visited to include them in the acceptance ratio, unlike the methods in \cref{sec:LazyHMC,sec:LazyHMC-Hilbert}, where the addresses of non-zero-gradient sites along the trajectory suffice.

Fix $\epsilon>0$ and let one step be $\psi = \phi_{\epsilon/2}^P \circ \phi_{\epsilon}^Q \circ \phi_{\epsilon/2}^P$, where $\phi_t^P, \phi_t^Q$ are defined as in \cref{sec:LazyHMC}.
Again, $X, Y, \mu_X, \mu_Y, v$ are defined the same as in \cref{sec:LazyHMC}.
Notice that if $j\in v(\mathbf{q}, \mathbf{p})$ and $G(\mathbf{q}^{(i)})_j = 0$ for all $i \in \{0, \dots, L\}$ we cannot guarantee that $(\mu_X \times \mu_Y)(I(\mathbf{q}, \mathbf{p})) = (\mu_X \times \mu_Y)(\mathbf{q}, \mathbf{p})$.
Therefore, we need to know $v(\mathbf{q}, \mathbf{p})$ not just $u(\mathbf{q}, \mathbf{p})$.
Moreover, we need to let $I(\mathbf{q}, \mathbf{p})_i = I'(\mathbf{q}_i, \mathbf{p}_i)$ for any $i \in \mathcal{A} \setminus v(\mathbf{q}, \mathbf{p})$, where $I': \mathbb{R}^2 \rightarrow \mathbb{R}^2$ is some $\text{Leb}_2$ preserving involution with $(\mu_X \times \mu_Y)(I'(\mathbf{q}, \mathbf{p})) = (\mu_X \times \mu_Y)(\mathbf{q}, \mathbf{p})$ (e.g. $I' = F' \circ \tau$, or the identity,  which is what we use in our implementation).
Because $I'$ preserves the site density $\varphi_X\times\varphi_Y$, every unvisited coordinate contributes a factor $1$ to the acceptance ratio, so the otherwise infinite product collapses to the finite product over the visited sites $v(\mathbf{q},\mathbf{p})$.

\begin{proposition}
  \label{prop:HMC3-properties}
  $I$ satisfies \cref{cond:visited-property,cond:non-visited-cancel-out,cond:involution-decomposition} (see \cref{proof:HMC3conditions} for the proof).
\end{proposition}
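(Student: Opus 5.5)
The argument runs parallel to \cref{prop:HMCmod-properties}, but $I$ must now be defined with care on unvisited sites. Concretely, $I$ is the involution that acts as $F\circ\psi^L$ (the standard leapfrog trajectory followed by a momentum flip) on the visited coordinates $v(\mathbf{q},\mathbf{p})$, and as the fixed map $I'$ on every coordinate in $\mathcal{A}\setminus v(\mathbf{q},\mathbf{p})$. If the result falls outside $\mathbb{S}$, we fall back to a $K$-type map as in \cref{sec:LazyHMC}. Here $v(\mathbf{q},\mathbf{p})=\bigcup_{i=0}^L v_l(\mathbf{q}^{(i)})$, where the trajectory is computed by the usual leapfrog. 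I would first note one structural fact: every map $\phi^P_t$ and $\phi^Q_t$ acts site-wise. The only coupling between sites is through $G(\mathbf{q})$, and $G(\mathbf{q})$ is supported in $v_l(\mathbf{q})$.

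\emph{\Cref{cond:visited-property}.} I would show, by induction on $i\le L$, that for $(\mathbf{q},\mathbf{p})\in v^{-1}(T)$ the trajectory coordinates $(\mathbf{q}^{(i)}_T,\mathbf{p}^{(i)}_T)$ depend only on $(\mathbf{q}_T,\mathbf{p}_T)$. The tool is the description $v_l^{-1}(\mathcal{P}(T))=\{\mathbf{q}\mid \mathbf{q}_T\in D_T\}$ together with the factorisation $G(\mathbf{q})=G_T(\mathbf{q}_T)$. Take any $(\mathbf{x},\mathbf{y})\in\mathbb{S}$ that agrees with $(\mathbf{q},\mathbf{p})$ on $T$. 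The inductive hypothesis gives $\mathbf{x}^{(i)}_T=\mathbf{q}^{(i)}_T$. Because $v_l^{-1}(\mathcal{P}(T))$ is a cylinder over $T$, we get $v_l(\mathbf{x}^{(i)})=v_l(\mathbf{q}^{(i)})\subseteq T$, and then $G(\mathbf{x}^{(i)})=G(\mathbf{q}^{(i)})$. This also requires the finer fact that each individual piece $v_l^{-1}(S)$ with $S\subseteq T$ is a cylinder over $T$, which holds since each $\mathbb{X}_i$ is an analytic cylinder with support $T_i$. The next step uses only $T$-coordinates and the common gradient, so the induction closes. It follows that $v(\mathbf{x},\mathbf{y})=T$, and so $v^{-1}(T)$ is a cylinder with base $C_T$.

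\emph{\Cref{cond:involution-decomposition,cond:non-visited-cancel-out}.} By construction, on $v^{-1}(T)$ we have $I_T=F_T\circ\psi_T^L$ restricted to $C_T$, where $\psi_T$ is the leapfrog driven by $G_T$. Outside $T$, $I$ acts as $I'$, which is $F'\circ\tau$ or the identity; both are $\mathrm{Leb}_2$-preserving involutions that preserve $\varphi_X\times\varphi_Y$, so \cref{cond:non-visited-cancel-out} is immediate. Volume preservation of $I_T$ comes from \cref{prop:leapfrog-reversible-volume-preserving}, since each kick is a shear and each drift is a shear. To see that $I_T$ is an involution, I would reuse the reversibility argument for leapfrog. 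After flipping, the trajectory started from $I(\mathbf{q},\mathbf{p})$ retraces the same positions $\mathbf{q}^{(i)}$, so $v(I(\mathbf{q},\mathbf{p}))=T$, and the visited part maps back to $(\mathbf{q}_T,\mathbf{p}_T)$. On the unvisited part, $I'\circ I'=\mathrm{id}$. Hence $I$ is an involution, and its decomposition is exactly the one required.

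The main obstacle is this last consistency point. Because $I'$ is \emph{not} the trajectory's own action on unvisited coordinates (the drift is a translation, not a rotation), the full $I$ differs from $F\circ\psi^L$. I therefore have to check carefully that replacing the unvisited coordinates by $I'$ does not change the visited set. This is exactly where the cylinder property from \cref{cond:visited-property} is needed: $v$ depends only on the $T$-coordinates, so changing the coordinates outside $T$ leaves it unchanged. The same cylinder property gives measurability of $I$, and the fallback-$K$ case is handled exactly as in \cref{sec:LazyHMC}.
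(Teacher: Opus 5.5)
Your proposal is correct and follows the same route as the paper. (A3) comes from the choice of $I'$; (A1) comes by induction on the trajectory index, using that each $G(\mathbf{q}^{(i)})$ is supported in $T$; and (A2) holds by construction, with $I_T=F_T\circ\psi_T^L$ on $C_T$ and $I'$ on unvisited sites. You fill in details the paper leaves implicit: the per-piece cylinder property of $v_l^{-1}(S)$, shear-based volume preservation, and the use of (A1) to show that overwriting unvisited coordinates by $I'$ leaves $v$ unchanged. One small correction is that it is the trajectory from $(F\circ\psi^L)(\mathbf{q},\mathbf{p})$, not from $I(\mathbf{q},\mathbf{p})$, that literally retraces the positions $\mathbf{q}^{(i)}$, which is exactly why that (A1) step is needed.
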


\paragraph{Implementation details.} To compute $v(\mathbf{q}, \mathbf{p})$ we need to first apply $F \circ \psi^L$ to get the states in the trajectory $(\mathbf{q}^{(i)}, \mathbf{p}^{(i)})$. 
Then for each $\mathbf{q}^{(i)}$ we can determine $v_l(\mathbf{q}^{(i)})$ by checking which thunks in the seed have been evaluated, as done for the single-site proposal kernel in \cite{DashKPS23LazyPPL}.
For the sites $j$ not in $\mathbf{v}(\mathbf{q}, \mathbf{p})$ we then need to change $\mathbf{q}^{(L)}_j, \mathbf{p}^{(L)}_j$ to $I'(\mathbf{q}_j, \mathbf{p}_j)$ before the accept/reject step.

\subsection{LazyNUTS}
HMC's performance depends on the number of leapfrog steps $L$, which is hard to tune. 
NUTS addresses this by choosing $L$ adaptively via a doubling procedure with a no-U-turn stopping rule. 
We develop a lazy version, lazyNUTS \citep{NUTS}, that keeps the doubling procedure and no-U-turn rule but still evaluates only finitely many sites per iteration, using the leapfrog integrator from lazyHMC1 (\cref{sec:LazyHMC}).
We give a correctness argument in \cref{sec:LazyNUTS}: the acceptance ratio is $1$ provided the stopping rule yields a start-independent set of proposed states ($\mathcal{C}_{\mathbf{s}}=\mathcal{C}_{\mathbf{s}'}$); as with the maximum tree depth in standard NUTS, our cap $M$ on the number of proposed states is a practical termination we do not show to preserve this. 
We also derive Framework~B (\cref{sec:Framework-B}), generalizing all lazy HMC and lazy NUTS versions, in the appendix.

\section{Example: Piecewise-Constant Regression with Poisson-Process Changepoints}
\label{sec:stepReg}
To make the preceding frameworks concrete, we now trace the behaviour of lazy HMC on a single model in detail: a piecewise-constant regression model with Poisson-process changepoints from \cite{DashKPS23LazyPPL}.
For simplicity, we refer to it as the step regression model.

Consider a set of synthetic two-dimensional datapoints as shown in \cref{fig:hmc-dim-change}.
The task is to fit a piecewise-constant function to this dataset. 
The model first draws a function \lstinline|f| from a prior, then observes the likelihood of the dataset being generated by \lstinline|f|, after introducing some Gaussian noise.
\begin{lstlisting}[caption = Step regression model, label = {lst:stepReg}]
stepReg :: [(Double, Double)] -> Meas (Double -> Double)
stepReg dataset = do
  f <- sample (splice (poissonPP 0 0.2) randConst)
  forM_ dataset (\(x, y) -> scoreLog ( normalLogPdf (f x) 0.1 y))
  return f
\end{lstlisting}
Now consider the prior. 
The \lstinline|splice| function builds a piecewise function from a point process generating changepoints and a prior \lstinline|randomFun| over the behaviour on each segment.
It samples an infinite stream of changepoints \lstinline|xs| and, for each, an independent function from \lstinline|randomFun|. 
An input is then evaluated by selecting the segment containing it.

\begin{lstlisting} 
splice :: Prob [Double] -> Prob (Double -> Double) -> Prob (Double -> Double)
splice pointProcess randomFun = do
  xs <- pointProcess
  fs <- mapM (const randomFun) xs
  default_f <- randomFun
  let h [] x = default_f x
      h ((a,f):rest) x
        | x <= a    = f x
        | otherwise = h rest x
  return (h (zip xs fs))
\end{lstlisting}
Since each segment is constant, we use a base prior sampling constant functions:
\begin{lstlisting}
randConst :: Prob (Double -> Double)
randConst = do
  b <- normal 0 3
  return (const b)
\end{lstlisting}
The point process is a Poisson process with rate $0.2$ and starting from $0$:
\begin{lstlisting}
poissonPP :: Double -> Double -> Prob [Double]
poissonPP lower rate = do
  step <- exponential rate
  let x = lower + step
  rest <- poissonPP x rate
  return (x : rest)
\end{lstlisting}

Note that a draw \lstinline|f| from \lstinline|splice| contains infinitely many segments.
Consequently, the function returned by \lstinline|stepReg| (\cref{lst:stepReg}) also has infinitely many segments.

This does not cause any practical issues, since evaluation is lazy.
Only those segments that are required for computing the likelihood or for producing the plots are ever evaluated.
For instance, in \cref{fig:step-posterior}, only segments corresponding to input values in the range $[-0.5,6.5]$ are actually used.

\begin{figure}[t]
  \centering
  \includegraphics[width=0.9\linewidth, trim=15 8 8 15,clip]{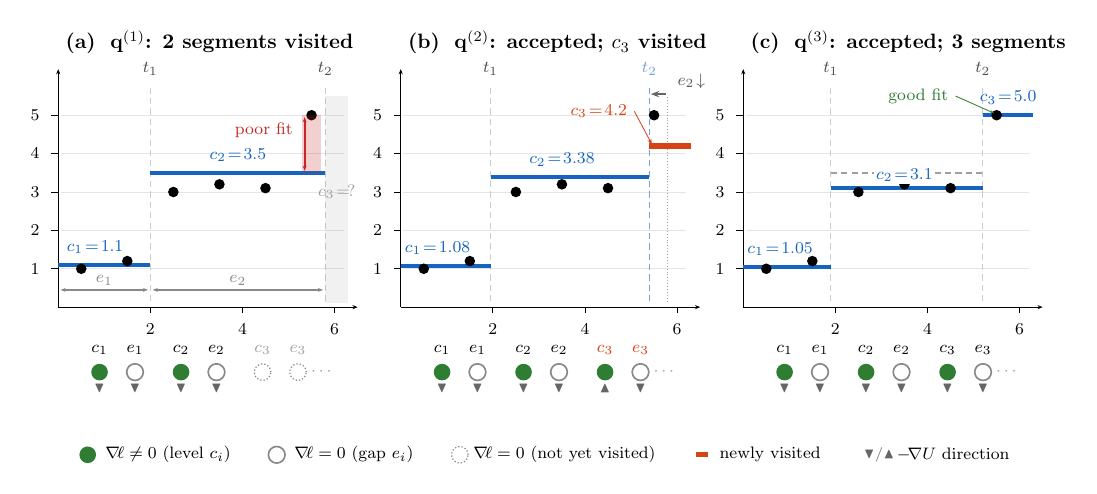}
  \caption{Dimension change during lazy HMC on the step regression model (\S\ref{sec:stepReg}).
  (a) Initial state $\mathbf{q}^{(1)}$, where the second changepoint lies to the right of all datapoints. Only two segments are evaluated; later segments are not visited due to lazy evaluation.
  (b) After five leapfrog steps (shown as a trajectory in $(e_2,c_2)$-space in \cref{fig:step-hmc-trajectory}), the second changepoint $t_2$ moves left of the last datapoint, activating a third segment. The previously unused sites $c_3$ and $e_3$ become visited, increasing the dimensionality of the evaluated state.
  (c) After a further five leapfrog steps, the segments adjust using the newly available gradient information, resulting in a better fit. Both transitions shown are accepted. See \cref{fig:step-posterior} for additional posterior samples.
  }
  \Description{Illustration of lazy HMC dimension change}
  \label{fig:hmc-dim-change}
\end{figure}

\begin{figure}[htbp]
    \centering
    \begin{subfigure}{0.45\textwidth}
        \centering
        \includegraphics[width=0.8\linewidth, trim=15pt 8pt 8pt 15pt,clip]{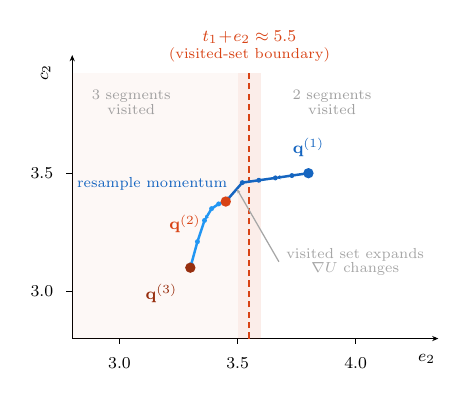}
        \caption{Illustrative example of lazy HMC trajectory on the step regression model.  The values in the $(e_2, c_2)$-space correspond to the positions $\mathbf{q}^{(1)}, \mathbf{q}^{(2)}, \mathbf{q}^{(3)}$ from \cref{fig:hmc-dim-change}, with $5$ leapfrog steps between successive positions.}
        \Description{Illustrative example of lazy HMC trajectory on the step regression model.}
        \label{fig:step-hmc-trajectory}
    \end{subfigure}
    \hfill
    \begin{subfigure}{0.45\textwidth}
        \includegraphics[width=\linewidth, trim=8pt 8pt 8pt 8pt,clip]{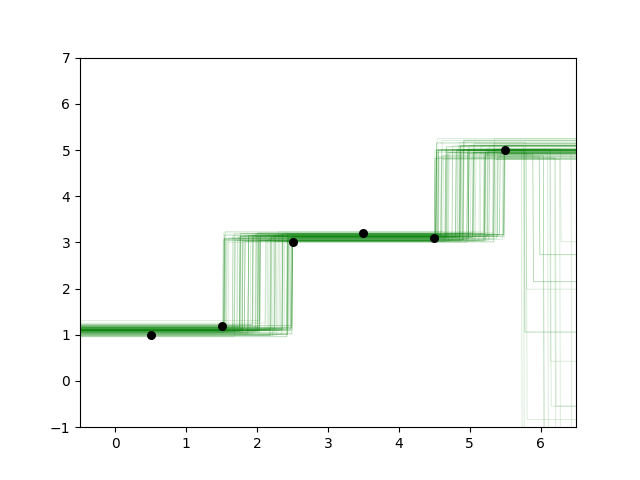}
        \caption{100 posterior samples from running lazyHMC1$(\epsilon=0.01,\ L=30)$ on the step regression model thinned every $10$ samples after $10^3$ burn-in samples.}
        \Description{lazyHMC1 posterior samples on step reg model.}
        \label{fig:step-posterior}
    \end{subfigure}
    \caption{Lazy HMC on the step regression model: (a) an illustrative trajectory, and (b) actual posterior samples.}
    \label{fig:step}
\end{figure}

\paragraph{Lazy HMC on the step regression model.} Consider the rose tree $\mathbf{q}$ corresponding to the infinite stream of seeds used in a run of the step regression model. 
We distinguish between the seeds $\mathbf{q}_{a_j}$ used to generate the heights of the constant functions and the seeds $\mathbf{q}_{b_j}$ used to generate the exponential increments in the Poisson process for $j = 1, \dots $.
Therefore the height of the $j$th constant function segment is $c_j(\mathbf{q}) = 3 \mathbf{q}_{a_j}$ and the $j$th exponential draw is $e_j(\mathbf{q}) = - \log \text{cdf}_{\mathcal{N}(0, 1)}(\mathbf{q}_{b_j})/0.2$ where $\text{cdf}_{\mathcal{N}(0, 1)}$ is the cumulative distribution function of $\mathcal{N}(0, 1)$.
Hence, the $j$th changepoint is given by $t_j(\mathbf{q}) = \sum_{i=1}^j e_i(\mathbf{q})$ and we can consider $t_0(\mathbf{q}) = 0$.

Suppose we start from a rose tree $\mathbf{q}^{(1)}$ for which the second changepoint $t_2$ is greater than the $x$-value of all datapoints, as shown in \cref{fig:hmc-dim-change}(a). 
In this case, the likelihood $l$ is given by:
\[l(\mathbf{q}^{(1)}) = \prod_{i=1}^2\varphi(y_i \mid c_1(\mathbf{q}^{(1)}), 0.1)  \prod_{i=3}^6\varphi(y_i \mid c_2(\mathbf{q}^{(1)}), 0.1) = \prod_{i=1}^2\varphi(y_i \mid 1.1, 0.1) \prod_{i=3}^6\varphi(y_i \mid 3.5, 0.1).\] where $(x_1, y_1) \dots (x_6, y_6)$ are the datapoints in increasing $x$-order and $\varphi(\cdot \mid \mu, s)$ denotes the density of $\mathcal{N}(\mu, s^2)$.
There exists an open set $A_1$ containing $\mathbf{q}^{(1)}$ such that \[l(\mathbf{q}) = \prod_{i=1}^2\varphi(y_i \mid c_1(\mathbf{q}) , 0.1) \prod_{i=3}^6\varphi(y_i \mid c_2(\mathbf{q}), 0.1)\]
for all $\mathbf{q} \in A_1$.
Thus, two segments are enough to cover the dataset (with respect to the $x$-axis), and the sites $a_i, b_i$ with $i \geq 3$ are not visited. 
Consequently, the gradient at those sites is zero.
Moreover, $l$ restricted to the set $A_1$ depends only on the sites $a_1$ and $a_2$. 
Hence, the gradient at the visited sites $b_1$ and $b_2$ is also zero.
The second segment in \cref{fig:hmc-dim-change}(a) is not a good fit for the $6$th datapoint.

\cref{fig:step-hmc-trajectory} shows the trajectory of the pair consisting of the second exponential increment and the second constant height after five leapfrog steps, starting from position $\mathbf{q}^{(1)}$ with momentum $\mathbf{p}^{(1)}$.

After the fourth leapfrog step, the second changepoint becomes smaller than $x_6$.
As a result, a third constant function is required in order to evaluate the likelihood contribution of the last datapoint $(x_6,y_6)$.
After the fifth leapfrog step is performed, the new position $\mathbf{q}^{(2)}$ (corresponding to \cref{fig:hmc-dim-change}(b)) is accepted.
The sites that were visited in the intermediary positions computed during the $5$ leapfrog steps are the ones corresponding to the first $3$ constant function segments, so $v(\mathbf{q}^{(1)}, \mathbf{p}^{(1)}) = \{a_1, a_2, a_3, b_1, b_2, b_3\}$. 
However, if we use lazyHMC1 from \cref{sec:LazyHMC} or lazyHMC2 from \cref{sec:LazyHMC-Hilbert} it is enough to only include the sites $\{a_1, a_2, a_3\}$ in the acceptance ratio as they are the only ones for which the gradient of $l$ is non-zero. 

The heights of the first two constant functions have decreased due to the gradient information at sites $a_1$ and $a_2$, resulting in a better fit.

After the changepoint moves left of $x_6$, only datapoints $3, 4, 5$ remain in segment $2$. Therefore $l$ is: \[l(\mathbf{q}) = \left(\prod_{i=1}^2  \varphi(y_i \mid c_1(\mathbf{q}), 0.1)\right) \left(\prod_{i=3}^5\varphi(y_i \mid c_2(\mathbf{q}), 0.1)\right)  \varphi(y_6 \mid c_3(\mathbf{q}), 0.1)\] for $\mathbf{q}$ in some open set containing $\mathbf{q}^{(2)}$.
In this region, the sites $a_3$ and $b_3$ become visited, and $a_3$ now has a non-zero gradient.
After resampling the momentum and performing five additional leapfrog steps, we reach position $\mathbf{q}^{(3)}$ corresponding to \cref{fig:hmc-dim-change}(c). 
The segments now provide a better fit, as gradient information from $a_1$, $a_2$, and $a_3$ has been incorporated by the leapfrog integrator. 
\cref{fig:step-posterior} shows 100 posterior samples obtained by lazyHMC1 on the step regression model. 

\paragraph{$l$ is PACAP}
Let $T = \{(n_1, \dots, n_6) \in \mathbb{Z}^6_{>0} \mid n_1 \leq n_2 \leq \dots \leq n_6\}$ be the set of weakly increasing 6-tuples of positive integers. 
Consider the following countable partition into analytic cylinders of the domain $\mathbb{X} = \mathbb{R}^{\mathcal{A}}$ of $l$: $\mathbb{X} = \cup_{t\in T} U_t$ where 
\[U_{(n_1, n_2, \dots, n_6)} = \{\mathbf{q} \in \mathbb{X}\mid t_{n_{i}-1} (\mathbf{q})< x_i \leq t_{n_{i}}(\mathbf{q}) \text{ for } 1 \leq i \leq 6 \}\]
for any $(n_1, n_2, \dots, n_6) \in T$.
The set $U_{(n_1, n_2, \dots, n_6)}$ corresponds to all the rose trees for which the $i$th datapoint $(x_i, y_i)$ is going to be on the $n_i$ segment for all $1 \leq i \leq 6$. For each $\mathbf{q}$, each datapoint lies in exactly one segment.
Therefore the union $\mathbb{X} = \cup_{t\in T} U_t$ is disjoint. 
Note that $t_{n_i}(\mathbf{q})$ can be computed using only the sites $b_1, b_2, \dots b_{n_i}$. We let the support of $U_{(n_1, n_2, \dots, n_6)}$ be $B = \{b_1, b_2, \dots, b_{n_6}\} \cup \{a_1, a_2, \dots, a_{n_6}\}$. 
To see why $U_{(n_1, n_2, \dots, n_6)}$ is an analytic cylinder (see \cref{def:PACAP}), consider the analytic functions $h_{n_i-1}: \mathbb{R}^B \rightarrow \mathbb{R}, g_{n_i}: \mathbb{R}^B \rightarrow \mathbb{R}$ given by $h_{n_i-1}(\mathbf{q}_B) = t_{n_i-1}(\mathbf{q})-x_i$ and $g_{n_i}(\mathbf{q}_B) = x_i-t_{n_i}(\mathbf{q})$ for $1 \leq i \leq 6$.
Now let the set $V = \cap_{i=1}^6 h_{n_i-1}^{-1}((-\infty, 0))$ be the open set encoding the strict inequalities: $V = \{\mathbf{q}_B \in \mathbb{R}^B \mid h_{n_i-1}(\mathbf{q}_B) < 0 \text{ for } 1 \leq i \leq 6\}$. 
Hence, we can write $U_{(n_1, n_2, \dots, n_6)}$ as: 
\[U_{(n_1, n_2, \dots, n_6)} = \{\mathbf{q} \in \mathbb{X}\mid \mathbf{q}_B \in V \ \& \ g_{n_{i}} (\mathbf{q}_B)\leq 0 \text{ for } 1 \leq i \leq 6 \}.\]
Moreover, on $U_{(n_1, n_2, \dots, n_6)}$ we have: 
$l(\mathbf{q}) = \prod_{i=1}^6 \varphi(y_i \mid c_{n_i}(\mathbf{q}), 0.1).$
Considering that $c_j$ only needs information from the site $a_j$ there exists an analytic function $f:V \rightarrow \mathbb{R}$ given by $f(\mathbf{q}_B) = \prod_{i=1}^6 \varphi(y_i \mid c_{n_i}(\mathbf{q}_B), 0.1)$.
Hence, $l(\mathbf{q}) = f(\mathbf{q}_B)$ when $\mathbf{q} \in U_{(n_1, n_2, \dots, n_6)}$.

\section{Experiments}
\label{sec:experiments}
We implemented the lazy HMC methods (\S\ref{sec:LazyHMC}, \S\ref{sec:LazyHMC-Hilbert} and \S\ref{sec:LazyHMC-with-original-integrator}) and lazyNUTS methods (\S\ref{sec:LazyNUTS}) as an extension to the LazyPPL library, by also using the automatic differentiation method (\S\ref{sec:Haskell-Nagata}).
Our implementation is available at \url{https://github.com/lazyppl-team/lazyhmc}.

We include the following illustrative experiments that emphasize the non-parametric side of lazy HMC, testing the aspects that standard HMC methods do not typically address. To be clear, we are not evaluating against established benchmarks, since there aren't really any in the non-parametric setting. Although the experiments here are simple, they nonetheless provide empirical evidence that lazy HMC performs as intended for non-parametric and infinite-dimensional models.

In what follows, for the specified step size $\epsilon$, we have used a fresh value sampled uniformly at each iteration from $[\epsilon/2, 3\cdot\epsilon/2]$.
The maximum number of samples generated in a single iteration of lazyNUTS is denoted by $M$.
Imposing a bound $M$ on the number of proposed states may seem to go against the adaptive design of NUTS, but in practice NUTS implementations (e.g. Stan) always cap the trajectory via a maximum tree depth. 
We set $M$ mainly for comparability: large enough not to artificially cap the trajectory lengths lazyNUTS explores, while keeping them of a similar order of magnitude to those in the lazy HMC variants. 
For comparison, we also ran the lazy lightweight Metropolis-Hastings (lazyLMH) method from \cite{DashKPS23LazyPPL};
this simple method proposes a new rose tree by resampling the value at each site with probability $b$, and does not use gradients at all. 

We have also compared lazy HMC performance with NP-HMC (the variant presented in \cite{NPHMCMakZO21} which does not include truncation or discontinuous HMC). 
Posteriors were comparable to our methods. Lazy HMC variants were generally faster, although we observed slower performance on the GMM model, where the bottleneck appears to be the current lack of optimization in LazyPPL's automatic differentiation implementation.
These timings are not perfectly comparable: NP-HMC is implemented in Python, while our methods are implemented in Haskell and LazyPPL's use of laziness means that models with infinite structure cannot always be expressed identically in both systems. 
Compare e.g. \lstinline|geometric| (natural for lazy HMC) and \lstinline|geometricStrict| (suited to NP-HMC).

\paragraph{Experiment: Geometric distribution.} 
We first sample from the geometric distribution described in \cref{lst:geometric} with $p = 0.2$.
This example shows the methods can handle dimension changes. 
The methods appear to sample correctly from the distribution, as shown in \cref{fig:geometricProb}, which compares the probability mass estimated from the samples with the ground truth distribution.
The total variation distance (TVD) provides a scalar measure of the discrepancy from the ground truth, with values closer to zero indicating better agreement. 
The TVDs are $0.0146$ (lazyLMH), $0.034$ (NP-HMC), $0.0198$ (lazyNUTS), $0.0115$ (lazyHMC1), $0.0117$ (lazyHMC2), and $0.0132$ (lazyHMC3).
The total times were $16.6$ (NP-HMC), $0.6$ (lazyHMC1), $0.6$ (lazyHMC2), $0.7$ (lazyHMC3) and $0.8$ (lazyNUTS) minutes; NP-HMC is at least ten times slower than the lazy variants.

\begin{figure}[htbp]
    \centering
    \begin{subfigure}{0.45\textwidth}
        \centering
        \includegraphics[width=0.8\linewidth, trim=8pt 8pt 8pt 12pt,clip]{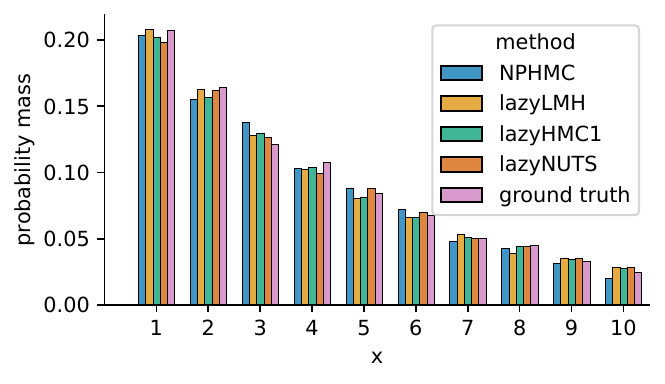}
        \caption{Estimated posterior mass for the Geometric distribution, averaged over $10$ runs. Sample sizes per run: $1,300$ for \texttt{lazyHMC1}, \texttt{NP-HMC} $(\epsilon=0.1,\ L=15)$, and \texttt{lazyNUTS} $(\epsilon = 0.1, M=2^5)$; $10^4$ for \texttt{lazyLMH} $(b=0.5)$.}
        \Description{A bar chart of posterior mass by method.}
        \label{fig:geometricProb}
    \end{subfigure}
    \hfill
    \begin{subfigure}{0.45\textwidth}
        \includegraphics[width=\linewidth, trim=8pt 8pt 8pt 8pt,clip]{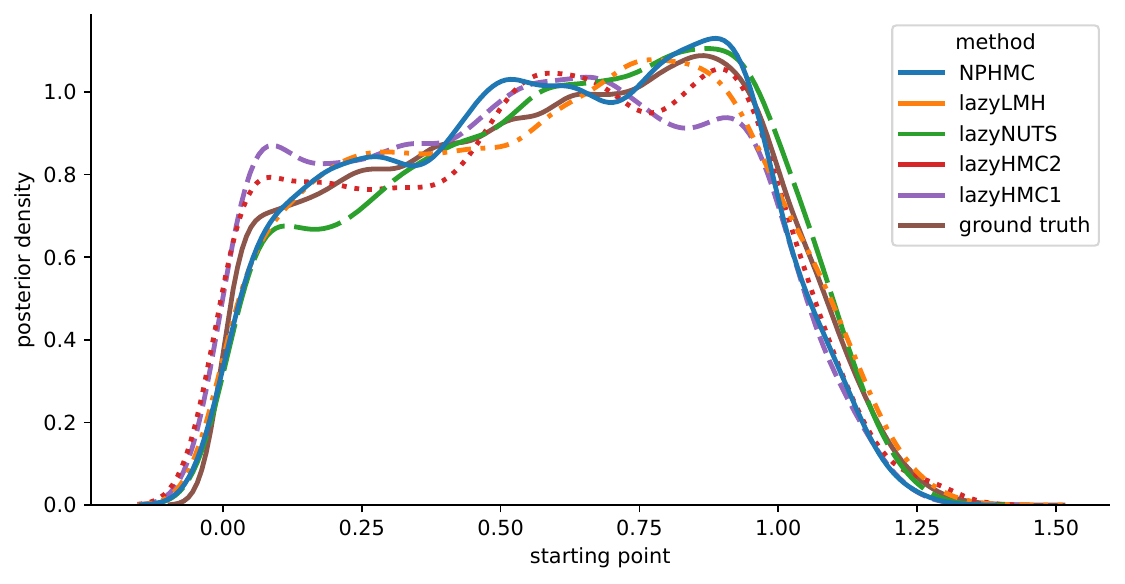}
        \caption{Kernel density estimate for the random walk model, averaged over $10$ runs. Sample sizes per run: $1,300$ for \texttt{lazyHMC1}, \texttt{lazyHMC2}, \texttt{NP-HMC} $(\epsilon=0.1,\ L=5)$, and \texttt{lazyNUTS} $(\epsilon = 0.1, M=2^5)$; $2\times10^4$ for \texttt{lazyLMH} $(b=0.5)$.}
        \Description{Kernel density estimate for the Walk model.}
        \label{fig:walk}
    \end{subfigure}
    \caption{Sampler comparison on the Geometric (left) and random walk (right) models.}
    \label{fig:main}
\end{figure}

\paragraph{Experiment: Random Walk.}
We consider the random walk model from \cite{NPHMCMakZO21}.
The model is presented in \cref{lst:walk}. 
The posterior cannot be computed exactly, so we will use $10^6$ importance samples instead of the ground truth.
For the lazy HMC, lazyNUTS, and NP-HMC methods, we used a heuristic initialisation procedure to find a starting state closer to the typical set. Specifically, we first ran 100 iterations with the momentum excluded from the acceptance ratio. These iterations were used only for initialisation and were discarded. We then ran the correct kernel for 1300 iterations and discarded the first $20\%$ samples as burn-in. We then ran the correct kernel for the number of iterations specified in \cref{fig:walk}, discarding the first $20\%$ of samples as burn-in for each method as well as a thinning for lazyLMH. The resulting kernel density estimates are shown in \cref{fig:walk}.

The upper bound on the number of samples considered per iteration is much larger for lazyNUTS than for the other methods, which makes it slower: lazyNUTS takes $1.6$ minutes (for all chains), compared with $0.2$ minutes for lazyHMC1 and lazyHMC2 and $0.3$ minutes for lazyHMC3. 
However, lazyNUTS also achieves a much larger effective sample size (ESS), with an ESS of $732.2$ compared with $189.6$ for lazyHMC1, $175.6$ for lazyHMC2 and $231.9$ for lazyHMC3. 
When computational cost is taken into account, the lazy HMC variants are more efficient, achieving $20.09$, $18.61$ and $15.09$ ESS/s respectively, compared with $7.73$ ESS/s for lazyNUTS. In comparison, NP-HMC is slower than the lazy HMC variants, at $19$ total minutes, and has a similar ESS of $213.2$, leading to a much lower efficiency of $0.18$ ESS/s.

We emphasise that this model demonstrates some of the core ideas of compositional modelling that are enabled by this lazy approach. 
Recall that line~3 of the model (\cref{lst:walk}) produces random infinite sequences: the walk continues forever.
This infinite stochastic process determines the infinite-dimensional state space that lazy HMC operates over.
Our only observation (line~5) is about the first point the stopping criterion is met.
Lazy HMC produces results, despite operating over an infinite dimensional space,
because it focuses lazily on the dimensions that are actually needed.

\paragraph{Experiment: Gaussian mixture model (GMM) with unbounded number of components.}
We consider the Gaussian Mixture model inspired by \cite{ZhouDCC} in which the number of components is unbounded:
\begin{align*}
  K \sim \text{Poisson}(9) + 1, \qquad &\mu_k \sim \text{Uniform}\left(\textstyle\frac{20(k-1)}{K}, \frac{20k}{K} \right), \qquad y_n \sim \textstyle \frac{1}{K} \sum_{k=1}^K \mathcal{N}(\mu_k, 1.5^2).
\end{align*}
for $k = 1, \dots, K$.
\begin{figure}[t]
  \centering
  \includegraphics[width=0.7\linewidth, trim=8 8 8 12,clip]{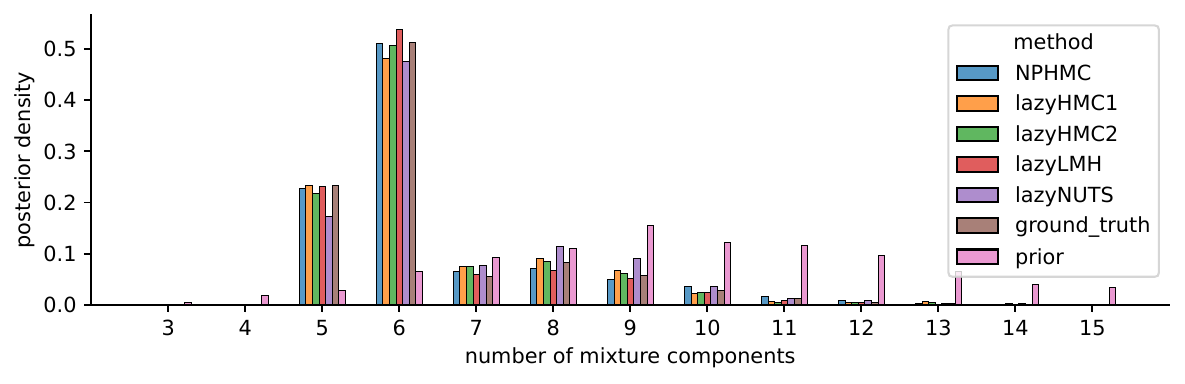}
  \caption{Histogram of the number of components for the GMM, averaged over $10$ runs. Sample sizes per run: $2000$ for lazyHMC1, lazyHMC2, NP-HMC $(\epsilon=0.05,\ L=20)$, and lazyNUTS $(\epsilon = 0.1, M=2^5)$; $4\times 10^4$ for lazyLMH $(b=0.5)$. We discard the first $20\%$ of the samples for each chain. }
  \Description{Histogram for GMM.}
  \label{fig:gmm}
\end{figure}
The model is presented in \cref{sec:clustering}.

We fix the true number of components to $K^* = 5$ and sample the true components means as $\mu_k^* \sim \text{Uniform}\left(\textstyle\frac{20(k-1)}{5}, \frac{20k}{5} \right)$ for $k = 1 \dots 5$.
Each point in our synthetic dataset is drawn from the Gaussian mixture with $K^* = 5$ components with means $\mu_1^*, \dots \mu_5^*$. 
Concretely, we first sample one component index $k$ uniformly from $\{1, \dots, 5\}$, and then draw the datapoint from $\mathcal{N}(\mu_k^*, 1.5^2)$.
We split the dataset into $200$ points for training and $50$ points for testing.

The prior ($\text{Poisson}(9)+1$) on the number of components and the distribution inferred from the posterior samples can be seen in \cref{fig:gmm}.
As the posterior cannot be computed exactly, we again use $10^6$ importance samples instead of the ground truth. 
We also approximate the log pointwise predictive density (LPPD, \cite{vehtari2014bayesian}) for the test dataset $\{y_1, \dots y_n\}$ by $\sum_{i=1}^n \log{\frac{1}{N}}\sum_{j=1}^N p(y_i \mid \theta_j)$, where $\theta_j$ are the samples from the target posterior.
The true LPPD is $-143.73$ and the mean and one standard deviation LPPD over the $10$ chains for each method is: $-147.74 \pm 0.28$ (lazyHMC1), $-147.73 \pm 0.26$ (lazyHMC2), $-147.84 \pm 0.10$ (lazyNUTS), $-147.62 \pm 0.14$ (lazyLMH), $-147.75 \pm 0.09$ (NP-HMC).
The total times in minutes across the chains are: $72$ (lazyHMC1, lazyHMC2), $129$ (lazyNUTS), $41$ (lazyLMH), $27$ (NP-HMC).

\begin{figure}[htbp]
    \centering
    \begin{subfigure}{0.45\textwidth}
        \centering
        \includegraphics[width=1\linewidth, trim=8 8 8 12,clip]{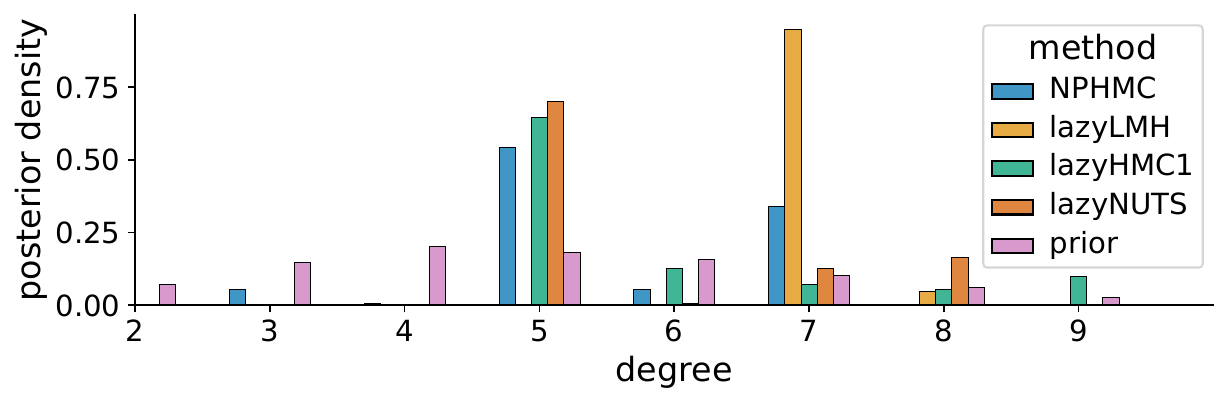}
        \caption{Histogram of the degree in the polynomial model, averaged over $10$ runs. Sample sizes per run: $3000$ for lazyHMC1 $(\epsilon=0.005,\ L=64)$, lazyNUTS $(\epsilon = 0.005, M=2^6)$; $10^6$ for lazyLMH $(b=0.5)$. For lazyLMH we thin the chain to obtain $\sim3000$ samples. In all cases, only the second half of the samples is used (the first half is discarded as burn-in).}
        \Description{Histogram for Poly.}
        \label{fig:poly-degree}
    \end{subfigure}
    \hfill
    \begin{subfigure}{0.45\textwidth}
        \includegraphics[width=\linewidth, trim=8pt 8pt 8pt 8pt,clip]{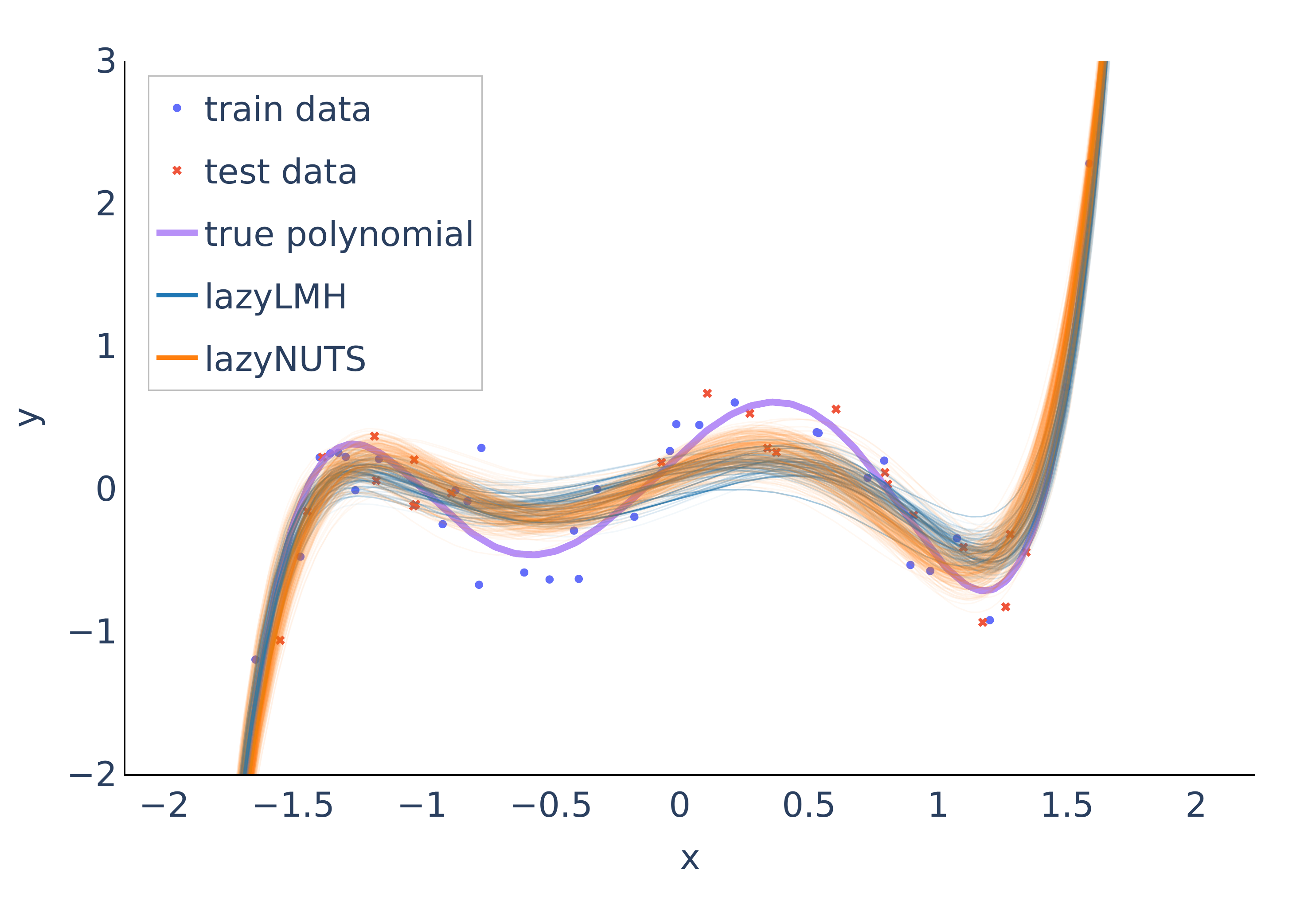}
        \caption{ Posterior polynomial fits based on the same post-burn-in samples as in \cref{fig:poly-degree}. For this figure, we further thin all chains by keeping every second sample.}
        \Description{Poly posterior.}
        \label{fig:poly-posterior}
    \end{subfigure}
    \caption{Histogram of sampled polynomial degrees and posterior fits for the polynomial regression model.}
    \label{fig:poly}
\end{figure}

\paragraph{Experiment: Polynomial regression model with unbounded degree.}
In this example, we generate a synthetic regression dataset from a degree $5$ polynomial.
We sample $40$ training inputs and 
$25$ test inputs independently as $x_i \sim \text{Uniform}(-2, 2)$.
The outputs are generated from a fixed ground-truth polynomial $P^*$ of degree $5$, corrupted by Gaussian noise: $y_i \sim \mathcal{N}(P^*(x_i), 0.25^2)$. 
The polynomial $P^*$ as well as the training and test points can be seen in \cref{fig:poly-posterior}.
The polynomial prior for our model is: 
\[D \sim \text{Poisson}(4) + 1, \qquad (a_k)_{k=0}^D \stackrel{\text{i.i.d.}}{\sim} \mathcal{N}(0, 0.5^2), \qquad Q(x) = \sum_{k=0}^D a_k x^k.\]
Since we do not want to constrain the degree $D$ of the polynomial we sample it from the Poisson distribution. 
The coefficients of the polynomial $Q$ are then sampled from the normal distribution. 
For each train point $(x_i, y_i)$, we score the likelihood of it being generated using the polynomial $Q$ together with the Gaussian noise:
\begin{align*}
  y_i & \sim \mathcal{N}(Q(x_i), 0.25^2).
\end{align*}

\begin{table}[t]
\centering
\caption{For the polynomial regression experiment, we report test dataset LPPD and runtime in minutes summed over all chains. LPPD is computed using the same post-burn-in samples as in \cref{fig:poly-degree}. The true test dataset LPPD is $3.82$. Reported runtimes correspond to the full chain execution, including burn-in.}
\label{tab:poly-LPPD}
\begin{tabular}{lcccc}
\toprule
& lazyNUTS
& lazyHMC1
& lazyLMH
& NP-HMC \\
\midrule
LPPD mean $\pm$ std
& $-5.94 \pm 2.53$
& $-4.69 \pm 2.16$
& $-13.11 \pm 9.72$
& $-7.35 \pm 6.16$ \\
Time (minutes)
& $10.9$
& $23.2$
& $17.2$
& $233.7$\\
\bottomrule
\end{tabular}
\end{table}

In \cref{fig:poly-degree} we show the distribution of the degree of the inferred polynomial. 
While the samples from lazyHMC1 and lazyNUTS concentrate on polynomials of degree $5$, 
those from lazyLMH are stuck mostly with polynomials of degree $7$. 
The performance difference is also visible in \cref{fig:poly-posterior}, where the lazyNUTS samples fit the dataset better than the lazyLMH samples. 
For visual clarity, \cref{fig:poly-posterior} shows only lazyNUTS and lazyLMH, since plotting all methods together causes substantial overlap.
NP-HMC also has a smaller posterior mass on degree $5$ polynomials. 
Despite similar or longer runtime, lazyLMH and NP-HMC achieve worse LPPD than our methods (\cref{tab:poly-LPPD}).

\medskip
\paragraph{Conclusion.}
We have given a Hamiltonian Monte Carlo simulation that works over probabilistic programs involving infinite dimensions, by evaluating at the dimensions lazily. The key idea is that the relevant gradient is only non-zero at finitely many dimensions (because the likelihood functions are PACAP, \S\ref{sec:AD}) and the acceptance ratio is designed so that the factors corresponding to unused dimensions cancel (\S\ref{sec:LazyHMC-big-section}). To improve on the issues with leapfrog and step size, we also provide a lazyNUTS procedure. 

Our initial experimentation shows that the performance is good even where the changing dimensions play a big role in the model (\S\ref{sec:experiments}). However, in the broader context of real-world systems and large non-parametric models, to be clear, we don't expect a purely declarative approach to be sufficient: the modeller will likely need to guide or interact with the inference engine (e.g.~Gen~\cite{DBLP:conf/pldi/Cusumano-Towner19}).
In our setting this interaction is mostly the familiar one of choosing HMC hyperparameters (step size, trajectory length $L$, or the bound $M$ in lazyNUTS) and tuning them to the model at hand; and a natural direction for future work is to incorporate automatic step-size adaptation (such as in \cite{NUTS}), reducing the need for manual tuning.

\begin{acks}
We would like to thank Alex Lew, Matthijs V\'{a}k\'{a}r and Fabian Zaiser for valuable discussions and advice.
This work was partially supported by the National Research Foundation, Singapore, under its RSS Scheme (NRF-RSS2022-009), ERC Grant BLAST, AFOSR under award number FA9550-21-1-0038, and grants from ARIA Safeguarded AI.
\end{acks}

\bibliography{biblio}

\newpage
\appendix

\section{Proof of the Fundamental Lemma~\eqref{eqn:logrel-fundlemma}, from \cref{thm:ad}}\label{appendix:fundamental-lemma}

The proof is by induction on typing derivations, and requires preliminary lemmas on support enlargement (\cref{lem:support}), closure under finite intersection (\cref{lem:intersection}), restriction (\cref{lem:restriction}), gluing (\cref{lem:gluing}), and composition (\cref{lem:composition}).

We note that $\mathbb X = \RR^{\mathcal A}$ is itself an analytic cylinder (with empty support), so c-analytic; and $\varnothing$ is c-analytic (the empty union), with $\rel{\varnothing}{\tau}$ a singleton for every~$\tau$.

\begin{lemma}[Support enlargement]\label{lem:support}
  If $U$ is an analytic cylinder with support~$B$ and $B' \supseteq B$ is finite, then $U$ is also an analytic cylinder with support~$B'$.
\end{lemma}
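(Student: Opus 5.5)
The plan is to take the witnessing data for $U$ with support $B$, namely an open set $V\subseteq\RR^B$ and analytic functions $g_1,\dots,g_n\colon V\to\RR$ with
\[U=\{x\in\mathbb X\mid x|_B\in V\ \&\ g_1(x|_B)\le 0\ \&\ \dots\ \&\ g_n(x|_B)\le 0\},\]
and pull everything back along the coordinate projection $\pi\colon\RR^{B'}\to\RR^B$, $y\mapsto y|_B$, which is available because $B\subseteq B'$.

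Concretely, I will set $V'=\pi^{-1}(V)\subseteq\RR^{B'}$ and $g'_i=g_i\circ\pi|_{V'}\colon V'\to\RR$. The checks are as follows.

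\begin{enumerate}
\item $V'$ is open, since $\pi$ is continuous; in fact $V'\cong V\times\RR^{B'\setminus B}$.
\item $B'$ is finite by hypothesis.
\item Each $g'_i$ is analytic. The projection $\pi$ is linear, hence analytic, and a composite of analytic maps is analytic. More directly, the Taylor series of $g_i$ about $\pi(y)$, read as a power series in the $B'$-coordinates (constant in the coordinates of $B'\setminus B$), converges to $g'_i$ on a neighbourhood of $y$ of the form $W\times\RR^{B'\setminus B}$.
\end{enumerate}

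Then for every $x\in\mathbb X$ we have $(x|_{B'})|_B=x|_B$. It follows that $x|_{B'}\in V'$ if and only if $x|_B\in V$, and that $g'_i(x|_{B'})=g_i(x|_B)$. The defining conditions of $U$ relative to $(B,V,g_i)$ and relative to $(B',V',g'_i)$ therefore coincide pointwise, so $U$ is an analytic cylinder with support $B'$.

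I expect no real obstacle. The only point needing a remark is the analyticity of $g_i\circ\pi$, and this is the standard fact that analytic functions are closed under composition with linear maps.
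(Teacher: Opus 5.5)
Your proposal is correct and follows the paper's proof: you pull back the open set and the constraint functions along the coordinate projection $\pi\colon\mathbb{R}^{B'}\to\mathbb{R}^{B}$, getting openness of $\pi^{-1}(V)$ from continuity and analyticity of $g_i\circ\pi$ from linearity of $\pi$. Your explicit check that $(x|_{B'})|_B = x|_B$, which gives the set equality, spells out a step the paper leaves implicit.
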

\begin{proof}
  Let $V \subseteq \RR^B$ and $g_k : V \to \RR$ be the open set and constraints for~$U$.
  Let $\pi : \RR^{B'} \to \RR^B$ be the coordinate projection, which is continuous and linear (hence analytic).
  Take $V' = \pi^{-1}(V)$ (open since $\pi$ is continuous) and constraints $g_k \circ \pi$.
\end{proof}

\begin{lemma}[Closure under finite intersection]\label{lem:intersection}
  The intersection of two analytic cylinders is an analytic cylinder.
  Consequently, c-analytic sets are closed under finite intersection.
\end{lemma}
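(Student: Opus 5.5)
The first claim follows by putting both cylinders over a common support and then concatenating their descriptions. Let $U_1$ be an analytic cylinder with support $B_1$, open set $V_1\subseteq\RR^{B_1}$ and constraints $g_1,\dots,g_n$. Let $U_2$ have support $B_2$, open set $V_2\subseteq\RR^{B_2}$ and constraints $h_1,\dots,h_m$. Set $B=B_1\cup B_2$, which is finite.

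By \cref{lem:support}, both $U_1$ and $U_2$ are analytic cylinders with support $B$. Concretely, their open sets become $\pi_1^{-1}(V_1)$ and $\pi_2^{-1}(V_2)$, where $\pi_j:\RR^B\to\RR^{B_j}$ are the coordinate projections. Their constraints become $g_k\circ\pi_1$ and $h_k\circ\pi_2$.

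Now take
\[V=\pi_1^{-1}(V_1)\cap\pi_2^{-1}(V_2),\]
which is open as a finite intersection of open sets. For the constraints, take the restrictions to $V$ of all the $g_k\circ\pi_1$ and all the $h_k\circ\pi_2$; each is still analytic on the smaller open set $V$. Unfolding the membership conditions gives
\[U_1\cap U_2=\{x\mid x|_B\in V,\ g_k(x|_{B_1})\le 0\ \forall k,\ h_k(x|_{B_2})\le 0\ \forall k\}.\]
This is exactly an analytic cylinder with support $B$. The only point needing care is that each constraint is required to be defined on $V$. This holds because $V$ lies inside both enlarged domains, and restricting an analytic function to an open subset keeps it analytic.

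For the consequence, let $U=\biguplus_i U_i$ and $W=\biguplus_j W_j$ be c-analytic, where the $U_i$ and $W_j$ are analytic cylinders and the index sets are countable. Then
\[U\cap W=\biguplus_{(i,j)}(U_i\cap W_j).\]
This union is indexed by a countable set. It is disjoint: if $(i,j)\ne(i',j')$, then either $i\ne i'$ or $j\ne j'$, and in either case the corresponding sets are already disjoint. By the first part, each $U_i\cap W_j$ is an analytic cylinder. Hence $U\cap W$ is c-analytic, and closure under finite intersection follows by induction.

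There is no real obstacle here. The only subtlety is bookkeeping the domains of the constraint functions after enlarging the support, and \cref{lem:support} handles that.
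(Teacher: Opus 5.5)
Your proof is correct and follows the paper's argument. You enlarge both cylinders to the common support $B_1\cup B_2$ via \cref{lem:support}, intersect the open sets, and concatenate the constraint lists (your explicit restriction to $V$ is the only added bookkeeping); for the c-analytic case you use the doubly indexed disjoint union $\biguplus_{i,j}(U_i\cap W_j)$, exactly as the paper does.
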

\begin{proof}
  Given analytic cylinders $U_1, U_2$ with supports $B_1, B_2$, enlarge both to the common support $B = B_1 \cup B_2$ (\cref{lem:support}), then intersect the open sets and concatenate the constraint lists (cf.~\cite{DBLP:conf/lics/HuotLMS23}, Cor.~B.10 for the finite-dimensional case).
  For c-analytic closure: if $W = \biguplus_i A_i$ and $W' = \biguplus_j C_j$
  are c-analytic, then $W \cap W' = \biguplus_{i,j} (A_i \cap C_j)$,
  which is disjoint since the $A_i$ and $C_j$ are separately disjoint,
  and is a countable union of analytic cylinders.
\end{proof}

\begin{lemma}[Restriction]\label{lem:restriction}
  If $f \in \rel U \tau$ and $U' \subseteq U$ is c-analytic, then $f|_{U'} \in \rel{U'}\tau$.
\end{lemma}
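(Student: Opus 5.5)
We argue by induction on the structure of the type $\tau$, following the clauses defining $\rel U \tau$. The induction needs a base fact about PACAP functions first, and it is convenient to prove the result for arbitrary c-analytic $U' \subseteq U$ directly, not only for single cylinders.

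\emph{Base case $\tau=\tyreal$.} Let $f\colon U\to\RR$ be PACAP, witnessed by a countable partition $U=\biguplus_i U_i$ into analytic cylinders with supports $B_i$ and analytic $f_i\colon V_i\to\RR$. Write $U'=\biguplus_j C_j$ with each $C_j$ an analytic cylinder. Then
\[U'=U'\cap U=\biguplus_{i,j}(U_i\cap C_j),\]
which is a countable disjoint union. By \cref{lem:intersection}, each $U_i\cap C_j$ is an analytic cylinder, and by \cref{lem:support} we may take its support to be $B_i\cup B'_j$, where $B'_j$ is a support of $C_j$. On $U_i\cap C_j$ we use the analytic function $f_i\circ\pi$, where $\pi\colon\RR^{B_i\cup B'_j}\to\RR^{B_i}$ is the projection, defined on the open set $\pi^{-1}(V_i)$. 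The containment $U_i\cap C_j\subseteq \pi^{-1}(V_i)\times\RR^{\mathcal A\setminus(B_i\cup B'_j)}$ follows from $U_i\subseteq V_i\times\RR^{\mathcal A\setminus B_i}$. Finally, $f(x)=f_i(x|_{B_i})=(f_i\circ\pi)(x|_{B_i\cup B'_j})$ on this piece, so $f|_{U'}$ is PACAP.

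\emph{Unit and product cases.} The case $\tau=\tyunit$ is trivial. For products, both components are restricted and the induction hypothesis is applied to each.

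\emph{Sum case.} Take $f=\biguplus_{i} f_i$ with $U=\biguplus_i U_i$, each $U_i$ c-analytic and $f_i\in\rel{U_i}{\tau_i}$. Set $U'_i=U_i\cap U'$. By \cref{lem:intersection} each $U'_i$ is c-analytic, and the $U'_i$ partition $U'$. By the induction hypothesis, $f_i|_{U'_i}\in\rel{U'_i}{\tau_i}$, so $f|_{U'}=\biguplus_i f_i|_{U'_i}\in \rel{U'}{\tau}$.

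\emph{Function case.} This follows directly from the Kripke form of the definition. Any c-analytic $U''\subseteq U'$ is also a c-analytic subset of $U$, and $(f|_{U'})(u)=f(u)$ for $u\in U''$. So the defining condition for $f|_{U'}$ is a special case of the one for $f$, and no induction hypothesis is needed here.

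The only real work is the base case, in particular checking that intersecting pieces and enlarging supports keep the analytic witness compatible with the cylinder domains.
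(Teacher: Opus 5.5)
Your proof is correct and follows the paper's own argument essentially step for step. It uses induction on $\tau$; the real case refines the PACAP partition against the cylinders of $U'$ via \cref{lem:intersection} and lifts each $f_i$ to $f_i\circ\pi$ on the enlarged support; the sum case intersects the partition with $U'$; and the function case is read off directly from the Kripke clause. Your explicit check that $U_i\cap C_j\subseteq \pi^{-1}(V_i)\times\RR^{\mathcal A\setminus(B_i\cup B'_j)}$ is a detail the paper leaves implicit.
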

\begin{proof}
  By induction on~$\tau$.
  For $\tau = \tyreal$: if $f$ is PACAP on $U$ with partition $\{A_k\}_k$ into analytic cylinders, and $U' = \biguplus_m C_m$ is c-analytic, then $U' = \biguplus_{k,m} (A_k \cap C_m)$.
  Each $A_k \cap C_m$ is an analytic cylinder (\cref{lem:intersection}) with support $B_k \cup D_m$.
  On this piece, $f(x) = f_k(x|_{B_k})$, where $f_k$ is analytic on some open $V_k \subseteq \RR^{B_k}$.
  By support enlargement (\cref{lem:support}) from $B_k$ to $B_k \cup D_m$, the function $f_k$ lifts to $f_k \circ \pi$ on the enlarged open set, witnessing PACAP for each piece. Hence $f|_{U'}$ is PACAP.
  The unit and product cases are immediate.
  For $\tau = \tysum{i}{I}\tau_i$: if $f \in \rel U {\tysum{i}{I}\tau_i}$ via partition $U = \biguplus_i U_i$, then $U' = \biguplus_i (U_i \cap U')$ where each $U_i \cap U'$ is c-analytic (\cref{lem:intersection}), and $f_i|_{U_i \cap U'} \in \rel{U_i \cap U'}{\tau_i}$ by the induction hypothesis.
  For $\tau = \tyarrow{\tau_1}{\tau_2}$: any c-analytic $U'' \subseteq U'$ is also a c-analytic subset of~$U$, so the defining clause for~$\rel U {\tyarrow{\tau_1}{\tau_2}}$ directly gives the result.
\end{proof}

\begin{lemma}[Gluing]\label{lem:gluing}
  If $U = \biguplus_{i \in I} U_i$ is a countable partition into c-analytic sets and $f_i \in \rel{U_i}\tau$ for each~$i$, then the function $f$ defined by $f(u) = f_i(u)$ for $u \in U_i$ satisfies $f \in \rel U \tau$.
\end{lemma}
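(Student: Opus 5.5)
The proof goes by induction on the type~$\tau$, mirroring the proof of \cref{lem:restriction}. At each type we build a witness for $f$ on $U$ by combining the witnesses for the $f_i$ on the $U_i$. The one fact used repeatedly is that a countable disjoint union of countable disjoint unions of analytic cylinders is again such a union. Concretely, if $U_i=\biguplus_m A_{i,m}$, then $U=\biguplus_{i,m}A_{i,m}$, and this is a countable family because $I\times\NN$ is countable.

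\emph{Base type $\tyreal$.} Each $f_i$ is PACAP on $U_i$. So there is a countable partition $U_i=\biguplus_k A_{i,k}$ into analytic cylinders with supports $B_{i,k}$, together with analytic $f_{i,k}$ on open $V_{i,k}\subseteq\RR^{B_{i,k}}$, such that $f_i(x)=f_{i,k}(x|_{B_{i,k}})$ on $A_{i,k}$. The family $\{A_{i,k}\}_{i,k}$ is a countable partition of $U$ into analytic cylinders. On $A_{i,k}$ we have $f=f_i=f_{i,k}\circ(-)|_{B_{i,k}}$, so these same data witness that $f$ is PACAP. The domain $U$ is c-analytic by the same reindexing.

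\emph{Unit and product types.} The unit case is trivial. For a product, write $f_i=(f_i^1,f_i^2)$; the glued function is $(f^1,f^2)$, where $f^j$ is the gluing of the $f_i^j$. Apply the induction hypothesis componentwise.

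\emph{Sum types $\tysum{j}{J}\tau_j$.} Each $f_i$ comes with a partition $U_i=\biguplus_j U_{i,j}$ into c-analytic sets and components $f_{i,j}\in\rel{U_{i,j}}{\tau_j}$. Set $W_j=\biguplus_i U_{i,j}$. These sets partition $U$, and each is c-analytic by reindexing. Then $g_j$, the gluing of the $f_{i,j}$ over $i$, lies in $\rel{W_j}{\tau_j}$ by the induction hypothesis at $\tau_j$. Finally $f=\biguplus_j g_j$, because on $U_{i,j}$ both sides agree with $f_{i,j}$.

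\emph{Function types $\tyarrow{\tau_1}{\tau_2}$.} This is the step needing care, since the Kripke clause quantifies over an arbitrary c-analytic $U'\subseteq U$ that need not sit inside any single $U_i$. Let $g\in\rel{U'}{\tau_1}$. Put $U'_i=U'\cap U_i$, which is c-analytic by \cref{lem:intersection}. Then $g|_{U'_i}\in\rel{U'_i}{\tau_1}$ by \cref{lem:restriction}. Since $U'_i\subseteq U_i$, the hypothesis $f_i\in\rel{U_i}{\tyarrow{\tau_1}{\tau_2}}$ gives $\lambda u.\,f_i(u)(g(u))\in\rel{U'_i}{\tau_2}$. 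The sets $U'_i$ partition $U'$. So by the induction hypothesis for $\tau_2$, gluing these maps gives $\lambda u.\,f(u)(g(u))\in\rel{U'}{\tau_2}$, as required. I expect this case to be the main obstacle, since it is the only one needing restriction and intersection. It is also why the lemma must be stated for a countable partition into c-analytic sets, rather than into analytic cylinders: only then is the class of partitions closed under intersection with $U'$.
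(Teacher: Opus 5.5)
Your proof is correct and follows the paper's own argument essentially step for step. The paper also inducts on $\tau$, concatenates the cylinder partitions at the base type, and regroups into $W_j=\biguplus_i U_{i,j}$ at sum types; at function types it likewise intersects $U'$ with each $U_i$, restricts $g$ by \cref{lem:restriction}, applies the hypothesis on $f_i$, and glues with the induction hypothesis at $\tau_2$.
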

\begin{proof}
  By induction on $\tau$.
  For $\tau = \tyreal$: each $f_i$ is PACAP on $U_i$; concatenating their analytic-cylinder partitions gives a PACAP partition of $U$.
  The unit and product cases are immediate.
  For $\tau = \tysum{l}{I}\tau_l$: each $f_k \in \rel{U_k}{\tysum{l}{I}\tau_l}$ gives a partition $U_k = \biguplus_l U_{kl}$ with $g_{kl} \in \rel{U_{kl}}{\tau_l}$.
  Let $W_l = \biguplus_k U_{kl}$ for each $l \in I$; each $W_l$ is c-analytic (a countable disjoint union of c-analytic sets).
  Since the $U_k$ are pairwise disjoint, so are the $U_{kl}$ across different~$k$, giving $\biguplus_l W_l = \biguplus_k U_k = U$.
  By the induction hypothesis at $\tau_l$ (gluing the $g_{kl}$ over $W_l = \biguplus_k U_{kl}$), the resulting function on $W_l$ is in $\rel{W_l}{\tau_l}$.
  For $\tau = \tyarrow{\tau_1}{\tau_2}$: given c-analytic $U' \subseteq U$ and $g \in \rel{U'}{\tau_1}$, let $U'_i = U' \cap U_i$, which is c-analytic by \cref{lem:intersection}.
  By restriction (\cref{lem:restriction}), $g|_{U'_i} \in \rel{U'_i}{\tau_1}$.
  Since $U'_i \subseteq U_i$ is c-analytic and $f_i \in \rel{U_i}{\tyarrow{\tau_1}{\tau_2}}$,
  we get $\lambda u.\,f_i(u)(g(u)) \in \rel{U'_i}{\tau_2}$.
  By the induction hypothesis at $\tau_2$ (gluing over $U' = \biguplus_i U'_i$), $\lambda u.\,f(u)(g(u)) \in \rel{U'}{\tau_2}$.
\end{proof}

\begin{lemma}[Composition]\label{lem:composition}
  If $c : \RR^n \to \RR$ is PAP and $g_1,\dots,g_n$ are PACAP on a c-analytic set~$U$, then $c \circ (g_1,\dots,g_n)$ is PACAP on~$U$.
\end{lemma}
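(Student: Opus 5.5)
The plan is to reduce to the finite-dimensional composition result for PAP functions, one common refinement of the partitions of the $g_j$ at a time. First, take PACAP witnesses for each $g_j$: partitions $U=\biguplus_{k}A^{j}_{k}$ into analytic cylinders with supports $B^j_k$ and analytic $g_{j,k}$ on open $V^j_k\subseteq\RR^{B^j_k}$. Form the common refinement
\[
U=\biguplus_{(k_1,\dots,k_n)} A^1_{k_1}\cap\dots\cap A^n_{k_n}.
\]
This is countable, since it is indexed by $\NN^n$. Each piece is an analytic cylinder by \cref{lem:intersection}, with support $B=B^1_{k_1}\cup\dots\cup B^n_{k_n}$. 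By \cref{lem:support} we may regard all the $g_{j,k_j}$ as analytic functions $\tilde g_j=g_{j,k_j}\circ\pi_j$ on a common open set $V\subseteq\RR^B$. Here $V$ is the intersection of the pulled-back opens together with the open set of the cylinder. On the piece, $g_j(x)=\tilde g_j(x|_B)$ and the piece is described by $x|_B\in V$ plus finitely many analytic inequalities $h_m(x|_B)\le 0$.

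Next, fix such a piece $W$ and use the PAP structure of $c$: $\RR^n=\biguplus_{i=1}^{N}P_i$ with $P_i=\{y\in V_i\mid h^i_1(y)\le0,\dots\}$ and $c=c_i$ on $P_i$, with $c_i$ analytic on $V_i$. Split
\[
W=\biguplus_i\bigl(W\cap (\tilde g\circ\mathord{|_B})^{-1}(P_i)\bigr),
\qquad \tilde g=(\tilde g_1,\dots,\tilde g_n).
\]
Each part is an analytic cylinder with support $B$. Its open set is $V\cap\tilde g^{-1}(V_i)$, which is open by continuity of $\tilde g$. Its constraints are the $h_m$ together with $h^i_\ell\circ\tilde g$, which are analytic on that open set because compositions of analytic maps are analytic. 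On this part,
\[
c\circ(g_1,\dots,g_n)(x)=(c_i\circ\tilde g)(x|_B),
\]
and $c_i\circ\tilde g$ is analytic on the open set. So the part carries a PACAP witness.

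Collecting all parts over all refinement indices gives a countable partition of $U$ into analytic cylinders, each with an analytic representative. This is exactly the PACAP condition. One could alternatively phrase this step via \cref{lem:gluing} at type $\tyreal$.

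The main point needing care is the pullback step. The inequality constraints $h^i_\ell$ live only on the open set $V_i$, so we must first shrink the domain to $\tilde g^{-1}(V_i)$ before composing. This is precisely why the definition allows the open set $V$ in analytic cylinders: strict and domain conditions are absorbed into it. The remaining checks are bookkeeping. Disjointness holds because the $P_i$ partition $\RR^n$ and the refinement pieces are disjoint. Countability holds because a countable union of finite families is countable.
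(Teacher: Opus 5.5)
Your proposal is correct and follows essentially the same route as the paper's proof: you form a common refinement of the PACAP partitions of the $g_j$ via \cref{lem:intersection} and \cref{lem:support}, then pull back each PAP piece $P_i$ of $c$ through the common analytic representative, using open set $V\cap\tilde g^{-1}(V_i)$ and constraints $h^i_\ell\circ\tilde g$, on which $c_i\circ\tilde g$ is analytic. You are slightly more explicit than the paper in intersecting the cylinder's open set with the domains of the PACAP witnesses; the paper leaves this step implicit in its ``common open set $W_k$''.
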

\begin{proof}
  Since $c$ is PAP, there is a finite partition $\RR^n = \biguplus_j P_j$ where each $P_j$ is an analytic set with open set~$V_j$ and constraints $h_{j,i} : V_j \to \RR$, together with analytic $c_j : V_j \to \RR$ agreeing with $c$ on~$P_j$.
  Let $\{A_k\}_k$ be a common refinement of the PACAP partitions of $g_1,\dots,g_n$, obtained by iterated intersection (\cref{lem:intersection}); by support enlargement (\cref{lem:support}), on each~$A_k$ the tuple $(g_1,\dots,g_n)$ is analytic on a common open set $W_k \subseteq \RR^{B_k}$.
  Refine further:
  $A_k \cap (g_1,\dots,g_n)^{-1}(P_j)$ is an analytic cylinder with support~$B_k$,
  open set $W_k \cap (g_1,\dots,g_n)|_{B_k}^{-1}(V_j)$ (open since $(g_1,\dots,g_n)|_{B_k}$ is continuous),
  and constraints consisting of those of~$A_k$ together with $h_{j,i} \circ (g_1,\dots,g_n)|_{B_k}$ (analytic, as a composition of analytic functions on open sets).
  On each such piece, $c_j \circ (g_1,\dots,g_n)|_{B_k}$ is analytic on the same open set.
\end{proof}

\noindent\textbf{Fundamental lemma}~\eqref{eqn:logrel-fundlemma}\textbf{, from \cref{thm:ad}.}\enspace
\textit{If\/ $\judg{x_1:\tau_1,\dots,x_n:\tau_n}{e}{\tau}$ and given c-analytic $U$ and $f_1\in \rel U {\tau_1}$, \dots, $f_n\in \rel U {\tau_n}$, then $\lambda u.\,\sem{e}(f_1(u),\dots,f_n(u))\in \rel U \tau$.}

\begin{proof}
  By induction on the typing derivation of $\judg{x_1:\tau_1,\dots,x_n:\tau_n}{e}{\tau}$.
  Throughout, write $\gamma(u) = (f_1(u), \dots, f_n(u))$ for the substituted valuation.

  The real literal ($\lambda u.\,r$ is constant, hence PACAP), variable ($\lambda u.\,\sem{x_j}(\gamma(u)) = f_j$), and unit cases are immediate.

  \paragraph{Lambda.}
  If $e = \tmabs{x}{\tau_1}{e'}$ with $\judg{\Gamma, x:\tau_1}{e'}{\tau_2}$.
  Take any c-analytic $U' \subseteq U$ and $g \in \rel{U'}{\tau_1}$.
  By restriction, each $f_j|_{U'} \in \rel{U'}{\tau_j}$.
  By the induction hypothesis for $e'$ at $U'$:
  $\lambda u.\,\sem{e'}(f_1(u), \dots, f_n(u), g(u)) \in \rel{U'}{\tau_2}$.
  Since $\sem{\tmabs{x}{\tau_1}{e'}}(\gamma(u))(g(u)) = \sem{e'}(\gamma(u), g(u))$, this shows $\lambda u.\,\sem{e}(\gamma(u)) \in \rel U {\tyarrow{\tau_1}{\tau_2}}$.

  \paragraph{Application.}
  If $e = \tmapp{e_1}{e_2}$ with $\judg{\Gamma}{e_1}{\tyarrow{\tau_1}{\tau_2}}$ and $\judg{\Gamma}{e_2}{\tau_1}$.
  By the induction hypothesis, $h_1 := \lambda u.\,\sem{e_1}(\gamma(u)) \in \rel U {\tyarrow{\tau_1}{\tau_2}}$
  and $h_2 := \lambda u.\,\sem{e_2}(\gamma(u)) \in \rel U {\tau_1}$.
  Since $U$ is itself c-analytic, instantiating with $U' = U$ and $g = h_2$:
  $\lambda u.\,h_1(u)(h_2(u)) \in \rel U {\tau_2}$.

  The pair and projection cases follow directly from the product-type clause.

  \paragraph{Injection.}
  If $e = \tminj{i}{e'}$: by the induction hypothesis $g := \lambda u.\,\sem{e'}(\gamma(u)) \in \rel U {\tau_i}$.
  The partition with $U_i = U$ and $U_j = \varnothing$ for $j \neq i$ witnesses $\lambda u.\,\sem{e}(\gamma(u)) \in \rel U {\tysum{i}{I}\tau_i}$, since $\varnothing$ is c-analytic and $\rel{\varnothing}{\tau_j}$ is inhabited.

  \paragraph{Case.}
  If $e = \tmcase{e'}{x}{e_i}{I}$ with $\judg{\Gamma}{e'}{\tysum{i}{I}\tau_i}$ and $\judg{\Gamma, x:\tau_i}{e_i}{\tau}$ for each $i \in I$.
  By the induction hypothesis on $e'$:
  $\lambda u.\,\sem{e'}(\gamma(u)) \in \rel U {\tysum{i}{I}\tau_i}$,
  giving $U = \biguplus_{i \in I} U_i$ with each $U_i$ c-analytic and $h_i \in \rel{U_i}{\tau_i}$.
  For each $i$: by restriction, $f_j|_{U_i} \in \rel{U_i}{\tau_j}$ for all~$j$, and $h_i \in \rel{U_i}{\tau_i}$.
  By the induction hypothesis for $e_i$ at $U_i$ with the extended valuation:
  $\lambda u.\,\sem{e_i}(f_1(u), \dots, f_n(u), h_i(u)) \in \rel{U_i}\tau$.
  Since $\sem{e}(\gamma(u)) = \sem{e_i}(\gamma(u), h_i(u))$ for $u \in U_i$,
  the result follows from gluing (\cref{lem:gluing}).

  \paragraph{Constants.}
  A constant $c : \tyarrow{\typroduct{\tyreal}{\cdots}}{\tyreal}$ denotes a total PAP function $\RR^n\to\RR$ (Example~\ref{ex:PAP}).
  Take c-analytic $U' \subseteq U$ and $(g_1,\dots,g_n) \in \rel{U'}{\typroduct{\tyreal}{\cdots}}$, so each $g_i$ is PACAP on~$U'$.
  By the composition lemma (\cref{lem:composition}), $\lambda u.\,c(g_1(u),\dots,g_n(u)) \in \rel{U'}{\tyreal}$.
\end{proof}

\section{Encoding the Examples and Verifying PACAP}
\label{appendix:recursion-encoding}

The core calculus in \cref{sec:core-calculus} has countable sum types and case analysis but no explicit fixpoint operator.
We show here that this suffices to encode primitive recursion and stream corecursion, as used in \cref{sec:geometric}. For the geometric distribution, which involves unbounded search, we verify the PACAP property of the resulting likelihood directly.
These encodings are standard in type theory and programming language semantics, but we include them for completeness, in case the relationship between the Haskell examples and the core calculus may not be immediately apparent.

The key observation is that countable case expressions make the syntax \emph{infinitary}: a single well-typed term may be an infinite (but countably branching) syntax tree.
Haskell's general recursion is used in practice to describe these infinite terms finitely, but the denotational semantics is defined on the infinite terms themselves.

\paragraph{Primitive recursion over $\NN$.}
Since $\NN = \tysum{n}{\NN}\tyunit$ is a type in the calculus, countable case analysis directly provides primitive recursion.
Given a base case $z : \tau$ and a step function $s : \NN \to \tau \to \tau$ (both terms in the calculus), the primitive recursor $\mathit{rec}(z, s) : \NN \to \tau$ is:
\[
  \mathit{rec}(z, s) \;=\; \lambda n.\;\tmcase{n}{\_}{e_i}{\NN}
\]
where each branch $e_i$ is the $i$-fold syntactic unrolling of $s$: $e_0 = z$, $e_1 = s\;\tminj{0}{()}\;z$, $e_2 = s\;\tminj{1}{()}\;(s\;\tminj{0}{()}\;z)$, and so on.
Each $e_i$ is a finite well-typed term of type $\tau$, built by literally inlining $s$ a total of $i$ times.
No fixpoint combinator is needed; the countable case expression directly enumerates all branches.

\paragraph{Stream corecursion.}
A stream of type $\NN \to \tyreal$ is a function from $\NN$ to $\tyreal$.
Given a seed $a : \sigma$, a head function $h : \sigma \to \tyreal$, and a tail function $t : \sigma \to \sigma$, the corecursively defined stream $\mathit{unfold}(a, h, t) : \NN \to \tyreal$ is:
\[
  \mathit{unfold}(a, h, t) \;=\; \lambda n.\;\tmcase{n}{\_}{h(\underbrace{t \circ \cdots \circ t}_{i\text{ times}}(a))}{\NN}
\]
That is, the $i$-th branch is the finite term $h(t^i(a))$, where $t^i$ denotes $i$-fold syntactic composition of $t$.

\paragraph{Models covered by the core calculus.}
Using primitive recursion and stream corecursion, the following models from \cref{sec:lazyppl-overview,sec:experiments} can be encoded in the core calculus, and are therefore PACAP by \cref{thm:ad}:
\begin{itemize}
\item \textbf{Gaussian mixture clustering} (\cref{sec:clustering}).
The number of clusters is drawn from a Poisson distribution (a natural number, hence a countable sum type).
The function \lstinline|mapM| then recurses over the finite list \lstinline|[1..k]|, which is primitive recursion over~$\NN$.
\item \textbf{Step regression} (\cref{lst:stepReg}).
The Poisson point process \lstinline|poissonPP| produces an infinite stream of changepoints by stream corecursion (unfold with seed = current position).
The function \lstinline|splice| constructs a piecewise function by consuming this stream, also corecursively.
\item \textbf{Polynomial regression} (\cref{sec:experiments}).
The degree is drawn from a Poisson distribution, and the coefficients are sampled by primitive recursion over the resulting finite list.
\end{itemize}

\paragraph{Models requiring direct verification.}
Recall from \cref{lst:geometric} that the geometric distribution samples an IID Bernoulli sequence and returns the index of the first success.
For simplicity, we take $\mathcal{A} = \NN$ (the monadic seed-splitting into $\NN^*$ described in \cref{sec:monads-in-lambda-calculus} does not affect the argument).

The stream \lstinline|iid (bernoulli p)| is encoded by a $\lambda$-term that reads off coordinates: $\lambda\omega.\lambda n.\,(\omega\;n < p)$, of type $(\NN\to\tyreal)\to(\NN\to\{\tmtrue,\tmfalse\})$, using the PAP comparison constant.
(Recall that booleans $\{\tmtrue,\tmfalse\} = \tysum{b}{\{0,1\}}\tyunit$ are a special case of countable sums.)

The functions \lstinline|findIndex| and \lstinline|find| perform unbounded searches and so cannot be expressed in the core calculus, which for simplicity has no general fixpoint operator.
Consequently, \cref{thm:ad} does not directly apply to programs built from these functions, including the geometric distribution (\cref{lst:geometric}) and the random walk model (\cref{lst:walk}).

Nonetheless, the PACAP property of these likelihoods can be verified directly; we illustrate with the geometric distribution.
The geometric distribution partitions the seed space $\RR^\NN$ (up to a measure-zero set) into countably many analytic cylinders: for each $i \in \NN$, the set $\{\omega \mid \omega_0 \geq p,\;\dots,\;\omega_{i-1}\geq p,\;\omega_i < p\}$ is an analytic cylinder with support $\{0,\dots,i\}$, since the non-strict constraints $p - \omega_j \leq 0$ are analytic and the strict constraint $\omega_i < p$ is absorbed into the open set $V$ (\cref{def:PACAP}).
The uncovered set $\{\omega \mid \forall n.\;\omega_n \geq p\}$ has measure zero under any product measure with $p\in(0,1)$.
On each cylinder, the geometric distribution returns a fixed natural number, so any likelihood obtained by composing with a scoring function $\NN \to \tyreal$ is PACAP by definition.
The same cylinder reasoning applies to any program whose likelihood factors through countably many deterministic branches.

More generally, programs involving unbounded search go beyond our core calculus.
Formal verification of almost-sure termination is an active research area (e.g.~\cite{Barthe_Katoen_Silva_2020}).
Combining such formal guarantees with PACAP-ness would require an `omega-PACAP' notion, analogous to the $\omega$-PAP spaces of~\cite{DBLP:conf/lics/HuotLMS23}, with recursion via chain-complete partial orders that are compatible with the PACAP structure.
For the models considered in this paper, the geometric distribution and random walk involve unbounded search and require the direct cylinder partition verification illustrated above, while the clustering, step regression, and polynomial regression models use only bounded recursion or corecursion and are covered directly by \cref{thm:ad}.

\section{Additional Proofs and iMCMC Kernel}
\subsection{iMCMC Kernel}
\label{app:iMCMCkernel}
One iteration of the iMCMC algorithm (\cref{sec:iMCMC}) yields the kernel:
\begin{equation}
  \label{eq:iMCMCkernel}
\kappa(\mathbf{q}, A) = \int_{\mathbb{Y}}\Bigl([I(\mathbf{q}, \mathbf{p}) \in (A \times \mathbb{Y}) \cap \mathbb{S}]\tilde{\alpha}(\mathbf{q}, \mathbf{p}) + [(\mathbf{q}, \mathbf{p}) \in (A \times \mathbb{Y}) \cap \mathbb{S}](1-\tilde{\alpha}(\mathbf{q}, \mathbf{p}))\Bigr)p_\mathbf{q}(\mathbf{p})\mu_{\mathbb{Y}}(\diff \mathbf{p}).
\end{equation}
Informally, $\kappa(\mathbf{q}, A)$ tells us the probability of starting with a position $\mathbf{q}$ and returning a position in $A$. 
The first term of the sum corresponds to accepting the proposed state $I(\mathbf{q}, \mathbf{p})$, and the second term corresponds to rejecting it and returning the same position $\mathbf{q}$.

\subsection{Proof of \cref{prop:HMCacc_ratio}}
\label{proof:accHMC}
\begin{proof} 
  By \cref{prop:iMCMC-stationary-kernel}, it is enough to let 
  \[\alpha(\mathbf{q}, \mathbf{p}) = \frac{\zeta(\mathbf{q}', \mathbf{p}')}{\zeta(\mathbf{q}, \mathbf{p})} \cdot \left(\frac{\diff (\text{Leb}_{2n}\circ(\Psi^{(L)})^{-1})}{\diff \text{Leb}_{2n}}\right)(\mathbf{q}, \mathbf{p}).\]

  Using that the leapfrog integrator $\Psi$ is volume preserving (\cref{prop:leapfrog-reversible-volume-preserving}), i.e. $\frac{\diff (\text{Leb}_{2n}\circ(\Psi^{(L)})^{-1})}{\diff \text{Leb}_{2n}}(\mathbf{q}, \mathbf{p}) = 1$ we can simplify $\alpha$:
  \[\alpha(\mathbf{q}, \mathbf{p}) = \frac{\zeta(\mathbf{q}', \mathbf{p}')}{\zeta(\mathbf{q}, \mathbf{p})} = \frac{\exp(-H(\mathbf{q}', \mathbf{p}'))}{\exp(-H(\mathbf{q}, \mathbf{p}))} = \frac{l(\mathbf{q}')\varphi_n(\mathbf{p}')}{l(\mathbf{q})\varphi_n(\mathbf{p})}.\]
\end{proof}

\subsection{Reduction of $\psi$ to $\tau$ on Sites with $0$ Gradient}
\begin{proposition}\label{proof:rotation-constants}
For $\epsilon_q=\sin\theta$ and $\epsilon_p=\tan(\theta/2)$, the map $\psi$ restricted to any site $a$ with $G(\mathbf{q})_a=0$ equals the rotation $\tau$.
\end{proposition}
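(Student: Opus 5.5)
The plan is a direct computation. Since every map in $\psi=\phi_{\epsilon_p}^P\circ\phi_{\epsilon_q}^Q\circ\phi_{\epsilon_p}^P$ acts site-wise, apart from the gradient term $G$, I would fix a site $a$ and track only the pair $(q,p)=(\mathbf{q}_a,\mathbf{p}_a)$.

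One caveat must be stated first. The second half-kick evaluates $G$ at the drifted position, not at $\mathbf{q}$. So the hypothesis has to be read as $G_a=0$ at both positions at which $\psi$ evaluates the gradient. This is exactly the situation in which the proposition is used in \cref{prop:HMCmod-properties}, where $G(\mathbf{q}^{(i)})_a=0$ along the whole trajectory. Under that reading, each half-kick at site $a$ reduces to the linear map $(q,p)\mapsto(q,\,p-\epsilon_p q)$, and the drift is $(q,p)\mapsto(q+\epsilon_q p,\,p)$.

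The next step is to compose these three linear maps, which gives
\[q'=(1-\epsilon_q\epsilon_p)\,q+\epsilon_q\,p,\qquad p'=-\epsilon_p(2-\epsilon_q\epsilon_p)\,q+(1-\epsilon_q\epsilon_p)\,p.\]
It then remains to match the coefficients with those of $\tau$, namely $\cos\theta$, $\sin\theta$, $-\sin\theta$ and $\cos\theta$.

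The key identity uses $\sin\theta=2\sin(\theta/2)\cos(\theta/2)$:
\[\epsilon_q\epsilon_p=2\sin(\theta/2)\cos(\theta/2)\tan(\theta/2)=2\sin^2(\theta/2)=1-\cos\theta.\]
This gives $1-\epsilon_q\epsilon_p=\cos\theta$ for both diagonal entries, and the top-right entry is $\epsilon_q=\sin\theta$ by definition. For the bottom-left entry,
\[\epsilon_p(2-\epsilon_q\epsilon_p)=\tan(\theta/2)(1+\cos\theta)=\tan(\theta/2)\cdot 2\cos^2(\theta/2)=\sin\theta.\]
Hence $p'=-\sin\theta\,q+\cos\theta\,p$, and $\psi$ acts on site $a$ as $\tau$. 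The condition $\theta\in(0,\pi/2)$ only ensures that $\tan(\theta/2)$ is defined and positive.

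The only real obstacle is the caveat about where $G$ is evaluated. The algebra itself is routine once the half-angle identities are used.
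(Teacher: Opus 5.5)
Your proposal is correct and follows the paper's proof: you compose the two half-kicks and the drift into the same $2\times 2$ matrix, then use $\epsilon_p\epsilon_q = 1-\cos\theta$ and $\epsilon_p(2-\epsilon_p\epsilon_q)=\sin\theta$. Your caveat that the second half-kick evaluates $G$ at the drifted position is a legitimate sharpening that the paper's statement glosses over. The hypothesis should read $G_a=0$ at both positions used in the step, which holds along the trajectory in \cref{prop:HMCmod-properties}.
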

\begin{proof}
At such a site $\phi^P_t$ reduces to $(q,p)\mapsto(q,\,p-tq)$, so composing $\psi=\phi^P_{\epsilon_p}\circ\phi^Q_{\epsilon_q}\circ\phi^P_{\epsilon_p}$ on $(q_a,p_a)$ gives
\[
\begin{pmatrix} q'\\ p'\end{pmatrix}=
\begin{pmatrix} 1-\epsilon_p\epsilon_q & \epsilon_q\\[2pt] -\epsilon_p(2-\epsilon_p\epsilon_q) & 1-\epsilon_p\epsilon_q\end{pmatrix}
\begin{pmatrix} q\\ p\end{pmatrix}.
\]
Since $\epsilon_p\epsilon_q=\tan(\theta/2)\sin\theta=2\sin^2(\theta/2)=1-\cos\theta$, the diagonal entries equal $\cos\theta$ and the top-right entry is $\sin\theta$.
Moreover $\epsilon_p(2-\epsilon_p\epsilon_q)=\tan(\theta/2)(1+\cos\theta)=\sin\theta$, so the bottom-left entry is $-\sin\theta$. The matrix is therefore exactly $\tau$.
\end{proof}

\subsection{Proof of \cref{prop:HMCmod-properties}}
\label{proof:HMC1}
\begin{proof}
  \leavevmode
\begin{itemize}
  \item \cref{cond:non-visited-cancel-out}: Let $I' = F' \circ \tau^L$, where $F': \mathbb{R}^2 \rightarrow \mathbb{R}^2$ is the momentum flip: $F'(q, p)= (q, -p)$. 
  We get that $I'(q, p) = (\cos (L\theta)q+\sin(L\theta)p, \sin(L\theta)q -\cos (L\theta)p).$
  $I'$ is a measure preserving involution (with respect to $\text{Leb}_{2}$) on $\mathbb{R}^2$. 
  Moreover, $(\varphi_{X}\times \varphi_{Y})(q, p) = (\varphi_{X}\times \varphi_{Y})(I'(q,p))$.

  \item \cref{cond:visited-property}: Let $T$ be a finite subset of $\mathcal{A}$ and let $(\mathbf{q}, \mathbf{p}) \in \mathbb{S}$ with $v(\mathbf{q}, \mathbf{p}) = T$.
  This means that $\cup_{i=0}^{L} v_l(\mathbf{q}^{(i)}) = T$, where $(\mathbf{q}^{(i)}, \mathbf{p}^{(i)}) = \psi^{i}(\mathbf{q}, \mathbf{p})$.
  This implies that in order to compute $G(\mathbf{q}^{(i)})_T$ one only needs to know $\mathbf{q}^{(i)}_T$.
  By induction we can prove that is enough to know $\mathbf{q}_T, \mathbf{p}_T$ to compute $\mathbf{q}^{(i)}_T$ and $\mathbf{p}^{(i)}_T$. 
  So changing the values at any sites $\mathcal{A}\setminus T$ in $\mathbf{q}, \mathbf{p}$ would not change $(\mathbf{q}^{(i)}_T,\mathbf{p}^{(i)}_T)$. 
  Therefore if $v(\mathbf{q}, \mathbf{p}) = T$ then $\{(\mathbf{x}, \mathbf{y})\in \mathbb{X} \times \mathbb{Y} \mid (\mathbf{x}_T, \mathbf{y}_T) = (\mathbf{q}_T, \mathbf{p}_T)\} \subseteq v^{-1}(T)$.
  We can deduce that there exists a measurable set $C_T \in \Sigma_{X^{|T|}} \otimes \Sigma_{Y^{|T|}}$ such that $v^{-1}(T) = \{(\mathbf{q}, \mathbf{p}) \mid (\mathbf{q}_T, \mathbf{p}_T) \in C_T\}$.

  \item \cref{cond:involution-decomposition}: Consider the following maps with domain and codomain included in $\mathbb{R}^{T}$:
  \begin{align*}
    \psi_T  = \phi_T^P \circ \phi_T^Q \circ \phi_T^P, &\qquad F_T(\mathbf{q}, \mathbf{p})  = (\mathbf{q}, -\mathbf{p})\\
    \phi_T^P(\mathbf{q}, \mathbf{p}) = (\mathbf{q}, \mathbf{p} - \epsilon_p\mathbf{q} - \epsilon_p G_T(\mathbf{q})), &\qquad \phi_T^Q(\mathbf{q}, \mathbf{p}) = (\mathbf{q} + \epsilon_q\mathbf{p}, \mathbf{p})
  \end{align*} 
  Since $v(\mathbf{q}, \mathbf{p}) = T$ then it is enough to know the value of $\mathbf{q}^{(L)}$ at sites $T$ to check if $(F \circ \psi^L)(\mathbf{q}, \mathbf{p})$ is in $\mathbb{S}$ or not. 
  Hence, the map $I_T: C_T \rightarrow C_T$ is well defined: 
  \begin{equation*}
    I_T(\mathbf{q}_T, \mathbf{p}_T) := 
    \begin{cases}
        (F_T \circ \psi^L_T)(\mathbf{q}_T, \mathbf{p}_T) & \text{ if }  (F \circ \psi^L)(\mathbf{q}, \mathbf{p}) \in \mathbb{S}\\
        (\mathbf{q}_T, \mathbf{p}_T) &  \text{ otherwise}.
    \end{cases}
  \end{equation*}
  It is easy to check that $I_T$ is a measure preserving involution on $C_T$.
  
  Since $G(\mathbf{q}^{(i)})_j = 0$ for any $j \in \mathcal{A}\setminus T$ we get $(\mathbf{q}^{(L)}_j, \mathbf{p}^{(L)}_j) = I'(\mathbf{q}_j, \mathbf{p}_j)$.
  Let $(\mathbf{q}', \mathbf{p}') = I(\mathbf{q}, \mathbf{p})$, then by definition of $I$, $(\mathbf{q}'_j, \mathbf{p}'_j) = I'(\mathbf{q}_j, \mathbf{p}_j)$ for any $j \in \mathcal{A}\setminus T$.
\end{itemize}
\end{proof}

\subsection{Proof of \cref{prop:HMC2-properties}}
\label{proof:HMC2conditions}
\begin{proof}
\leavevmode
\begin{itemize}
  \item \cref{cond:non-visited-cancel-out}: On a site $j$ with $G(\mathbf{q}^{(i)})_j = 0$ for all $i$, the kick $\phi_t^P(\mathbf{q}, \mathbf{p}) = (\mathbf{q}, \mathbf{p} - tG(\mathbf{q}))$ is the identity on the $j$-th coordinate, so $\psi = \phi_{\theta/2}^P \circ \phi_\theta^{Q,P} \circ \phi_{\theta/2}^P$ acts there as its middle factor $\phi_\theta^{Q,P}$, i.e.\ the clockwise rotation $\tau$ by $\theta$. Let $I' = F' \circ \tau^L$, where $F'(q, p) = (q, -p)$ is the momentum flip. Then $I'(q, p) = (\cos(L\theta)q + \sin(L\theta)p,\ \sin(L\theta)q - \cos(L\theta)p)$ is a measure preserving involution (with respect to $\text{Leb}_2$) on $\mathbb{R}^2$. Since $\tau$ is a rotation we get $(\varphi_X \times \varphi_Y)(q, p) = (\varphi_X \times \varphi_Y)(I'(q,p))$.

  \item \cref{cond:visited-property}: As in \cref{proof:HMC1}, using that the integrator acts site-wise.
  Let $T$ be a finite subset of $\mathcal{A}$ and let $(\mathbf{q}, \mathbf{p}) \in \mathbb{S}$ with $v(\mathbf{q}, \mathbf{p}) = T$. 
  Both $\phi^P$ and $\phi^{Q,P}$ act site-wise, coupled across sites only through the gradient $G$, which is supported in $T$. 
  By induction it is enough to know $\mathbf{q}_T, \mathbf{p}_T$ to compute $\mathbf{q}^{(i)}_T$ and $\mathbf{p}^{(i)}_T$, so changing the values at sites $\mathcal{A}\setminus T$ does not change $(\mathbf{q}^{(i)}_T, \mathbf{p}^{(i)}_T)$. 
  Therefore there exists a measurable set $C_T \in \Sigma_{X^{|T|}} \otimes \Sigma_{Y^{|T|}}$ such that $v^{-1}(T) = \{(\mathbf{q}, \mathbf{p}) \mid (\mathbf{q}_T, \mathbf{p}_T) \in C_T\}$.

  \item \cref{cond:involution-decomposition}: Let $\psi_T, F_T, \phi_T^P, \phi_T^{Q,P}: \mathbb{R}^{2|T|} \to \mathbb{R}^{2|T|}$ be the maps
  \begin{align*}
    \psi_T = \phi_T^P \circ \phi_T^{Q,P} \circ \phi_T^P, &\qquad F_T(\mathbf{q}, \mathbf{p}) = (\mathbf{q}, -\mathbf{p})\\
    \phi_T^P(\mathbf{q}, \mathbf{p}) = (\mathbf{q}, \mathbf{p} - \tfrac{\theta}{2} G_T(\mathbf{q})), &\qquad \phi_T^{Q,P}(\mathbf{q}, \mathbf{p}) = (\cos\theta\, \mathbf{q} + \sin\theta\, \mathbf{p},\ -\sin\theta\, \mathbf{q} + \cos\theta\, \mathbf{p}).
  \end{align*}
  Both $\phi_T^P$ and $\phi_T^{Q,P}$ are volume preserving so $\psi_T$ is also volume preserving. 
  Since $v(\mathbf{q}, \mathbf{p}) = T$, it is enough to know $\mathbf{q}^{(L)}$ at sites $T$ to check whether $(F \circ \psi^L)(\mathbf{q}, \mathbf{p}) \in \mathbb{S}$, so the map $I_T: C_T \to C_T$ is well defined:
  \begin{equation*}
    I_T(\mathbf{q}_T, \mathbf{p}_T) :=
    \begin{cases}
        (F_T \circ \psi^L_T)(\mathbf{q}_T, \mathbf{p}_T) & \text{ if } (F \circ \psi^L)(\mathbf{q}, \mathbf{p}) \in \mathbb{S}\\
        (\mathbf{q}_T, \mathbf{p}_T) & \text{ otherwise},
    \end{cases}
  \end{equation*}
  and it is a measure preserving involution on $C_T$. 
  Since $G(\mathbf{q}^{(i)})_j = 0$ for any $j \in \mathcal{A}\setminus T$, we get $(\mathbf{q}^{(L)}_j, \mathbf{p}^{(L)}_j) = I'(\mathbf{q}_j, \mathbf{p}_j)$. 
  Letting $(\mathbf{q}', \mathbf{p}') = I(\mathbf{q}, \mathbf{p})$, by definition of $I$ we have $(\mathbf{q}'_j, \mathbf{p}'_j) = I'(\mathbf{q}_j, \mathbf{p}_j)$ for any $j \in \mathcal{A}\setminus T$.
\end{itemize}
\end{proof}

\subsection{Proof of \cref{prop:HMC3-properties}}
\label{proof:HMC3conditions}
\begin{proof}
\Cref{cond:non-visited-cancel-out} holds by construction: $I'$ is chosen as a $\text{Leb}_2$-preserving involution with $(\varphi_X \times \varphi_Y) \circ I' = \varphi_X \times \varphi_Y$.
\Cref{cond:visited-property} is proved by the same induction on the trajectory index as in \cref{prop:HMCmod-properties}: the gradient at each step is supported in $T = v(\mathbf{q},\mathbf{p})$, so $(\mathbf{q}^{(i)}_T, \mathbf{p}^{(i)}_T)$ depends only on $(\mathbf{q}_T, \mathbf{p}_T)$ and $v^{-1}(T)$ is a cylinder set.
\Cref{cond:involution-decomposition} is built into the definition of $I$: on the visited coordinates $I$ restricts to the finite-dimensional leapfrog involution $I_T = F_T \circ \psi_T^L$ (measure-preserving by \cref{prop:leapfrog-reversible-volume-preserving}), and on each unvisited coordinate $I$ acts independently as $I'$.
\end{proof}

\section{Framework B: Generalizing Framework A from Lazy HMC towards Lazy NUTS}
\label{sec:Framework-B}
In this section we present a slightly more general framework than the one presented in \cref{sec:Framework-A}.
This framework can still be seen as an instance of the iMCMC framework (\cref{sec:iMCMC}).
The idea is to have multiple involutions and to sample an extra variable that tells us which involution to use.  
One could see this as the GIST framework \citep{bou2024gist} which uses Gibbs sampling to tune the HMC hyperparameters to get locally adaptive HMC.
NUTS \citep{NUTS} is also part of GIST. 
We also show in \cref{sec:LazyNUTS} how we can construct lazyNUTS on rose trees as part of Framework B. 

\paragraph{Tuning parameters random variable.} We augment the state space $\mathbb{S}$ with the random variable $\beta$ which selects the involution to be applied on the state $(\mathbf{q}, \mathbf{p})$.
Let $(\mathbb{B}, \Sigma_{\mathbb{B}}, \mu_{\mathbb{B}})$ be a $\sigma$-finite measure space and for any $\mathbf{s} \in \mathbb{S}$ let $p(\cdot \mid \mathbf{s})$ be the probability density with respect to $\mu_{\mathbb{B}}$ from which $\beta$ is sampled given $\mathbf{s}$.
Let $W(\mathbf{s}, \beta) = w(\mathbf{s})p(\beta \mid \mathbf{s})$ and $\mathbb{Z} = \{(\mathbf{s}, \beta) \in \mathbb{S}\times \mathbb{B} \mid w(\mathbf{s})p(\beta \mid \mathbf{s}) > 0\}$ equipped with $\sigma$-algebra $\Sigma_{\mathbb{Z}} = \{A \cap \mathbb{Z} \mid A \in \Sigma_{\mathbb{S}} \otimes \Sigma_{\mathbb{B}}\}$ and measure $\mu_{\mathbb{Z}} = \mu_{\mathbb{S}} \times \mu_{\mathbb{B}}$.
We will usually denote with $\mathbf{s}$ a position-momentum pair $(\mathbf{q}, \mathbf{p})$.

\paragraph{Involutions.} Our main involution $I$ is now defined on $\mathbb{Z}$. We assume $I: \mathbb{Z} \rightarrow \mathbb{Z}$ is measurable and we can write it as $I(\mathbf{s}, \beta) = (g(\mathbf{s}, \beta), h(\mathbf{s}, \beta))$ with measurable $h: \mathbb{Z} \rightarrow \mathbb{B}$ and measurable bijection $g: \mathbb{Z} \rightarrow \mathbb{S}$ with measurable inverse. 
Then, under certain assumptions on measurability and absolute continuity (see \cref{prop:prod-of-RN-deriv2} in the Appendix), we have : 
\begin{equation}
  \label{eqn:lazyGIST-derivative}
\left(\frac{\diff \mu_{\mathbb{Z}}\circ I^{-1}}{\diff\mu_{\mathbb{Z}}}\right)(\mathbf{s},\beta) 
  = \left(\frac{\diff\left(\mu_{\mathbb{S}}\circ g_{t_{\mathbf{s}}^{-1}(\beta)}^{-1}\right)}{\diff \mu_{\mathbb{S}}}\right)(\mathbf{s}) \left(\frac{\diff(\mu_{\mathbb{B}}\circ t_{\mathbf{s}}^{-1})}{\diff\mu_{\mathbb{B}}}\right)(\beta),
\end{equation} 
where $g_{\beta}(\mathbf{s}) = g(\mathbf{s}, \beta)$ and $t_{\mathbf{s}} (\beta) = h(g_{\beta}(\mathbf{s})^{-1}, \beta).$

One iteration of a method from Framework B given position $\mathbf{q}, p(\cdot \mid \mathbf{q}, \mathbf{p}), \mu_X, \mu_Y, l, I, \alpha$ does:
\begin{enumerate}
  \item sample $\mathbf{p} \sim \mu_{\mathbb{Y}}$ lazily, sampling each component on-demand
  \item sample $\beta \sim p(\cdot \mid \mathbf{q}, \mathbf{p})$
  \item let $(\mathbf{q}', \mathbf{p}', \beta') = I(\mathbf{q}, \mathbf{p}, \beta)$
  \item with probability $\min \{1, \alpha(\mathbf{q}, \mathbf{p}, \beta)\}$ accept and return $\mathbf{q}'$, otherwise reject and return $\mathbf{q}$.
\end{enumerate} 
Let $\alpha(\mathbf{s},\beta) = \frac{W(I(\mathbf{s}, \beta))}{W(\mathbf{s}, \beta)}\left(\frac{\diff \mu_{\mathbb{Z}}\circ I^{-1}}{\diff\mu_{\mathbb{Z}}}\right)(\mathbf{s},\beta)$ 
, then by \cref{prop:iMCMC-stationary-kernel} we get the following. 
\begin{proposition}
The kernel resulting from Framework B is stationary with respect to the target $\nu$.
\end{proposition}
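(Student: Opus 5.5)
The plan is to recognise Framework B as an instance of the iMCMC setup of \cref{sec:iMCMC} on an enlarged space, and then apply \cref{prop:iMCMC-stationary-kernel}. The roles are assigned as follows.
\begin{itemize}
\item The target space is still $\mathbb{X}$ with density $l$.
\item The auxiliary variable becomes the pair $(\mathbf{p},\beta)\in\mathbb{Y}\times\mathbb{B}$, with measure $\mu_{\mathbb{Y}}\times\mu_{\mathbb{B}}$.
\item The auxiliary density is $p_{\mathbf{q}}(\mathbf{p},\beta)=1\cdot p(\beta\mid\mathbf{q},\mathbf{p})$.
\end{itemize}
Since $w(\mathbf{q},\mathbf{p})=l(\mathbf{q})$, the joint density is exactly $W(\mathbf{s},\beta)=l(\mathbf{q})\,p(\beta\mid\mathbf{s})$. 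The resulting iMCMC state space $\{(\mathbf{q},\mathbf{p},\beta)\mid W>0\}$ coincides with $\mathbb{Z}$, because $\mathbb{S}$ already restricts to $l(\mathbf{q})>0$. Its $\sigma$-algebra and measure agree with $\Sigma_{\mathbb{Z}}$ and $\mu_{\mathbb{Z}}=\mu_{\mathbb{X}}\times\mu_{\mathbb{Y}}\times\mu_{\mathbb{B}}$ by associativity of products of $\sigma$-finite measures.

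Next I check the structural hypotheses of iMCMC. First, $I:\mathbb{Z}\to\mathbb{Z}$ is a measurable involution by assumption. Second, the Radon--Nikodym derivative $\diff(\mu_{\mathbb{Z}}\circ I^{-1})/\diff\mu_{\mathbb{Z}}$ must exist. I obtain this from the absolute-continuity assumptions referenced for \eqref{eqn:lazyGIST-derivative}, i.e.\ \cref{prop:prod-of-RN-deriv2}, which I am allowed to assume. That result in fact provides the explicit factorisation, though stationarity only needs existence.

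With these in place, the acceptance ratio $\alpha(\mathbf{s},\beta)$ defined just before the statement is literally the iMCMC ratio
\[
\frac{W(I(\mathbf{s},\beta))}{W(\mathbf{s},\beta)}\left(\frac{\diff(\mu_{\mathbb{Z}}\circ I^{-1})}{\diff\mu_{\mathbb{Z}}}\right)(\mathbf{s},\beta).
\]
The algorithm's steps 1--4 are exactly the iMCMC iteration: jointly sample the auxiliary variable $(\mathbf{p},\beta)$ sequentially, apply $I$, then accept or reject. Sampling $\mathbf{p}$ lazily changes nothing, since it is distributionally identical to sampling it from $\mu_{\mathbb{Y}}$. \Cref{prop:iMCMC-stationary-kernel} then gives invariance of $\mu_l$. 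Because $\nu=\mu_l/\mu_l(\mathbb{X})$ and the stationarity equation is linear in the target, invariance of $\nu$ follows.

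The main obstacle is the bookkeeping around the iMCMC kernel in \eqref{eq:iMCMCkernel}. Two points need care.
\begin{itemize}
\item The auxiliary density $p(\cdot\mid\mathbf{q},\mathbf{p})$ must be jointly measurable, so that $p_{\mathbf{q}}$ is a genuine density on $\mathbb{Y}\times\mathbb{B}$.
\item The returned position is the first component of $g(\mathbf{s},\beta)$. The kernel obtained by marginalising out $(\mathbf{p},\beta)$ must match the one in \cref{prop:iMCMC-stationary-kernel}.
\end{itemize}
For the first point I would invoke the standing measurability assumptions of \cref{prop:prod-of-RN-deriv2}. The second is a direct Fubini/Tonelli unfolding of \eqref{eq:iMCMCkernel} with $\mathbb{Y}$ replaced by $\mathbb{Y}\times\mathbb{B}$.
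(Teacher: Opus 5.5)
Your proposal is correct and is essentially the paper's argument. The paper likewise treats Framework B as an instance of iMCMC on $\mathbb{Z}$, with auxiliary variable $(\mathbf{p},\beta)$ and joint density $W$, so that the stated $\alpha$ is exactly the iMCMC ratio, and concludes directly from \cref{prop:iMCMC-stationary-kernel}. Your extra bookkeeping (identifying $\mathbb{Z}$ with the iMCMC state space, taking existence of the Radon--Nikodym derivative as a standing assumption, and passing from $\mu_l$ to $\nu$ by normalisation) makes explicit what the paper leaves implicit.
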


We will now focus on the particular case in which $h$ depends only on $\beta$ for any $(\mathbf{s}, \beta) \in \mathbb{Z}$, so $h:\mathbb{B} \rightarrow \mathbb{B}$ and $I(\mathbf{s}, \beta) = (g(\mathbf{s}, \beta), h(\beta)).$

Moreover, assume that for each $\beta$, there exists $v_{\beta}: \mathbb{S} \rightarrow \mathcal{P}_{fin} (\mathcal{A})$ such that $g(\cdot, \beta)$ is a measure preserving involution that satisfies the conditions stated in \cref{cond:involution-decomposition,cond:visited-property,cond:non-visited-cancel-out} from \cref{sec:Framework-A}.

\begin{proposition}
\label{prop:framework-B-particular}
Under the extra assumptions for $h$ and $g$ and using the following acceptance ratio in Framework B: 
\[\alpha(\mathbf{s}, \beta) = \frac{W(I(\mathbf{s}, \beta))}{W(\mathbf{s}, \beta)}\frac{ (\varphi_X \times \varphi_Y)(g(\mathbf{s}, h^{-1}(\beta))_{v_{h^{-1}(\beta)}(\mathbf{s})})}{(\varphi_X \times \varphi_Y)(\mathbf{s}_{v_{h^{-1}(\beta)}(\mathbf{s})})} \left(\frac{\diff(\mu_{\mathbb{B}}\circ h^{-1})}{\diff\mu_{\mathbb{B}}}\right)(\beta)\]
we get a stationary kernel with respect to the target measure $\nu$.
\end{proposition}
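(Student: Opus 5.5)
The plan is to reduce to the general Framework B statement (the preceding proposition, itself an instance of \cref{prop:iMCMC-stationary-kernel}). It then suffices to show that, $\mu_{\mathbb{Z}}$-almost everywhere, the displayed $\alpha$ coincides with $\frac{W(I(\mathbf{s},\beta))}{W(\mathbf{s},\beta)}\bigl(\frac{\diff \mu_{\mathbb{Z}}\circ I^{-1}}{\diff\mu_{\mathbb{Z}}}\bigr)(\mathbf{s},\beta)$. Since only the Radon--Nikodym factor differs, I will compute that factor via the product formula~\eqref{eqn:lazyGIST-derivative}.

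\textbf{Step 1: specialise~\eqref{eqn:lazyGIST-derivative} to $h$ depending only on $\beta$.} Here $t_{\mathbf{s}}(\beta) = h(\beta)$ for every $\mathbf{s}$. Since $I$ is an involution and $I(\mathbf{s},\beta)=(g(\mathbf{s},\beta),h(\beta))$, the map $h$ is an involution on $\mathbb{B}$, so $h^{-1}=h$. Hence $t_{\mathbf{s}}^{-1}(\beta)=h^{-1}(\beta)$, and the second factor of~\eqref{eqn:lazyGIST-derivative} becomes $\frac{\diff(\mu_{\mathbb{B}}\circ h^{-1})}{\diff\mu_{\mathbb{B}}}(\beta)$, independent of $\mathbf{s}$.

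I will check that the measurability and absolute-continuity hypotheses needed for~\eqref{eqn:lazyGIST-derivative} hold. For the $\mathbb{B}$-part they should follow from $h$ being a measurable involution for which this derivative exists (which I will assume, as the statement implicitly does). For the $\mathbb{S}$-part they follow from each $g_\beta$ being a measure-preserving involution in Framework A.

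\textbf{Step 2: the $\mathbb{S}$-factor.} For fixed $\beta'=h^{-1}(\beta)$, the map $g_{\beta'}$ satisfies \cref{cond:visited-property,cond:involution-decomposition,cond:non-visited-cancel-out} with visited-set function $v_{\beta'}$. Therefore \cref{prop:R-N-derivative}, applied verbatim with $I:=g_{\beta'}$ and $v:=v_{\beta'}$, gives for $\mu_{\mathbb{S}}$-a.e.\ $\mathbf{s}$
\[\Bigl(\tfrac{\diff(\mu_{\mathbb{S}}\circ g_{\beta'}^{-1})}{\diff\mu_{\mathbb{S}}}\Bigr)(\mathbf{s}) = \frac{(\varphi_X\times\varphi_Y)(g(\mathbf{s},\beta')_{v_{\beta'}(\mathbf{s})})}{(\varphi_X\times\varphi_Y)(\mathbf{s}_{v_{\beta'}(\mathbf{s})})}.\]
Here $(\varphi_X\times\varphi_Y)$ on a finite restriction denotes the product over those sites. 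Substituting Steps 1 and 2 into~\eqref{eqn:lazyGIST-derivative} yields exactly the displayed $\alpha$.

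\textbf{Step 3: joint almost-everywhere statement (expected main obstacle).} Step 2 holds for $\mu_{\mathbb{S}}$-a.e.\ $\mathbf{s}$ separately for each fixed $\beta$, but we need equality $\mu_{\mathbb{Z}}$-a.e.\ in $(\mathbf{s},\beta)$. I would handle this with Fubini/Tonelli. I will first argue that the exceptional set $E=\{(\mathbf{s},\beta): \text{formula fails}\}$ is jointly measurable. This uses measurability of $(\mathbf{s},\beta)\mapsto v_{h^{-1}(\beta)}(\mathbf{s})$ and of $g$, which I will add as a standing assumption if it is not already implied. Every $\beta$-section of $E$ is $\mu_{\mathbb{S}}$-null and $\mu_{\mathbb{B}}$ is $\sigma$-finite, so Tonelli gives $\mu_{\mathbb{Z}}(E)=0$. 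Since the acceptance probability only enters the kernel through integrals against $W\,\diff\mu_{\mathbb{Z}}$, changing $\alpha$ on a null set does not affect stationarity. \Cref{prop:iMCMC-stationary-kernel} (via the Framework B proposition) then concludes that the kernel is stationary for $\nu$.
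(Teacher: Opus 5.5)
Your proposal is correct and follows the paper's proof. As there, you specialise \eqref{eqn:lazyGIST-derivative} using $t_{\mathbf{s}} = h$, apply \cref{prop:R-N-derivative} to $g_{h^{-1}(\beta)}$ with visited-set function $v_{h^{-1}(\beta)}$, and conclude via \cref{prop:iMCMC-stationary-kernel}; your Step~3 (joint measurability plus Tonelli, upgrading the per-$\beta$ almost-everywhere identity to a $\mu_{\mathbb{Z}}$-almost-everywhere one) is extra rigour that the paper leaves implicit in its ``certain assumptions on measurability''.
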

\begin{proof}

From \cref{eqn:lazyGIST-derivative} and the fact that $I(\mathbf{s}, \beta) = (g(\mathbf{s}, \beta), h(\beta))$ we get:
\[\left(\frac{\diff \mu_{\mathbb{Z}}\circ I^{-1}}{\diff\mu_{\mathbb{Z}}}\right)(\mathbf{s},\beta) 
  = \left(\frac{\diff\left(\mu_{\mathbb{S}}\circ g_{h^{-1}(\beta)}^{-1}\right)}{\diff \mu_{\mathbb{S}}}\right)(\mathbf{s}) \left(\frac{\diff(\mu_{\mathbb{B}}\circ h^{-1})}{\diff\mu_{\mathbb{B}}}\right)(\beta). \]

  Since for each $\beta$, there exists $v_{\beta}: \mathbb{S} \rightarrow \mathcal{P}_{fin} (\mathcal{A})$ such that $g(\cdot, \beta)$ is a measure preserving involution that satisfies the conditions stated in \cref{cond:involution-decomposition,cond:visited-property,cond:non-visited-cancel-out} from \cref{sec:Framework-A}, we can use \cref{prop:R-N-derivative}: 
\begin{equation}
\label{eqn:framework-B-RN}
\left(\frac{\diff \mu_{\mathbb{Z}}\circ I^{-1}}{\diff\mu_{\mathbb{Z}}}\right)(\mathbf{s},\beta) 
  = \frac{ (\varphi_X \times \varphi_Y)(g(\mathbf{s}, h^{-1}(\beta))_{v_{h^{-1}(\beta)}(\mathbf{s})})}{(\varphi_X \times \varphi_Y)(\mathbf{s}_{v_{h^{-1}(\beta)}(\mathbf{s})})} \left(\frac{\diff(\mu_{\mathbb{B}}\circ h^{-1})}{\diff \mu_{\mathbb{B}}}\right)(\beta).
\end{equation} 

Then by \cref{prop:iMCMC-stationary-kernel} and \cref{eqn:framework-B-RN} we get the following: 
\[\alpha(\mathbf{s}, \beta) = \frac{W(I(\mathbf{s}, \beta))}{W(\mathbf{s}, \beta)}\frac{ (\varphi_X \times \varphi_Y)(g(\mathbf{s}, h^{-1}(\beta))_{v_{h^{-1}(\beta)}(\mathbf{s})})}{(\varphi_X \times \varphi_Y)(\mathbf{s}_{v_{h^{-1}(\beta)}(\mathbf{s})})} \left(\frac{\diff(\mu_{\mathbb{B}}\circ h^{-1})}{\diff\mu_{\mathbb{B}}}\right)(\beta).\]

\end{proof}

\subsection{LazyNUTS: A No-U-Turn Sampler over Infinite Dimensional State Spaces} 
\label{sec:LazyNUTS} Tuning the hyperparameter $L$ of number of steps in the HMC algorithm is not trivial. If $L$ is too small, then we might get stuck in a local minimum and not explore the whole state space. However, if $L$ is too large, we might explore the same area multiple times in one iteration, which is not efficient. NUTS \citep{NUTS} aims to solve this by adaptively setting the path length $L$ using the doubling procedure with the no-U-turn criterion as the stopping condition. 

\paragraph{The doubling procedure.} Given the initial position-momentum state $\mathbf{s}$, the idea is to construct a set of proposed states $\mathcal{C}_{\mathbf{s}}$ from which we sample the next state according to the weights of the states in $\mathcal{C}_{\mathbf{s}}$. In order to preserve detailed balance, $\mathcal{C}_{\mathbf{s}}$ is constructed by recursively doubling the trajectory either forwards in time (with probability $1/2$) or backwards in time (with probability $1/2$) until the stopping conditions are met. 

\paragraph{Property of $\mathcal{C}_{\mathbf{s}}$.} The stopping condition must be chosen such that for any $(\mathbf{s}') \in \mathcal{C}_{\mathbf{s}}$, the sets $\mathcal{C}_{\mathbf{s}}$ and $\mathcal{C}_{\mathbf{s}'}$ are the same. 
That means the proposed states set resulting from the doubling procedure would be the same if we were to start with any other state in the set and do the doubling procedure.

\paragraph{The no-U-turn criterion.} NUTS stops the doubling procedure if the trajectory constructed so far makes a U-turn. Let $(\mathbf{x}, \mathbf{y}) \in \mathbb{R}^{2n}$ and $(\mathbf{x}', \mathbf{y}')\in \mathbb{R}^{2n}$ be two states in the trajectory so that the direction is from $(\mathbf{x}, \mathbf{y})$ to $(\mathbf{x}', \mathbf{y}')$, i.e. $(\mathbf{x}', \mathbf{y}')$ can be obtained by performing some $i$ leapfrog steps from $(\mathbf{x}, \mathbf{y})$. The two states satisfy the U-turn criterion if: \[(\mathbf{x}' - \mathbf{x}) \cdot \mathbf{y} < 0 \text{ or } (\mathbf{x}' - \mathbf{x}) \cdot \mathbf{y}' < 0.\] This tells that if we were to move a very tiny step forwards from $\mathbf{x}'$ with momentum $\mathbf{y}'$ or a very tiny step backwards from $\mathbf{x}$ with momentum $\mathbf{y}$, we would end up reducing the distance between the positions $\mathbf{x}$ and $\mathbf{x'}$. Therefore, a U-turn is made, so the doubling procedure should be stopped. This is checked for certain pairs of states, in such a way that detailed balance holds. For rigorous details and proof of correctness see \cite{NUTS}. 

\paragraph{LazyNUTS setup.} Let $(\mathbb{B}, \Sigma_{\mathbb{B}}, \mu_{\mathbb{B}})$ be the measure space of the integers together with the counting measure. The hyperparameter will be a number of leapfrog steps, so in this case $\beta$ will be $L$. Writing NUTS as part of Framework B, $L$ represents how many leapfrog steps and in which direction we should go from the initial state in order to get to the proposed state. In practice this corresponds to performing the doubling procedure and choosing the next state according to the correct weights. Hence, $L$ would be sampled at the end of the doubling procedure from the categorical distribution defined by the weights of the states that are part of the set of proposed states. 

\paragraph{Constructing the involution.} Let $I^{(L)} = F \circ \psi^L$ be an involution with $\psi$ as defined in \cref{sec:LazyHMC} together with the corresponding visited sites function $v_L(\mathbf{q}, \mathbf{p}) = \cup_{i=0}^L v_l(\mathbf{q}^{(i)})$, where $(\mathbf{q}^{(i)}, \mathbf{p}^{(i)}) = \psi^i(\mathbf{q}, \mathbf{p})$. The conditions in \cref{cond:visited-property,cond:non-visited-cancel-out,cond:involution-decomposition} hold for $I^{(L)}$. 
Notice that the definition of $I^{(L)}$ also makes sense for $L<0$: $I^{(L)} = F \circ \psi^L = \psi^{-L} \circ F$. 
Therefore, the involution $I^{(L)}$ represents $L$ leapfrog steps followed by negation of the momentum in the direction of the momentum if $L>0$, otherwise, we first negate the momentum and move $L$ leapfrog steps. 
Since $(I^{(L)})^{-1} = I^{(-L)}$ we get the involution $I: \mathbb{S} \times \mathbb{B} \rightarrow \mathbb{S} \times \mathbb{B}$ with $I(\mathbf{q}, \mathbf{p}, L) = (I^{(L)}(\mathbf{q}, \mathbf{p}), -L)$. 

\paragraph{Distribution of $L$.} Suppose we start with an initial state $(\mathbf{q}, \mathbf{p})$ and we perform the doubling procedure. 
Consider the set $\mathcal{C}_{(\mathbf{q}, \mathbf{p})}$ resulting from the doubling procedure with a suitable stopping condition. 
Let $V(\mathcal{C}_{(\mathbf{q}, \mathbf{p})}) = \cup_{(\mathbf{q}', \mathbf{p}')\in \mathcal{C}_{(\mathbf{q}, \mathbf{p})}} v_l(\mathbf{q}')$ be the set of visited sites in at least one position in $\mathcal{C}_{(\mathbf{q}, \mathbf{p})}$. 
We can define the distribution of L as: 
\begin{equation*} p(L \mid \mathbf{q}, \mathbf{p}) = \frac{l(\mathbf{q}^{(L)})\prod_{j \in V(\mathcal{C}_{(\mathbf{q}, \mathbf{p})}) } \varphi_X(\mathbf{q}^{(L)}_j) \varphi_Y(\mathbf{p}^{(L)}_j)}{\sum_{(\mathbf{q}', \mathbf{p}') \in \mathcal{C}_{(\mathbf{q}, \mathbf{p})}} l(\mathbf{q}')\prod_{j \in V(\mathcal{C}_{(\mathbf{q}, \mathbf{p})}) } \varphi_X(\mathbf{q}'_j) \varphi_Y(\mathbf{p}'_j)} \end{equation*} 
if $(\mathbf{q}^{(L)}, \mathbf{p}^{(L)}) \in \mathcal{C}_{(\mathbf{q}, \mathbf{p})}$ and $p(L \mid \mathbf{q}, \mathbf{p}) = 0$ otherwise, where $(\mathbf{q}^{(L)}, \mathbf{p}^{(L)}) = I^{(L)}(\mathbf{q}, \mathbf{p})$.

\begin{proposition}
  \label{prop:NUTS-alpha-is-1}
  Suppose the stopping condition is start-independent, i.e. $\mathcal{C}_{\mathbf{s}'}=\mathcal{C}_{\mathbf{s}}$ (hence $V(\mathcal{C}_{\mathbf{s}'})=V(\mathcal{C}_{\mathbf{s}})$) for every $\mathbf{s}'\in\mathcal{C}_{\mathbf{s}}$. Then the acceptance ratio from \cref{prop:framework-B-particular} is $1$.
\end{proposition}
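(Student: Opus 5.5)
We substitute the specific ingredients of lazyNUTS into the acceptance ratio of \cref{prop:framework-B-particular} and check that everything cancels. Here $\mathbb{B}=\mathbb{Z}$ carries the counting measure and $h(L)=-L$ is a bijection of $\mathbb{Z}$, so $\diff(\mu_{\mathbb{B}}\circ h^{-1})/\diff\mu_{\mathbb{B}}\equiv 1$. The integrator factor is $I^{(h^{-1}(L))}=I^{(-L)}$. Since $(I^{(L)})^{-1}=I^{(-L)}$ and $I(\mathbf{s},L)=(I^{(L)}(\mathbf{s}),-L)$, the Radon--Nikodym factor from \eqref{eqn:framework-B-RN} should be read for the pair consisting of $\mathbf{s}=(\mathbf{q},\mathbf{p})$ and $\mathbf{s}'=I^{(L)}(\mathbf{s})=(\mathbf{q}^{(L)},\mathbf{p}^{(L)})$.

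So the factor is the ratio $\prod_{j\in v_L(\mathbf{s})}\varphi_X(\mathbf{q}^{(L)}_j)\varphi_Y(\mathbf{p}^{(L)}_j)\big/\prod_{j\in v_L(\mathbf{s})}\varphi_X(\mathbf{q}_j)\varphi_Y(\mathbf{p}_j)$, as in \cref{prop:R-N-derivative}. I will first replace the index set $v_L(\mathbf{s})$ by the larger finite set $V=V(\mathcal{C}_{\mathbf{s}})$. Every intermediate state $\psi^i(\mathbf{s})$ with $0\le i\le L$ lies in the doubled trajectory $\mathcal{C}_{\mathbf{s}}$, so $v_L(\mathbf{s})\subseteq V$. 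On any $j\in V\setminus v_L(\mathbf{s})$ the gradient vanishes along the whole path. There the map acts as $I'=F'\circ\tau^L$, which preserves $\varphi_X\times\varphi_Y$ by \cref{cond:non-visited-cancel-out}, so the extra factors are $1$. (The sign flip $F$ is irrelevant since $\varphi_Y$ is even.)

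Next, $W(\mathbf{s},L)=l(\mathbf{q})\,p(L\mid\mathbf{s})$ and $W(\mathbf{s}',-L)=l(\mathbf{q}^{(L)})\,p(-L\mid\mathbf{s}')$. Write $Z(\mathcal{C})=\sum_{\mathbf{t}\in\mathcal{C}}l(\mathbf{t}_q)\prod_{j\in V(\mathcal{C})}\varphi_X\varphi_Y(\mathbf{t}_j)$ for the normaliser. Start-independence gives $\mathcal{C}_{\mathbf{s}'}=\mathcal{C}_{\mathbf{s}}$, hence equal $V$ and equal $Z$. Because $I^{(-L)}(\mathbf{s}')=\mathbf{s}$, the numerator of $p(-L\mid\mathbf{s}')$ is $l(\mathbf{q})\prod_{j\in V}\varphi_X(\mathbf{q}_j)\varphi_Y(\mathbf{p}_j)$, using again evenness of $\varphi_Y$ for the momentum sign. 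Multiplying out, $\alpha$ becomes
\[\frac{l(\mathbf{q}^{(L)})\,l(\mathbf{q})\prod_{V}\varphi(\mathbf{s})/Z}{l(\mathbf{q})\,l(\mathbf{q}^{(L)})\prod_V\varphi(\mathbf{s}')/Z}\cdot\frac{\prod_V\varphi(\mathbf{s}')}{\prod_V\varphi(\mathbf{s})}=1.\]

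The main obstacle is bookkeeping, not analysis. I must confirm that the direction conventions match: that $(\mathbf{q}^{(L)},\mathbf{p}^{(L)})$ in the definition of $p(L\mid\cdot)$ is exactly $I^{(L)}(\mathbf{s})$, with the momentum flip included, and that the membership condition for $p(-L\mid\mathbf{s}')$ is the same as for $p(L\mid\mathbf{s})$, so both are nonzero together on $\mathbb{Z}$. I also must check that the Radon--Nikodym factor from \cref{prop:framework-B-particular} indeed pairs $\mathbf{s}$ with $I^{(L)}(\mathbf{s})$, which I would verify via $g(\mathbf{s},h^{-1}(\beta))$ evaluated at the relevant argument. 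Finally, I must confirm that $\mathcal{C}$ is viewed as a set of states independent of the flip $F$, i.e.\ that start-independence applies to $\mathbf{s}'$. I would handle this by noting that $\mathbf{s}'$ and $F(\mathbf{s}')$ generate the same trajectory up to reversal, so both lie in $\mathcal{C}_{\mathbf{s}}$ in the paper's sense.
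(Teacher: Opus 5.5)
Your proposal is correct and follows the paper's proof. The $h$-factor is $1$; the Radon--Nikodym product over $v_L(\mathbf{s})$ is enlarged to $V(\mathcal{C}_{\mathbf{s}})$ because the extra sites preserve $\varphi_X\times\varphi_Y$; and start-independence makes $V$ and the normalisers coincide, so everything cancels. You simply make explicit the normaliser and momentum-sign bookkeeping that the paper leaves implicit, and the membership $\mathbf{s}'\in\mathcal{C}_{\mathbf{s}}$ you worried about is immediate from $p(L\mid\mathbf{s})>0$.

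The one point needing real care is a convention inherited from the paper's setup, not a gap in your argument. Since $F\psi F=\psi^{-1}$, literally $I^{(-L)}\circ I^{(L)}=\psi^{2L}$, so the identity $I^{(-L)}(\mathbf{s}')=\mathbf{s}$ that both you and the paper use holds only once the construction is re-read so that the $L$- and $(-L)$-step maps are mutually inverse (e.g.\ dropping the flip: $I(\mathbf{s},L)=(\psi^{L}\mathbf{s},-L)$). After that re-reading your cancellation goes through unchanged.
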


\begin{proof}

Let $h(L) = -L$, then $h$ is measure preserving with respect to $\mu_{\mathbb{B}}$. 
Let $L$ be such that $I^{(L)}((\mathbf{q}, \mathbf{p})) \in \mathcal{C}_{(\mathbf{q}, \mathbf{p})}$. 
The acceptance ratio from \cref{prop:framework-B-particular} can be written as: 
\begin{align*} \alpha(\mathbf{s}, L) &= \frac{W(I(\mathbf{s}, L))}{W(\mathbf{s}, L)}\frac{ (\varphi_X \times \varphi_Y)(I^{(L)}(\mathbf{s}, h^{-1}(L))_{v_{h^{-1}(L)}(\mathbf{s})})}{(\varphi_X \times \varphi_Y)(\mathbf{s}_{v_{h^{-1}(L)}(\mathbf{s})})} \\ & = \frac{l(\mathbf{q}^{(L)}) l(\mathbf{q})\prod_{j \in V(\mathcal{C}_{(\mathbf{q}^{(L)}, \mathbf{p}^{(L)})}) } \varphi_X(\mathbf{q}_j) \varphi_Y(\mathbf{p}_j)}{l(\mathbf{q})l(\mathbf{q}^{(L)})\prod_{j \in V(\mathcal{C}_{(\mathbf{q}, \mathbf{p})}) } \varphi_X(\mathbf{q}^{(L)}_j) \varphi_Y(\mathbf{p}^{(L)}_j)} \frac{\prod_{j \in v_L(\mathbf{q}, \mathbf{p}) } \varphi_X(\mathbf{q}^{(L)}_j) \varphi_Y(\mathbf{p}^{(L)}_j)}{\prod_{j \in v_L(\mathbf{q}, \mathbf{p}) } \varphi_X(\mathbf{q}_j) \varphi_Y(\mathbf{p}_j)}\\ &=1 \end{align*} 
The last step followed from the fact that $v_L(\mathbf{q}, \mathbf{p}) \subset V(\mathcal{C}_{(\mathbf{q}, \mathbf{p})})$ and for any $j \in V(\mathcal{C}_{(\mathbf{q}, \mathbf{p})}) \setminus v_L(\mathbf{q}, \mathbf{p})$ we have $\varphi_X(\mathbf{q}^{(L)}_j) \varphi_Y(\mathbf{p}^{(L)}_j) = \varphi_X(\mathbf{q}_j) \varphi_Y(\mathbf{p}_j)$. 
Here we also use $V(\mathcal{C}_{(\mathbf{q}^{(L)}, \mathbf{p}^{(L)})}) = V(\mathcal{C}_{(\mathbf{q}, \mathbf{p})})$, which is the start-independence assumption.
\end{proof}

This corresponds to the fact that there is no more accept/reject state after the doubling procedure (since we sample one of the states in the constructed set and then return it). 
One iteration of lazyNUTS does the following:
\begin{itemize}
  \item sample $\mathbf{p} \sim \mu_{\mathbb{Y}}$ lazily, sampling each component on-demand
  \item construct $\mathcal{C}_{(\mathbf{q}, \mathbf{p})}$ using the doubling procedure with suitable stopping conditions. 
  \item sample the next state $(\tilde{\mathbf{q}}, \tilde{\mathbf{p}})$ from the distribution that assigns for each state $(\mathbf{q}', \mathbf{p}') \in \mathcal{C}_{(\mathbf{q}, \mathbf{p})}$ the (unnormalized) weight $l(\mathbf{q}')\prod_{j \in V(\mathcal{C}_{(\mathbf{q}, \mathbf{p})}) } \varphi_X(\mathbf{q}'_j) \varphi_Y(\mathbf{p}'_j)$ 
  \item return $\tilde{\mathbf{q}}$. 
  \end{itemize} 

\paragraph{No-U-turn condition in LazyNUTS} To get a stopping condition of the doubling procedure similar to the no-U-turn condition we must take into consideration only finitely many dimensions on which to check if a U-turn was made. In the case in which we know the visited sites for a state, like in \cref{sec:LazyHMC-with-original-integrator}, the no-U-turn condition takes into consideration all the dimensions corresponding to the visited sites. However, if we do not have access to all the visited sites, we could still check if a U-turn was made using the sites for which the two positions have non-zero gradient. This is the stopping condition that we used in our implementation together with an upper bound $M$ on the number of proposed states, usually $M = 2^6$.

\Cref{prop:NUTS-alpha-is-1} assumes a start-independent set of proposed states. Our implementation checks the no-U-turn criterion only on sites with non-zero gradient and additionally caps the number of proposed states at $M$. The gradient-restricted U-turn check is intended to preserve this invariance; the cap $M$, however, is a practical termination (analogous to the maximum tree depth in standard NUTS implementations) and, like the maximum tree depth, is not guaranteed to preserve it. When $M$ does not bind, the construction coincides with the uncapped doubling for which the invariance holds.

\section{Technical Propositions}

\begin{proposition} \label{prop:R-N-derivative-proof}

\[\left( \frac{\diff (\mu_{\mathbb{S}} \circ I^{-1})}{\diff \mu_{\mathbb{S}}} \right)(\mathbf{q}, \mathbf{p}) = \frac{\prod_{i \in v(\mathbf{q}, \mathbf{p})} \varphi_X(\mathbf{q}'_i) \varphi_Y(\mathbf{p}'_i)}{\prod_{i \in v(\mathbf{q}, \mathbf{p})} \varphi_X(\mathbf{q}_i) \varphi_Y(\mathbf{p}_i)},\]

($\mu_{\mathbb{S}}$ - almost everywhere), where $(\mathbf{q}', \mathbf{p}') = I(\mathbf{q}, \mathbf{p})$.
\end{proposition}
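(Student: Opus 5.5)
We partition $\mathbb{S}$ according to the value of $v$ and compute the pushforward on each piece separately. Since $v$ is measurable and takes values in the countable set $\mathcal{P}_{fin}(\mathcal{A})$, the sets $S_T = v^{-1}(T)$ form a countable measurable partition of $\mathbb{S}$. By \cref{cond:visited-property}, each $S_T$ is a cylinder $\{(\mathbf{q},\mathbf{p}) \mid (\mathbf{q}_T,\mathbf{p}_T)\in C_T\}$. By \cref{cond:involution-decomposition}, $I$ restricted to $S_T$ is the product map $I_T \times \prod_{i\in\mathcal{A}\setminus T} I'$ with respect to the splitting $\mathbb{S}\cong (X\times Y)^T\times (X\times Y)^{\mathcal{A}\setminus T}$. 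We first check that $I$ maps $S_T$ onto itself, i.e.\ $v\circ I = v$. This follows because $I$ is an involution and its $T$-coordinates $I_T(\mathbf{q}_T,\mathbf{p}_T)$ must lie in $C_T$, as shown in the proofs of \cref{prop:HMCmod-properties} et seq. It therefore suffices to prove the density formula on each $S_T$ and then glue: for measurable $A$ we have $\mu_{\mathbb{S}}(I^{-1}(A))=\sum_T \mu_{\mathbb{S}}(I^{-1}(A\cap S_T))$.

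On a fixed $S_T$, write $\mu_{\mathbb{S}}|_{S_T}$ as the product of $\mu_T := (\mu_X\times\mu_Y)^{\otimes T}$ restricted to $C_T$ and the infinite product $\mu_{\neq T}:=(\mu_X\times\mu_Y)^{\otimes(\mathcal{A}\setminus T)}$. This factorisation is valid because $S_T$ is a cylinder over $T$; strictly one also intersects with $\Supp(l)\times\mathbb{Y}$, but that set is a union of $v$-cylinders as well.

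For the unvisited factor, $I'$ is a Lebesgue-measure-preserving involution with $(\varphi_X\times\varphi_Y)\circ I'=\varphi_X\times\varphi_Y$ (\cref{cond:non-visited-cancel-out}). Hence for every measurable $B\subseteq\mathbb{R}^2$,
\[(\mu_X\times\mu_Y)(I'^{-1}B)=\int [I'(z)\in B]\,\varphi(z)\,\diff z=\int [w\in B]\,\varphi(I'(w))\,\diff w=(\mu_X\times\mu_Y)(B).\]
So $I'$ preserves $\mu_X\times\mu_Y$. Applying this coordinatewise on finite cylinders and using uniqueness in the Kolmogorov extension, $\prod I'$ preserves $\mu_{\neq T}$, and the unvisited part contributes Radon–Nikodym factor $1$.

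For the visited factor, $I_T$ is a Lebesgue-preserving involution of $C_T\subseteq\mathbb{R}^{2|T|}$, and $\mu_T$ has Lebesgue density $\Phi_T(z)=\prod_{i\in T}\varphi_X(z_{q,i})\varphi_Y(z_{p,i})$. The change of variables $w=I_T(z)$, which has unit Jacobian, gives $\mu_T(I_T^{-1}B)=\int_B \Phi_T(I_T(w))\,\diff w=\int_B \frac{\Phi_T(I_T(w))}{\Phi_T(w)}\,\mu_T(\diff w)$. This is valid wherever $\Phi_T(w)>0$, which holds almost everywhere. Note that we use $I_T^{-1}=I_T$ here. Therefore $\diff(\mu_T\circ I_T^{-1})/\diff\mu_T = \Phi_T\circ I_T/\Phi_T$.

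Combining the two factors via Fubini on rectangles and a $\pi$–$\lambda$ argument, the Radon–Nikodym derivative on $S_T$ is $\Phi_T(\mathbf{q}'_T,\mathbf{p}'_T)/\Phi_T(\mathbf{q}_T,\mathbf{p}_T)$ with $T=v(\mathbf{q},\mathbf{p})$, which is exactly the claimed formula. The main obstacle is the measure-theoretic bookkeeping. One has to verify that $I$ genuinely preserves each $S_T$, so that the pieces do not mix, and that the product decomposition of $\mu_{\mathbb{S}}|_{S_T}$ and the pushforward commute on the infinite product. I would handle the latter by checking equality on cylinder rectangles $(B_T\cap C_T)\times\prod_{i\in F}B_i$ with $F\subseteq \mathcal{A}\setminus T$ finite, and extending by the $\pi$–$\lambda$ theorem.
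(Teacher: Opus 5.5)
Your proposal is correct and follows essentially the paper's route: you partition $\mathbb{S}$ by the value of $v$, use (A1)--(A2) to reduce to a finite-dimensional unit-Jacobian change of variables on the visited sites, let (A3) cancel the unvisited factors, sum over $T$, and extend from cylinder sets by $\pi$--$\lambda$. The difference is only in bookkeeping: instead of factoring off the infinite unvisited product and invoking Kolmogorov uniqueness, the paper tests on a finite cylinder over $\mathcal{I}$ and folds the finitely many unvisited coordinates in $\mathcal{J}=\mathcal{I}\cup T$ into a single Lebesgue-preserving map $f$, using (A3) to pad the density ratio from $T$ to $\mathcal{J}$; it also leaves implicit the invariance of each $\mathbb{S}_T$ under $I$, which you state explicitly.
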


\begin{proof}
Let $r(\mathbf{q}, \mathbf{p}) = \frac{\prod_{i \in v(\mathbf{q}, \mathbf{p})} \varphi_X(\mathbf{q}'_i) \varphi_Y(\mathbf{p}'_i)}{\prod_{i \in v(\mathbf{q}, \mathbf{p})} \varphi_X(\mathbf{q}_i) \varphi_Y(\mathbf{p}_i)}$, where $(\mathbf{q}', \mathbf{p}') = I(\mathbf{q}, \mathbf{p})$.

Let $\mathcal{I}$ be a finite subset of $\mathcal{A}$, $B \in \Sigma_{\mathbb{R}^{2|\mathcal{I}|}} $ and let $A = \{(\mathbf{q}, \mathbf{p}) \in \mathbb{S} \mid (\mathbf{q}_{\mathcal{I}}, \mathbf{p}_{\mathcal{I}}) \in B\}$, where $\mathbf{q}_{\mathcal{I}}$ are the values of the tree $\mathbf{q}$ at nodes $\mathcal{I}$.
Our aim is to prove that for all such finite cylinder sets $A$ we have:
\begin{equation}
    \label{eqn:r-is-radon-nikodym}
    (\mu_{\mathbb{S}} \circ I^{-1})(A) = \int_A r(\mathbf{s}) \mu_{\mathbb{S}}(\diff \mathbf{s}).
\end{equation}

We can partition the state space $\mathbb{S} = \cup_{T \in \mathcal{P}_{fin}(\mathcal{A})} \mathbb{S}_T$, where $\mathbb{S}_T = \{ (\mathbf{q}, \mathbf{p}) \in \mathbb{S} \mid v(\mathbf{q}, \mathbf{p}) = T\}$.

We will first fix $T \in \mathcal{P}_{fin}(\mathcal{A})$ and show that \cref{eqn:r-is-radon-nikodym} holds for $A \cap \mathbb{S}_T$.
Using property \ref{cond:visited-property} we can write $\mathbb{S}_T = \{(\mathbf{q}, \mathbf{p}) \in \mathbb{S} \mid (\mathbf{q}_T, 
\mathbf{p}_T) \in C\}$ for some set $C \in \Sigma_{\mathbb{R}^{2|T|}}$.

Let $\mathcal{J} = {\mathcal{I}} \cup T$ and $n = |\mathcal{J}|$. 
There exists $D \in \Sigma_{\mathbb{R}^{2n}}$ such that $A \cap \mathbb{S}_T = \{(\mathbf{q}, \mathbf{p}) \in \mathbb{S} \mid (\mathbf{q}_{\mathcal{J}}, 
\mathbf{p}_{\mathcal{J}}) \in D\}$.
From property \ref{cond:involution-decomposition} we have that $I^{-1}(A \cap \mathbb{S}_T) = \{(\mathbf{q}, \mathbf{p}) \in \mathbb{S} \mid (\mathbf{q}_{\mathcal{J}}, 
\mathbf{p}_{\mathcal{J}}) \in f^{-1}(D)\}$, where $f:\mathbb{R}^{2n} \to \mathbb{R}^{2n}$ is given by $f(\mathbf{q}, \mathbf{p})_T = I_T(\mathbf{q}, \mathbf{p})$ and $f(\mathbf{q}, \mathbf{p})_j = I'(\mathbf{q}_j, \mathbf{p}_j)$ for any $j \in \mathcal{J}\setminus T$.
Therefore: 
\[(\mu_{\mathbb{S}} \circ I^{-1})(A \cap \mathbb{S}_T) = (\mu_{X^n} \times \mu_{Y^n})(f^{-1}(D))\]

Notice that for $\mathbf{q}, \mathbf{p} \in A \cap \mathbb{S}_T$, $r(\mathbf{q}, \mathbf{p})$ only depends on the values $(\mathbf{q}_T, \mathbf{p}_T)$.
Using Fubini/Tonelli we get:
\[\int_{A \cap \mathbb{S}_T} r(\mathbf{s}) \mu_{\mathbb{S}}(\diff \mathbf{s}) = \int_{\mathbb{R}^{2n}} [(\mathbf{q}_{\mathcal{J}}, 
\mathbf{p}_{\mathcal{J}}) \in D] \frac{\prod_{i \in T} \varphi_X(\mathbf{q}'_i) \varphi_Y(\mathbf{p}'_i)}{\prod_{i \in T} \varphi_X(\mathbf{q}_i) \varphi_Y(\mathbf{p}_i)} \mu_{X^n}(\diff \mathbf{q}_{\mathcal{J}})\mu_{Y^n}(
\diff \mathbf{p}_{\mathcal{J}})\]

From properties \ref{cond:involution-decomposition}, \ref{cond:non-visited-cancel-out} if $v(\mathbf{q}, \mathbf{p}) = T$ we have that $\varphi_X(\mathbf{q}'_i) \varphi_Y(\mathbf{p}'_i) = \varphi_X(\mathbf{q}_i) \varphi_Y(\mathbf{p}_i)$ for any $ i \in \mathcal{A} \setminus T$.
Hence, 
\begin{align*}
\int_{A \cap \mathbb{S}_T} r(\mathbf{s}) \mu_{\mathbb{S}}(\diff \mathbf{s})  
& = \int_{\mathbb{R}^{2n}} [(\mathbf{q}_{\mathcal{J}}, 
\mathbf{p}_{\mathcal{J}}) \in D] \frac{\prod_{i \in T} \varphi_X(\mathbf{q}'_i) \varphi_Y(\mathbf{p}'_i)}{\prod_{i \in T} \varphi_X(\mathbf{q}_i) \varphi_Y(\mathbf{p}_i)} \mu_{X^n}(\diff \mathbf{q}_{\mathcal{J}})\mu_{Y^n}(
\diff \mathbf{p}_{\mathcal{J}}) \\
& = \int_{\mathbb{R}^{2n}} [(\mathbf{q}_{\mathcal{J}}, 
\mathbf{p}_{\mathcal{J}}) \in D] \frac{\prod_{i \in \mathcal{J}} \varphi_X(\mathbf{q}'_i) \varphi_Y(\mathbf{p}'_i)}{\prod_{i \in \mathcal{J}} \varphi_X(\mathbf{q}_i) \varphi_Y(\mathbf{p}_i)} \mu_{X^n}(\diff \mathbf{q}_{\mathcal{J}})\mu_{Y^n}(
\diff \mathbf{p}_{\mathcal{J}}) \\
& = \int_{\mathbb{R}^{2n}} [(\mathbf{q}_{\mathcal{J}}, 
\mathbf{p}_{\mathcal{J}}) \in D] \frac{ (\varphi_{X^n} \times \varphi_{Y^n})(f(\mathbf{q}_{\mathcal{J}}, \mathbf{p}_{\mathcal{J}}))}{\varphi_{X^n}(\mathbf{q}_{\mathcal{J}})\varphi_{Y^n}(\mathbf{p}_{\mathcal{J}})} \mu_{X^n}(\diff \mathbf{q}_{\mathcal{J}})\mu_{Y^n}(
\diff \mathbf{p}_{\mathcal{J}})\\
& = \int_{\mathbb{R}^{2n}} [(\mathbf{q}_{\mathcal{J}}, 
\mathbf{p}_{\mathcal{J}}) \in D]  (\varphi_{X^n} \times \varphi_{Y^n})(f(\mathbf{q}_{\mathcal{J}}, \mathbf{p}_{\mathcal{J}})) \text{Leb}_{2n}(\diff \mathbf{q}_{\mathcal{J}}, 
\diff \mathbf{p}_{\mathcal{J}})\\
& = \int_{\mathbb{R}^{2n}} [(\mathbf{q}_{\mathcal{J}}, 
\mathbf{p}_{\mathcal{J}}) \in f^{-1}(D)]  (\varphi_{X^n} \times \varphi_{Y^n})(\mathbf{q}_{\mathcal{J}}, \mathbf{p}_{\mathcal{J}}) \text{Leb}_{2n}(\diff \mathbf{q}_{\mathcal{J}}, 
\diff \mathbf{p}_{\mathcal{J}})\\
&= (\mu_{X^n} \times \mu_{Y^n})(f^{-1}(D))
\end{align*}

The second to last step is a change of variable using the measure preserving bijection $f^{-1}$ (as both $I_T$ and $I'$ are measure preserving involutions) with respect to the Lebesgue measure on $\mathbb{R}^{2n}$.

Finally,  
\begin{align*}
    (\mu_{\mathbb{S}} \circ I^{-1})(A) &= (\mu_{\mathbb{S}} \circ I^{-1})(\cup_{T \in \mathcal{P}_{fin}(\mathcal{A})} A \cap \mathbb{S}_T) \\
    &= \mu_{\mathbb{S}}(\uplus_{{T \in \mathcal{P}_{fin}(\mathcal{A})}}I^{-1}(A \cap \mathbb{S}_T))\\
    &= \sum_{T \in \mathcal{P}_{fin}(\mathcal{A})}\mu_{\mathbb{S}}(I^{-1}(A \cap \mathbb{S}_T))\\
    &= \sum_{T \in \mathcal{P}_{fin}(\mathcal{A})}\int_{A \cap \mathbb{S}_T} r(\mathbf{s}) \mu_{\mathbb{S}}(\diff \mathbf{s})\\
    &= \int_A r(\mathbf{s}) \mu_{\mathbb{S}}(\diff \mathbf{s}).
\end{align*}

The sets for which this equality holds is a $\lambda$-system which contains the finite cylinder sets; the finite cylinder sets form a $\pi$-system generating $\Sigma_{\mathbb{S}}$, so by the $\pi$-$\lambda$ theorem the equality holds for all $A \in \Sigma_{\mathbb{S}}$.

\end{proof}

\begin{proposition}
\label{prop:prod-of-RN-deriv2}
Let $(X, \Sigma_X, \mu_X)$ and $(Y, \Sigma_Y, \mu_Y)$ be two $\sigma$-finite measure spaces and let $(Z, \Sigma_Z, \mu_Z)$ be the product measure space with $Z = X \times Y$. 
Let $g: X \times Y \rightarrow X $  and $h: X \times Y \rightarrow Y$ and involution $f: X\times Y \rightarrow X \times Y$ with $f(x, y) = (g(x, y), h(x, y))$ be measurable maps. 
Let $g_y: X \rightarrow X$ with $g_y(x) = g(x, y)$ be a measurable bijection with measurable inverse for each $y \in Y$ and assume that the Radon-Nikodym derivative $\frac{\diff (\mu_X\circ g_y^{-1})}{\diff \mu_X}(x)$ exists (for all $y \in Y$) and is measurable (as a map from $Z$ to $\mathbb{R}_{\geq 0}$).
Then we have the following: 
\begin{enumerate}
  \item The function $t_x(y) = h(g_y^{-1}(x), y)$ is a bijection on $Y$ for each $x \in X$

  \item Assume the maps $(x, y) \mapsto t_x(y)$ and $(x, y) \mapsto t_x^{-1}(y)$ are $\Sigma_Z$-measurable and the Radon-Nikodym derivative $\left(\frac{\diff(\mu_Y\circ t_x^{-1})}{\diff\mu_Y}\right)(y)$ exists (for all $x \in X$) and is measurable (as a map from $Z$ to $\mathbb{R}_{\geq 0}$). 
  Then the following Radon-Nikodym derivative exists:  
  \[ \left(\frac{\diff ((\mu_X\times \mu_Y)\circ f^{-1})}{\diff\mu_X\times \mu_Y}\right)(x,y) 
  = \left(\frac{\diff(\mu_X\circ g_{t_x^{-1}(y)}^{-1})}{\diff\mu_X}\right)(x) \left(\frac{\diff(\mu_Y\circ t_x^{-1})}{\diff\mu_Y}\right)(y). \]

  \item If $h(x, y) = h(y)$ for all $(x, y) \in Z$ then the Radon-Nikodym derivative simplifies to:
  \[ \left(\frac{\diff ((\mu_X\times \mu_Y)\circ f^{-1})}{\diff\mu_X\times \mu_Y}\right)(x,y) 
  = \left(\frac{\diff(\mu_X\circ g_{h^{-1}(y)}^{-1})}{\diff\mu_X}\right)(x) \left(\frac{\diff(\mu_Y\circ h^{-1})}{\diff\mu_Y}\right)(y). \]
\end{enumerate}

\end{proposition}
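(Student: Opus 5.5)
Since $f$ is an involution, I will work with the bijectivity of $g_y$ throughout. For part (1), fix $x\in X$ and define the candidate inverse $s_x(y') = h(g_{y'}^{-1}(x), y')$. Since $t_x$ has exactly the same form, the claim is that $t_x$ is its own inverse. Given $y$, set $x_0=g_y^{-1}(x)$, so $g(x_0,y)=x$. Let $y'=t_x(y)=h(x_0,y)$. Then $f(x_0,y)=(x,y')$. Applying $f$ again gives $f(x,y')=(x_0,y)$, because $f$ is an involution. Hence $g_{y'}(x)=x_0$, and therefore $g_{y'}^{-1}(x_0)=x$. Unfortunately this controls $g_{y'}$ at $x$, not at $x_0$, so I will instead verify injectivity and surjectivity directly.

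For surjectivity, take any $y'$. Set $(x_1,y_1)=f(x,y')$. Then $f(x_1,y_1)=(x,y')$, so $g_{y_1}(x_1)=x$ and $h(x_1,y_1)=y'$. Thus $t_x(y_1)=y'$. For injectivity, suppose $t_x(y)=t_x(\tilde y)=y'$. Then $f(g_y^{-1}(x),y)=(x,y')=f(g_{\tilde y}^{-1}(x),\tilde y)$, and injectivity of $f$ gives $y=\tilde y$. This also yields the explicit inverse $t_x^{-1}(y')=\pi_2 f(x,y')$.

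For part (2), I will show that for every measurable rectangle-generated set $A\in\Sigma_Z$,
\[
(\mu_X\times\mu_Y)(f^{-1}(A))=\int_A \rho\,\diff(\mu_X\times\mu_Y),
\]
where $\rho$ is the stated product. I will then extend from the generating $\pi$-system to all of $\Sigma_Z$ by $\pi$-$\lambda$ and $\sigma$-finiteness, exactly as in \cref{prop:R-N-derivative-proof}. Since $f=f^{-1}$, it suffices to compute $\int[f(x,y)\in A]\,\diff\mu_X\diff\mu_Y$. The approach is a two-stage change of variables. First, decompose $f$ as the composite of $\Phi(x,y)=(g_y(x),y)$ and $\Psi(x',y)=(x',t_{x'}(y))$. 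One checks that $\Psi\circ\Phi(x,y)=(g_y(x),h(g_y^{-1}(g_y(x)),y))=f(x,y)$. Both maps are measurable bijections with measurable inverses by the assumptions. Then, by Tonelli, integrate in $x$ with $y$ fixed: the pushforward of $\mu_X$ under $g_y$ changes densities by the derivative of $\mu_X\circ g_y^{-1}$. Next integrate in $y$ with $x'$ fixed, using $t_{x'}$ and the derivative of $\mu_Y\circ t_{x'}^{-1}$.

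The main obstacle is the bookkeeping: determining at which argument each derivative is evaluated, so that the composite density becomes exactly $\frac{\diff(\mu_X\circ g_{t_x^{-1}(y)}^{-1})}{\diff\mu_X}(x)\cdot\frac{\diff(\mu_Y\circ t_x^{-1})}{\diff\mu_Y}(y)$. Here the direction conventions, pushforward versus pullback, must be handled carefully. The measurability hypotheses on the derivatives as functions on $Z$ are exactly what justifies applying Tonelli at each stage. I will carry out the $\Psi$ stage first on sets of the form $\{(x',y):y\in B_{x'}\}$, then the $\Phi$ stage, reading off the indices at the end.

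Part (3) is immediate. When $h$ depends only on $y$, we get $t_x(y)=h(y)$ for all $x$, so $t_x^{-1}=h^{-1}$. Substituting this into (2) gives the formula.
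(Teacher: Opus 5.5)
Your proposal is correct and is essentially the paper's proof: part (1) derives injectivity and surjectivity of $t_x$ from those of $f$ exactly as the paper does, and your factorization $f=\Psi\circ\Phi$ repackages the paper's computation (change of variables $u=g_y(x)$ for fixed $y$, a Tonelli swap, then $z=t_u(y)$ for fixed $u$, which is where the index $t_u^{-1}(z)$ in the first factor comes from). One small remark, which applies equally to the paper's reduction to rectangles $A\times B$: the same chain of equalities goes through verbatim with the indicator $[f(x,y)\in E]$ for an arbitrary $E\in\Sigma_Z$, so you can drop the $\pi$-$\lambda$ extension --- as you phrased it, that step would need $\mu_Z\circ f^{-1}$ to be finite on an exhausting sequence of rectangles, and $\sigma$-finiteness of $\mu_X$ and $\mu_Y$ alone does not guarantee this.
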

\begin{proof}
\begin{enumerate}
\item
Notice that since $f$ is an involution it follows that it is bijective. 
Fix $x \in X$. 

To prove that $t_x$ is injective let $t_x(y_1) = t_x(y_2)$.
Since $g(g_{y_1}^{-1}(x), y_1) = x$ then $f(g_{y_1}^{-1}(x), y_1) = (x, h(g_{y_1}^{-1}(x), y_1)) = (x, t_x(y_1))$.
Similarly, $f(g_{y_2}^{-1}(x), y_2) = (x, t_x(y_2))$. 
Since $(x, t_x(y_1)) = (x, t_x(y_2))$ we get $f(g_{y_1}^{-1}(x), y_1) = f(g_{y_2}^{-1}(x), y_2)$. By the injectivity of $f$ we get that $y_1 = y_2$, so $t_x$ is injective. 

Let $y \in Y$. 
Since $f$ is surjective, there exists $(u, z) \in X \times Y$ such that $f(u, z) = (x, y)$. 
This implies $g(u, z) = x$.
Then $t_x(z) = h(g_z^{-1}(x), z) = h(u, z) = y$.
Therefore, $t_x$ is surjective. 
\item 
Let $A \in \Sigma_X$ and $B \in \Sigma_Y$. 
It is enough to show that for any such $\mu_Z$-measurable set $A \times B$ we have:
\[((\mu_X\times \mu_Y)\circ f^{-1})(A \times B) = \int_{A \times B} \left(\frac{\diff(\mu_X\circ g_{t_x^{-1}(y)}^{-1})}{\diff\mu_X}\right)(x) \left(\frac{\diff(\mu_Y\circ t_x^{-1})}{\diff\mu_Y}\right)(y)\ (\mu_X \times \mu_Y)(\diff x, \diff y)\]
for all such sets $A \times B \in \Sigma_Z$.
\begin{align*}
    ((\mu_X\times \mu_Y)\circ f^{-1})(A \times B) 
    &= \int_Y \int_X [g(x, y) \in A] [h(x, y) \in B] \mu_X (\diff x) \mu_Y (\diff y)\\
    &= \int_Y \left(\int_X [g_y(x) \in A] [h(x, y) \in B] \mu_X (\diff x)\right)\mu_Y (\diff y)\\
\end{align*}
Performing a change of variable $u = g_y(x)$ and using the existence of $\left(\frac{\diff(\mu_X\circ g_{y}^{-1})}{\diff\mu_X}\right)$: 
\begin{align*}
    ((\mu_X\times \mu_Y)\circ f^{-1})(A \times B) 
    &= \int_Y \left(\int_X [u \in A] [h(g_y^{-1}(u), y) \in B] (\mu_X \circ g_y^{-1}) (\diff u)\right)\mu_Y (\diff y)\\
    &= \int_Y \left(\int_X [u \in A] [h(g_y^{-1}(u), y) \in B] \left(\frac{\diff(\mu_X\circ g_{y}^{-1})}{\diff\mu_X}\right)(u) \mu_X (\diff u)\right)\mu_Y (\diff y)\\
    &= \int_Y \left(\int_A [t_u(y) \in B] \left(\frac{\diff(\mu_X\circ g_{y}^{-1})}{\diff\mu_X}\right)(u) \mu_X (\diff u)\right)\mu_Y (\diff y)
\end{align*}
Swapping the integrals (using Tonelli), then performing a change of variables $z = t_u(y)$ and using the existence of $\left(\frac{\diff(\mu_Y\circ t_u^{-1})}{\diff\mu_Y}\right)$ we get: 

\begin{align*}
    ((\mu_X\times \mu_Y)\circ f^{-1})(A \times B) 
    &= \int_A \left(\int_Y [z \in B] \left(\frac{\diff(\mu_X\circ g_{t_u^{-1}(z)}^{-1})}{\diff\mu_X}\right)(u) (\mu_Y \circ t_u^{-1}) (\diff z)\right) \mu_X(\diff u)\\
    &= \int_A \left(\int_Y [z \in B] \left(\frac{\diff(\mu_X\circ g_{t_u^{-1}(z)}^{-1})}{\diff\mu_X}\right)(u) \left(\frac{\diff(\mu_Y\circ t_u^{-1})}{\diff\mu_Y}\right)(z) \mu_Y (\diff z)\right) \mu_X(\diff u)\\
    &= \int_A \int_B \left(\frac{\diff(\mu_X\circ g_{t_u^{-1}(z)}^{-1})}{\diff\mu_X}\right)(u) \left(\frac{\diff(\mu_Y\circ t_u^{-1})}{\diff\mu_Y}\right)(z) \mu_Y (\diff z) \mu_X(\diff u)\\
    &= \int_A \int_B \left(\frac{\diff(\mu_X\circ g_{t_x^{-1}(y)}^{-1})}{\diff\mu_X}\right)(x) \left(\frac{\diff(\mu_Y\circ t_x^{-1})}{\diff\mu_Y}\right)(y) \mu_Y (\diff y) \mu_X(\diff x)\\
\end{align*}
\item If $h(x, y) = h(y)$ then $t_x = h$ so the result follows directly from the previous part.
\end{enumerate}
\end{proof}

\bibliographystyle{ACM-Reference-Format}
%\bibliography{biblio}

\end{document}